\newcommand{\bx}{\ensuremath{\boldsymbol{x}}}
\newcommand{\by}{\ensuremath{\boldsymbol{y}}}
\newcommand{\comment}[1]{}
\newcommand{\veps}{\varepsilon}
\newcommand{\bbN}{\mathbb{N}}
\newcommand{\bbR}{\mathbb{R}}
\newcommand{\bbZ}{\mathbb{Z}}
\newcommand{\diag}{\operatorname{diag}}
\renewcommand{\binom}[2]{C^{#2}_{#1}}
\newcommand{\vect}{\operatorname{vec}}
\newcommand{\lint}[1]{\lfloor #1 \rfloor}
\newcommand{\uint}[1]{\lceil #1 \rceil}
\definecolor{grey}{rgb}{0.75, 0.75, 0.75}
\definecolor{orange}{rgb}{1.0, 0.5, 0.5}
\definecolor{brown}{rgb}{0.5, 0.25, 0.0}
\definecolor{pink}{rgb}{1.0, 0.5, 0.5}
 \newtheorem{lemma}{Lemma}
 \newtheorem{proposition}{Proposition}
 \newtheorem{theorem}{Theorem}
 \newtheorem{definition}{Definition}
 \newtheorem{remark}{Remark}
\newtheorem{corollary}{Corollary}[lemma]
\numberwithin{equation}{section}
\DeclareMathOperator{\nnpar}{\triangledown}
\DeclareMathOperator{\nntpone}{{\otimes_1}}
\date{\today}
\begin{document}

\begin{frontmatter}
\title
{PowerNet: Efficient Representations of Polynomials and Smooth
  Functions by Deep Neural Networks with Rectified Power Units}

%

\author[addr2,addr1]{Bo Li\fnref{firstauthor}}
\author[addr2,addr1]{Shanshan Tang\fnref{firstauthor}}
\fntext[firstauthor]{Contributed equally. Author list is alphabetical.}
\author[addr1,addr2,addr3]{Haijun Yu\corref{cor1}}
\cortext[cor1]{Corresponding author.}
\address[addr1]{LSEC, Institute of Computational Mathematics and Scientific/Engineering Computing,\\
	Academy of Mathematics and Systems Science, Beijing 100190, China}
\address[addr2]{School of Mathematical Sciences, University of Chinese Academy of Sciences, Beijing 100049, China}
\address[addr3]{National Center for Mathematics and Interdisciplinary Sciences, Chinese Academy of Sciences, Beijing 100190, China}


\begin{abstract}
  Deep neural network with rectified linear units (ReLU) is getting more
  and more popular recently. However, the derivatives of the function
  represented by a ReLU network are not continuous, which limit the
  usage of ReLU network to situations only when smoothness is not
  required.  In this paper, we construct deep neural networks with
  rectified power units (RePU), which can give better approximations for
  smooth functions. Optimal algorithms are proposed to explicitly build
  neural networks with sparsely connected RePUs, which we call
  PowerNets, to represent polynomials with no approximation error. For
  general smooth functions, we first project the function to their
  polynomial approximations, then use the proposed algorithms to
  construct corresponding PowerNets. Thus, the error of best polynomial
  approximation provides an upper bound of the best RePU network
  approximation error. For smooth functions in higher dimensional
  Sobolev spaces, we use fast spectral transforms for tensor-product
  grid and sparse grid discretization to get polynomial
  approximations. Our constructive algorithms show clearly a close
  connection between spectral methods and deep neural networks: a
  PowerNet with $n$ layers can exactly represent polynomials up to
  degree $s^n$, where $s$ is the power of RePUs. The proposed
  PowerNets have potential applications in the situations where
  high-accuracy is desired or smoothness is required.
\end{abstract}


 \begin{keyword}
{deep neural network, rectified linear unit, rectified power
  unit, sparse grid, PowerNet}
\end{keyword}

\end{frontmatter}


\section{Introduction}
Artificial neural network (ANN) has been a hot research topic for
several decades. Deep neural network (DNN), a special class of ANN with
multiple hidden layers, is getting more and more popular recently.
Since 2006, when efficient training methods were introduced by Hinton et
al \cite{hinton_fast_2006}, DNNs have brought significant improvements
in several challenging problems including image classification, speech
recognition, computational chemistry and numerical solutions of
high-dimensional partial differential equations, see e.g.
\cite{hinton_deep_2012, lecun_deep_2015, krizhevsky_imagenet_2017,
  han_solving_2017, zhang_deep_2018}, and references therein.

The success of ANNs rely on the fact that they have good representation power. 
Actually, the universal approximation property of neural networks is well-known:
neural networks with one hidden layer of continuous/monotonic sigmoid
activation functions are dense in continuous function space
$C([0, 1]^d)$ and $L^1([0, 1]^d)$, see
e.g. \cite{cybenko_approximation_1989, funahashi_approximate_1989,
  hornik_multilayer_1989} for different proofs in different settings.
Actually, for neural network with non-polynomial $C^\infty$ activation
functions, the upper bound of approximation error is of spectral type
even using only one-hidden layer, i.e.  error rate $\veps=n^{-k/d}$ can
be obtained theoretically for approximation functions in Sobolev space
$W^k([-1, 1]^d)$, where $d$ is the number of dimensions, $n$ is the
number of hidden nodes in the neural
network\cite{mhaskar_neural_1996}. 
It is believed that one
of the basic reasons behind the success of DNNs is the fact that deep
neural networks have broader scopes of representation than shallow
ones. Recently, several works have demonstrated or proved this in
different settings.  For example, by using the composition function
argument, Poggio et al \cite{poggio_why_2017} showed that deep networks
can avoid the curse of dimensionality for an important class of problems
corresponding to compositional functions. In the general function
approximation aspect, it has been proved by Yarotsky
\cite{yarotsky_error_2017} that DNNs using rectified linear units
(abbr. ReLU, a non-smooth activation function defined as
$\sigma_{1}(x):=\max\{0, x\}$) need at most
$\mathcal{O}(\veps^{\frac{d}{k}}(\log|\veps|+1))$ units and nonzero
weights to approximation functions in Sobolev space
$W^{k, \infty}([-1, 1]^d)$ within $\veps$ error.  This is similar to the
results of shallow networks with one hidden layer of $C^\infty$
activation units, but only optimal up to a $\mathcal{O}(\log|\veps|)$
factor.  Similar results for approximating functions in
$W^{k, p}([-1, 1]^d)$ with $p<\infty$ using ReLU DNNs are given by
Petersen and Voigtlaender\cite{petersen_optimal_2018}.  The
significance of the works by Yarotsky \cite{yarotsky_error_2017} and
Peterson and Voigtlaender \cite{petersen_optimal_2018} is that by using
a very simple rectified nonlinearity, DNNs can obtain high order
approximation property. Shallow networks do not hold such a good
property. Other works show ReLU DNNs have high-order approximation
property include the work by E and Wang\cite{e_exponential_2018} and
the recent work by Opschoor et al.\cite{opschoor_deep_2019}, the latter one relates
ReLU DNNs to high-order finite element methods.

A basic fact used in the error estimate given in
\cite{yarotsky_error_2017} and \cite{petersen_optimal_2018} is that
$x^2, xy$ can be approximated by a ReLU network with
$\mathcal{O}(\log|\veps|)$ layers. To remove this approximation error
and the extra factor $\mathcal{O}(\log|\veps|)$ in the size of neural
networks, we proposed to use rectified power units (RePU) to construct
exact neural network representations of polynomials
\cite{li_better_2019}. The RePU function is defined as
\begin{equation}
  \label{eq:RePU}
  \sigma_s(x) = \begin{cases} 
	x^s, & x \ge 0, \\
	0,   & x<0, 
  \end{cases}
\end{equation}
where $s$ is a non-negative integer. When $s=1$, we have the Heaviside
step function; when $s=1$, we have the commonly used ReLU function
$\sigma_1$.  We call $\sigma_2$, $\sigma_3$ rectified quadratic unit
(ReQU) and rectified cubic unit (ReCU) for $s=2, 3$, respectively.  Note
that, some pioneering works have been done by Mhaskar and his coworkers (see e.g. \cite{mhaskar_approximation_1993},
\cite{chui_neural_1994a}) to give an theoretical upper bound of DNN function
approximations by converting splines into RePU DNNs. However, for very smooth functions, their
constructions of neural network are not optimal and meanwhile are not numerically
stable.  The error bound obtained is quasi-optimal due to an extra $\log(k)$
factor, where $k$ is related to the smoothness of the underlying
functions. The extra $\log(k)$ factor is removed in our earlier
work\cite{li_better_2019} by introducing some explicit optimal and
stable constructions of ReQU networks to exactly represent
polynomials. In this paper, we extend the results to deep networks using
general RePUs with $s\ge 2$.

Comparing with other two constructive approaches (The Qin Jiushao
algorithm and the first-composition-then-combination method used in
\cite{mhaskar_approximation_1993}, \cite{chui_neural_1994a}, etc), our
constructions of RePU neural networks to represent polynomials are
optimal in the numbers of network layers and hidden nodes. To
approximate general smooth functions, we first approximate the function
by its best polynomial approximation, then convert the polynomial
approximation into a RePU network with optimal size. The conclusion of
algebraic convergence for $W^{k, 2}$ functions and exponential
convergence for analytic functions then follows straightforward.  For
multi-dimensional problems, we use the concept of sparse grid to improve
the error estimate of neural networks and lessen the curse of
dimensionality.

The main advantage of the ReLU function is that ReLU DNNs are relatively
easier to train than DNNs using other analytic sigmoidal activation
units in traditional applications. The latter ones have well-known
severe gradient vanishing phenomenon.  However, ReLU networks have some
limitations. E.g., due to the fact that the derivatives of a ReLU
network function are not continuous, ReLU networks are hard to train
when the loss function contains derivatives of the network, thus
functions with higher-order smoothness are desired. Such an example is
the deep Ritz method solving partial differential equations
(PDEs) recently developed by E and Yu\cite{e_deep_2018}, where ReQU networks are
used.

The remain part of this paper is organized as follows. In Section 2 we
first show how to realize univariate polynomials and approximate smooth
functions using RePU networks. Then we construct RePU network
realization of multivariate polynomials and general multivariate smooth functions in 
Section 3, with extensions to high-dimensional functions in sparse space given in Subsection 3.3.  
A short summary is given in Section 4.

\section{Approximation of univariate smooth functions}

We first introduce notations. Denote by $\bbN$ the set of all positive
integer, $\bbN_{0} := \{0\}\cup \bbN$,
$\bbZ_n := \{ 0, 1, \ldots, n-1 \}$ for $n \in \bbN$.
\begin{definition}
  We define a neural network $\Phi$ with input of dimension
  $d \in \bbN$, number of layer $L \in \bbN$ as a matrix-vector sequence
  \begin{equation} \label{eq:NNdef}
    \Phi =\big( (A_1, b_1), \cdots, (A_L, b_L) \big), 
  \end{equation}
  where $A_k, k=1, \ldots, L$ are $N_{k} \times N_{k-1}$ matrices,
  $b_{k}\in \bbR^{N_{k}\times 1}$ are vectors called bias, $N_{0}=d$ and
  $N_1, \cdots, N_{L}\in \bbN$.
\end{definition}

\begin{definition}
  If $\Phi$ is a neural network defined by \eqref{eq:NNdef}, and
  $\rho:\bbR \to \bbR$ is an arbitrary activation function, then define
  the neural network function
  \begin{align} \label{eq:NNfun}
    R_\rho(\Phi): \bbR^{d} \rightarrow \bbR^{N_L}, 
    \qquad R_\rho(\Phi)(\bx) = \bx_{L},
  \end{align}
  where $\bx_L = R_\rho (\Phi)(\bx)$ is defined as
  \begin{equation}
  \label{eq:NNfdef}
  \begin{cases}
	\bx_{0} := \bx,                         &                      \\
	\bx_{k} := \rho( A_k \bx_{k-1} + b_k ), & k=1, 2, \ldots, L-1, \\
	\bx_{L} := A_L \bx_{L-1} + b_L.  &
  \end{cases}
\end{equation}
Here we denote vector variables $\bx_k \in \bbR^{N_k}$ by bold letters
and use the definition
\begin{equation*}
  \rho(\by) 
  := \left( \rho(y^1), \cdots, \rho(y^m) \right)^T, 
  \quad 
  \forall\ \by = (y^1, \cdots, y^m)^T \in \bbR^m.
\end{equation*}
\end{definition}
We use three quantities to measure the complexity of a neural network
$\Phi$: number of layers $L(\Phi)$, number of nodes(i.e. activation
units) $N(\Phi)$, and number of nonzero weights $M(\Phi)$, which are
$L$, $\sum_{k=1}^{L-1} N_k(\Phi)$ and $\sum_{k=1}^L M_k(\Phi)$,
respectively. For the neural network defined in \eqref{eq:NNdef},
$N_k(\Phi) := N_k$, $k=0,\ldots, L$ are the dimensions of $\bm{x}_k$,
and $M_k(\Phi):=\| A_k \|_0 + \| b_k \|_0$ (for $k=1,\ldots, L$) is the
number of nonzero weights in the $k$-th affine transformation. Note
that, in this paper, we define $L$ as the layers of affine
transformations defined in \eqref{eq:NNfdef}.  We also call $\bx_0$ the
input layer, $\bx_L$ the output layer, and $\bx_k$, $k=1, \ldots, L-1$
hidden layers.  So, there are $L-1$ hidden layers, which is the number
of layers of activation units.

\begin{definition}
  We define $\Pi_{d, N, L}^m$ as the collection of all neural networks
  of input dimension $d$, output dimension $m$ with at most $N$ neurons
  arranged in $L$ layers, i.e.
  \begin{align}
  	\label{eq:NNSet}
  	\Pi^m_{d, N, L} 
  	&:= \left\{\, 
  		\Phi =\big( (A_1, b_1), \cdots, (A_L, b_L) \big)\,
  		\Big|\, 
  		\substack{
    	A_k\in \bbR^{N_k \times N_{k-1}},\ b_k\in \bbR^{N_k\times 1},\ \text{for}\ k=1, \ldots, L; \\
    	N_0=d,\ N_L=m,\ \sum_{k=1}^{L-1}N_k=N.\hfill
  		}
    	\,\right\}
  \end{align}
  For given activation function $\rho$, we further define
  \begin{align}
  	\label{eq:fNNSet}
  	\Pi^m_{d, N, L, \rho} 
    &:= \left\{\, R_\rho(\Phi) \mid \Phi \in \Pi^m_{d, N, L} \,\right\}.
  \end{align}  
\end{definition}

To construct complex networks from simple ones, We first introduce several network compositions.
\begin{definition}
  \label{def:concat}
  Let $L_1, L_2\in \bbN$ and
  $\Phi^1 =\big( (A^1_1, b^1_1), \ldots, (A^1_{L_1}, b^1_{L_1}) \big)$,
  $\Phi^2 =\big( (A^2_1, b^2_1), \ldots, (A^2_{L_2}, b^2_{L_2}) \big)$
  be two neural networks such that the input layer of $\Phi^1$ has the
  same dimension as the output layer of $\Phi^2$. 
  We define the the concatenation of $\Phi^1$ and $\Phi^2$ as
  \begin{align}
    \label{eq:concat}
    \Phi^2 \circ \Phi^1 
    &:= \Big( 
      (A^1_1,b^1_1), \ldots, (A^1_{L_1-1}, b^1_{L_1-1}), 
      (A^2_1 A^1_{L_1}, A^2_1 b^1_{L_1} + b^2_1), 
      (A^2_2, b^2_2), \ldots, (A^2_{L_2}, b^2_{L_2})
      \Big).
  \end{align}
\end{definition}
By the definition, we have
\begin{align*}
  & R_{\sigma_s}(\Phi^2 \circ \Phi^1) 
	= R_{\sigma_s}(\Phi^2) \Big( R_{\sigma_s}(\Phi^1) \Big)
	=: R_{\sigma_s}(\Phi^2) \circ R_{\sigma_s}(\Phi^1), \\
  & L(\Phi^{2}\circ\Phi^1) 
  	= L(\Phi^1) \:+\: L(\Phi^{2})-1, 
    \qquad\quad 
    N(\Phi^{2}\circ\Phi^1) 
    = N(\Phi^1) + N(\Phi^2).
\end{align*}

\begin{definition} \label{def:parallel} Let
  $\Phi^1 =\big( (A^1_1, b^1_1), \ldots, (A^1_L, b^1_L) \big)$,
  $\Phi^2 =\big( (A^2_1, b^2_1), \ldots, (A^2_L, b^2_L) \big)$ be two
  neural networks both with $L\in \bbN$ layers. Suppose the input
  dimensions of the two networks are $d_1, d_2$ respectively.  We define
  the parallelization of $\Phi^1$ and $\Phi^2$ as
  \begin{align}
    \label{eq:parallel}
    \Phi^1 \nnpar \Phi^2
    &:= \big(  (\tilde{A}_1, \tilde{b}_1), \cdots, (\tilde{A}_L, \tilde{b}_L) \big), 
  \end{align}
  where
  \begin{align*}
	\tilde{A}_1 
	&= \begin{bmatrix}
      \bar{A}^1_1 \\
      \bar{A}^2_1
    \end{bmatrix}, 
    & 
    \tilde{b}_1 
    & = \begin{bmatrix}
      b^1_1 \\
      b^2_1
    \end{bmatrix}, 
    & \quad \text{and} \quad 
    \tilde{A}_i 
    & = \begin{bmatrix}
      A^1_{i} & \bm{0}  \\
      \bm{0}  & A^2_i \\
	\end{bmatrix}, \quad \tilde{b}_i = \begin{bmatrix}
      b^1_{i} \\
      b^2_{i} \\
    \end{bmatrix}, 
    \quad \text{for}\quad 1< i \le L.
  \end{align*}
  Here $\bar{A}^i_1, i=1,2$ are formed from ${A}^i_1, i=1,2$
  correspondingly, by padding zero columns in the end to one of them
  such that they have same number of columns.  Obviously,
  $\Phi^1 \nnpar \Phi^2$ is a neural network with
  $\max\{d_1, d_2\}$-dimensional input and $L$ layers. We have the
  relationship
  \begin{align*}
    R_{\sigma_2}\big(\Phi^1 \nnpar \Phi^2\big)
    &= (R_{\sigma_2}(\Phi^1), R_{\sigma_2}(\Phi^2)), 
    && \\
    N\big(\Phi^1 \nnpar \Phi^2\big) 
    &= N(\Phi^1) + N(\Phi^2),
    &
      M\big(\Phi^1 \nnpar \Phi^2\big) 
    &= M(\Phi^1) + M(\Phi^2).
  \end{align*}
  For $\Phi^1$, $\Phi^2$ defined as above but not necessarily have same
  dimensions of input, we define the tensor product of $\Phi^1$ and
  $\Phi^2$ as
  \begin{align}
  \label{eq:tensorp}
    \Phi^1 \otimes \Phi^2 
    := \big(  (\tilde{A}_1, \tilde{b}_1), \cdots, 
    (\tilde{A}_L, \tilde{b}_L) \big), 
  \end{align}
  where
  \begin{align*}
	\tilde{A}_i = \begin{bmatrix}
      A^1_{i} & \bm{0}  \\
      \bm{0}  & A^2_i \\
    \end{bmatrix}, \quad \tilde{b}_i = \begin{bmatrix}
      b^1_{i} \\
      b^2_{i} \\
    \end{bmatrix}, \quad \text{for}\quad 1 \le i \le L.
  \end{align*}
  Obviously, $\Phi^1 \otimes \Phi^2$ is a $L$-layer neural network with
  $N_0(\Phi^1)+N_0(\Phi^2)$ dimensional input and
  $N_L(\Phi^1)+N_L(\Phi^2)$ dimensional output. We have the relationship
  \begin{align*}
    R_{\sigma_2}\big(\Phi^1 \otimes \Phi^2\big) 
    &= (R_{\sigma_2}(\Phi^1), R_{\sigma_2}(\Phi^2)), 
    & & \\
    N_k\big(\Phi^1 \otimes \Phi^2\big) 
    &= N_k(\Phi^1) + N_k(\Phi^2), \ \forall\, k=0,\ldots, L,
    &
      M_k\big(\Phi^1 \otimes \Phi^2\big) 
      &= M_k(\Phi^1) + M_k(\Phi^2)\ \forall\, k=1\ldots, L.
  \end{align*}
\end{definition}

\subsection{Basic properties of RePU networks}

Our analyses rely upon the fact: $x, x^{2}, \ldots, x^{s}$ and $xy$ can
all be realized by a one-hidden-layer $\sigma_s$ neural network with a
few number of coefficients, which is presented in the following lemma.

\begin{lemma} \label{lem:baseeq} The monomials $x^{n}, 1\le n \le s$ can
  be exactly represented by neural networks with one hidden layer of a
  finite number of $\sigma_{s}(x)\;(2\le s\in \bbN)$ activation
  nodes. More precisely:
  \begin{itemize}
    
  \item [(i)] For $s=n$, the monomial $x^{n}$ can be realized exactly
    using a $\sigma_{s}$ network having one hidden layer with two nodes
    as following,
    \begin{align}
      \label{eq:mono_s}
      x^{s} 
      &= \gamma^{T}_{0}\sigma_{s}(\alpha_{0}x), 
      &\quad
        \gamma_{0}
      &=\begin{bmatrix}
        1\\ (-1)^s\\
      \end{bmatrix}, 
      \quad
      \alpha_{0}=\begin{bmatrix}
        1\\ -1\\
      \end{bmatrix}.
    \end{align}  
    Correspondingly, the neural network is defined as
    \begin{equation}
    \label{eq:mono1_net}
    \Phi^1_{mo} 
    = \big( (\alpha_0, \bm{0}), (\gamma_0^T, 0) \big).
    \end{equation} 
    A graph representation of $\Phi^1_{mo}$ is sketched in
    Fig. \ref{fig:nn_mo_1}.
    
  \item [(ii)] For $1 \le n \le s$, the monomial $x^{n}$ can be realized
    exactly using a $\sigma_{s}$ network having only one hidden layer
    with no more than $2s$ nodes as
    \begin{align}
      \label{eq:mono_t}
      x^{n} 
      &= \gamma^{T}_{1, n}\sigma_s(\alpha_1 x + \beta_1) + \lambda_{0, n}, \quad n = 1, \ldots, s-1, 
    \end{align}
    where
    \begin{align}
      \label{eq:a1b1g1}
      \alpha_{1}
      &=\begin{bmatrix}
        \alpha_0 \\ 
        \vdots  \\
        \alpha_0
      \end{bmatrix} \in \bbR^{2s\times 1},
      & \beta_{1}
      &= \begin{bmatrix}
        b_1 \alpha_0 \\
        \vdots \\
        b_s \alpha_0
      \end{bmatrix} \in \bbR^{2s\times 1}
      & \gamma_{1, n}
      &= \begin{bmatrix}
        \lambda_{1, n} \gamma_0 \\
        \vdots \\
        \lambda_{s, n}\gamma_{0}  \\
      \end{bmatrix} \in \bbR^{2s\times 1}, 
    \end{align}
    Here $b_1, \ldots, b_s$ are distinct points in $\bbR$. We
    suggest to use \eqref{eq:bs2}-\eqref{eq:bs6} for $s\le 6$ and
    \eqref{eq:chebynodes} for $s>6$.
    $\lambda_{0, n}, \lambda_{1, n}, \ldots, \lambda_{s, n}$ are
    calculated by \eqref{eq:Lambda}.  The neural network is defined as
    \begin{equation}
      \label{eq:mono2n_net}
      \Phi^2_{mo, n} 
      = \big( (\alpha_1, \beta_1), 
      (\gamma_{1,n}^T, \lambda_{0,n}) \big).
    \end{equation}
    A graph representation of $\Phi^2_{mo,n}$ is sketched in
    Fig. \ref{fig:nn_mo_2n}.  Note that, when $n=0$, we have a trivial
    realization: $\alpha_1 = \beta_1 = \gamma_{1, 0} =0$,
    $\gamma_{0, 0} = 1$.  When $n=s$, the implementation in (i) is more
    efficient. When $n=1$, we obtain the network realization of identity
    function $\Phi_\text{idx} := \Phi^2_{mo, 1}$.
  \end{itemize}
\end{lemma}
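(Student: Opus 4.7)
\medskip
\noindent\textbf{Proof proposal.}

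For part (i) the plan is a direct case-by-case verification. Using the definition of $\sigma_s$ in \eqref{eq:RePU}, I would observe that $\sigma_s(x)=x^s$ when $x\ge 0$ and $\sigma_s(-x)=(-x)^s=(-1)^s x^s$ when $x\le 0$, and $0$ otherwise. The right-hand side $\gamma_0^T\sigma_s(\alpha_0 x) = \sigma_s(x) + (-1)^s\sigma_s(-x)$ therefore equals $x^s$ in both the case $x\ge 0$ and the case $x<0$, which is the desired identity. The network $\Phi^1_{mo}$ clearly has one hidden layer with exactly two nodes, matching the count claimed.

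For part (ii) my strategy is to reduce the statement to a linear algebra problem by using the building block from part (i). For each shift $b_i$ the two rows of $\alpha_1 x+\beta_1$ that come from the $i$-th copy of $\alpha_0$ are $x+b_i$ and $-(x+b_i)$, and the corresponding pair of output weights $\lambda_{i,n}\gamma_0$ produces the combination $\lambda_{i,n}\bigl(\sigma_s(x+b_i)+(-1)^s\sigma_s(-(x+b_i))\bigr)$, which by part (i) equals $\lambda_{i,n}(x+b_i)^s$. Summing over $i=1,\dots,s$ and adding the output bias $\lambda_{0,n}$, the network function realized by $\Phi^2_{mo,n}$ collapses to
\begin{equation*}
  R_{\sigma_s}(\Phi^2_{mo,n})(x)
  \;=\; \lambda_{0,n}+\sum_{i=1}^{s}\lambda_{i,n}(x+b_i)^s.
\end{equation*}
Thus the lemma reduces to showing that for every $n\in\{1,\dots,s-1\}$ one can choose scalars $\lambda_{0,n},\dots,\lambda_{s,n}$ so that the polynomial on the right equals $x^n$, and then identifying these scalars with the prescribed formula \eqref{eq:Lambda}.

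To establish the existence of such coefficients I would expand $(x+b_i)^s=\sum_{k=0}^{s}\binom{s}{k}b_i^{s-k}x^k$ and match the coefficient of each $x^k$ on both sides. Matching $x^s$ forces $\sum_i\lambda_{i,n}=0$; matching $x^k$ for $k=1,\dots,s-1$ forces $\binom{s}{k}\sum_i\lambda_{i,n}b_i^{s-k}=\delta_{k,n}$; and matching the constant term then determines $\lambda_{0,n}=-\sum_i\lambda_{i,n}b_i^s$. The $s$ equations in $\lambda_{1,n},\dots,\lambda_{s,n}$ form a Vandermonde-type system in the nodes $b_1,\dots,b_s$ (after rescaling rows by the binomial factors), so the main obstacle and the reason the $b_i$ must be distinct is the invertibility of this matrix; by the classical Vandermonde determinant formula this invertibility is immediate, hence a unique solution exists and coincides with the one produced by \eqref{eq:Lambda}.

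Finally I would record the complexity bookkeeping. The construction uses $s$ shifts, each contributing two hidden nodes, giving a single hidden layer with $2s$ nodes, and the claimed dimensions of $\alpha_1,\beta_1,\gamma_{1,n}$ are evident from \eqref{eq:a1b1g1}. The three edge cases are handled separately: $n=0$ by taking all weights zero except the output bias $\lambda_{0,0}=1$; $n=s$ by falling back to the more economical two-node construction of part (i); and $n=1$ yields the identity network $\Phi_{\text{idx}}$. The concrete choices of nodes $b_1,\dots,b_s$ in \eqref{eq:bs2}--\eqref{eq:bs6} and \eqref{eq:chebynodes} only affect numerical stability (through the conditioning of the Vandermonde system) and not the existence argument, so they play no role in the proof itself.
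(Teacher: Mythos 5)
Your proposal is correct and follows essentially the same route as the paper: verify the identity $\sigma_s(x)+(-1)^s\sigma_s(-x)=x^s$ for part (i), then for part (ii) reduce the network function to $\lambda_{0,n}+\sum_{i=1}^s\lambda_{i,n}(x+b_i)^s$, expand, match coefficients, and invoke invertibility of the resulting Vandermonde system in the distinct nodes $b_1,\dots,b_s$ (the paper packages this as the block system \eqref{eq:Lambda}, whose block inverse is exactly your two-step solve for $\lambda_{1,n},\dots,\lambda_{s,n}$ followed by $\lambda_{0,n}=-\sum_i\lambda_{i,n}b_i^s$). No gaps.
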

\begin{proof}
  (1) It is easy to check that $x^{s}$ has an exact $\sigma_{s}$
  realization given by
  \begin{align}
    \rho_{s}(x) 
    :=
    \sigma_{s}(x) +(-1)^{s}\sigma_{s}(-x) 
    =
    \gamma^{T}_{0}\sigma_{s}(\alpha_{0}x).
  \end{align}
  
  (2) For the case of $1\le n \le s$, we consider the following linear
  combination
  \begin{align}
    \label{eq:und_co}
    \lambda_{0} + \sum^s_{k=1}\lambda_k\rho_s (x+b_k)
    = \lambda_0 +
    \sum^s_{k=1}\lambda_k\left(\sum^s_{j=0}C^j_{\!s} b^{s-j}_k x^j \right)
    = \lambda_{0} +
    \sum^s_{j=0}C^j_{\!s}\left(\sum^s_{k=1}\lambda_k b^{s-j}_k \right) x^j, 
  \end{align}
  where $\lambda_0, \lambda_k, b_k, k=1, 2, \ldots, s$ are parameters to
  be determined. $C^i_s, i=0, 1, \ldots, s$ are binomial coefficients.
  Identify the above expression with a polynomial of degree does not
  exceed $s$, i.e. $\sum^s_{k=0} d_k x^k$, we obtain the following
  linear system
  \begin{align}
    D_{s+1}\bm{\lambda_s}
    &:=
      \begin{bmatrix}
        1           & 1           & \cdots & 1           & 0      \\
        \vdots      & \vdots      &        & \vdots      & \vdots \\
        b^{s-i}_{1}  & b^{s-i}_{2} & \cdots & b^{s-i}_{s} & 0      \\
        \vdots      & \vdots      &        & \vdots      & \vdots \\
        b^{s-1}_{1}  & b^{s-1}_{2} & \cdots & b^{s-1}_{s} & 0      \\
        b^s_{1}     & b^s_{2}   & \cdots & b^s_{s}   & 1      \\
    \end{bmatrix}
    \begin{bmatrix}
      \lambda_{1} \\
      \vdots      \\
      \lambda_{i} \\
      \vdots      \\
      \lambda_{s} \\
      \lambda_{0}
    \end{bmatrix}
    =
    \begin{bmatrix}
      d_{s}(C^s_{\!s})^{-1} \\
      \vdots                  \\
      d_{i}(C^{i}_{\!s})^{-1} \\
      \vdots                  \\
      d_{1}(C^1_{\!s})^{-1} \\
      d_{0}(C^{0}_{\!s})^{-1}
    \end{bmatrix}, 
  \end{align}
  where the top-left $s\times s$ sub-matrix of $D_{s+1}$ is a Vandermonde
  matrix $V_{s}$, which is invertible as long as
  $b_{k}, \;k=1, 2, \ldots, s$ are distant. The choices of $b_k$ are
  discussed later in Remark \ref{rem:Vmatrix}.  Denote
  $\bm{\lambda}_s=[\lambda_1, \ldots, \lambda_s, \lambda_0]^T$,
  $\bm{b}=[b^s_1, \ldots, b^s_{s} ]^{T}$,
  $\bm{d}=[d_s, \ldots, d_{0}]^{T}$. We have
  \begin{align*}
    D_{s+1} 
    &=
    \begin{bmatrix}
      V_{s}      & \bm{0} \\
      \bm{b}^{T} & 1      \\
    \end{bmatrix}, 
    &
    D^{-1}_{s+1} 
    &=
    \begin{bmatrix}
      V^{-1}_{s}            & \bm{0} \\
      -\bm{b}^{T}V^{-1}_{s} & 1      \\
    \end{bmatrix}, 
  \end{align*}
  then
  \begin{align}
    \label{eq:Lambda}
    \bm{\lambda}_s 
    &=
    \begin{bmatrix}
      V^{-1}_{s}            & \bm{0} \\
      -\bm{b}^{T}V^{-1}_{s} & 1      \\
    \end{bmatrix}
    \diag \Big( (C^s_{\!s})^{-1}, (C^{s-1}_{\!s})^{-1}, 
    \cdots, (C^{0}_{\!s})^{-1} \Big)
    \bm{d}.
  \end{align}
  To represent $x^n\;(1\le n \le s)$, we have
  $\bm{d}=\bm{e}^{s+1}_{s-n+1}$ in \eqref{eq:Lambda}, where
  $\bm{e}^{s+1}_{k} := [\delta_{1,k}, \ldots, \delta_{s+1,k}]^T$ and
  $\delta_{i,k}$ is the Kronecker delta function.
\end{proof}

\begin{figure}[ht!]
  \begin{center}
    \includegraphics[width=0.8\textwidth]{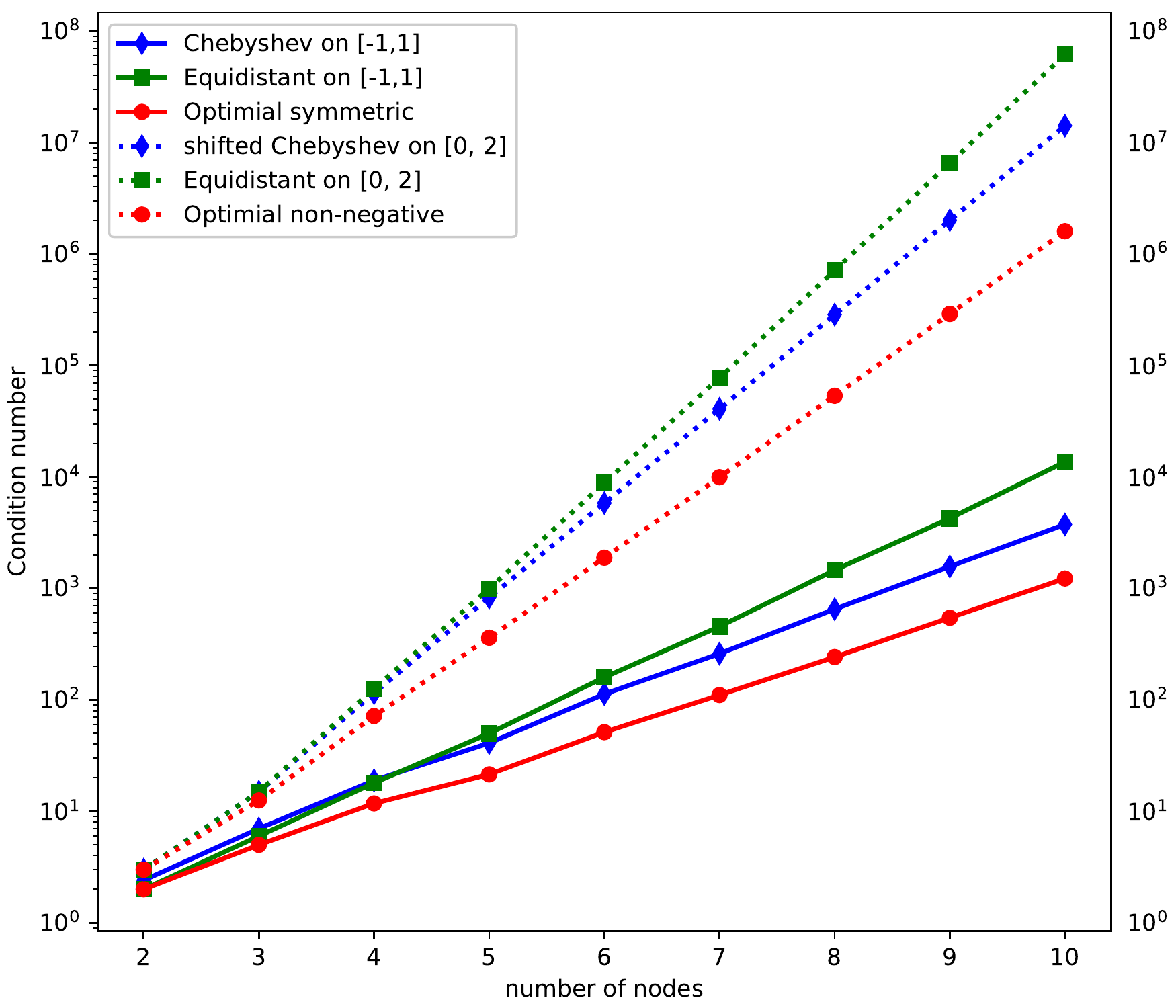}
    \caption{The growth of $l_\infty$ condition number of Vandermonde
      matrices $V_{s}$ corresponding to different sets of nodes
      $\{ b_k, \ k=1, \ldots, s\}$.  The data for optimal symmetric
      nodes and optimal non-negative nodes are from
      \cite{gautschi_optimally_2011}. }
    \label{fig:cond}
  \end{center}
\end{figure}

\begin{remark} \label{rem:Vmatrix} The inverse of Vandermonde matrix
  will inevitably be involved in the solution of (\ref{eq:Lambda}),
  which make the formula \eqref{eq:mono_t} difficult to use for large
  $s$ due to the geometrically growth of the condition number of the
  Vandermonde matrix \cite{gautschi_optimally_1975,
    beckermann_condition_2000, gautschi_optimally_2011}.  The condition
  number of the $s\times s$ Vandermonde matrices with three different
  choices of symmetric nodes are given in Figure \ref{fig:cond}. The
  three choices for symmetric nodes are Chebyshev nodes
  \begin{align}
    \label{eq:chebynodes}
    b_k 
    = 
    \cos\big( \tfrac{k-1}{s-1} \pi \big), \quad k=1, \ldots, s,
  \end{align}
  equidistant points
  \begin{align}
    b_k 
    = 1 - 2 \tfrac{k-1}{s-1}, \quad k=1, \ldots, s,
  \end{align}
  and numerically calculated optimal nodes.  The counterparts of these
  three different choices for non-negative nodes are also depicted in
  Figure \ref{fig:cond}. Most of the results are from
  \cite{gautschi_optimally_2011}.  For large $s$ the numerical procedure
  to calculate the optimal nodes may not succeed. But the growth rates
  of the $l_\infty$ condition number of Vandermonde matrices using
  Chebyshev nodes on $[-1,1]$ is close to the optimal case, so we use
  Chebyshev nodes \eqref{eq:chebynodes} for large $s$. For smaller
  values of $s$, we use numerically calculated optimal nodes, which are
  given for $2\le s \le 6$ in \cite{gautschi_optimally_1975}:
  \begin{align}
    & b_1 = -b_2 = 1, & s=2 \label{eq:bs2}\\
    & b_1 = -b_3 = \sqrt{3/2} \approx 1.2247448714,\ 
      b_2 = 0, & s=3 \label{eq:bs3}\\
    & b_1 = -b_4 \approx 1.2228992744,\
      b_2 = -b_3 \approx 0.5552395908, & s=4 \label{eq:bs4}\\
    & b_1 = -b_5 \approx 1.2001030479,\ 
      b_2 = -b_4 \approx 0.8077421768,\ 
      b_3 = 0, & s=5 \label{eq:bs5} \\
    &  b_1 = -b_6 = 1.1601101028, \
      b_2 = -b_5 = 0.9771502216, \
      b_3 = -b_4 = 0.3788765912, & s=6 \label{eq:bs6}
  \end{align}
  Note that, in some special cases, if non-negative nodes are used, the
  number of activation functions in the network construction can be
  reduced. However, due to the fact that the condition number in this
  case is larger than the case with symmetric nodes, we will not
  consider the use of all non-negative nodes in this paper.
\end{remark}

Based on Lemma \ref{lem:baseeq}, one can easily obtain following
results.

\begin{corollary} \label{corr:poly1layer} A univariate polynomial with
  degree up to $s$ can be exactly represented by neural networks with
  one hidden layer of $2s$ activation nodes.  More precisely, by
  (\ref{eq:mono_t}), we have
	\begin{align}
	\label{eq:poly1layer}
      \sum^{s}\limits_{j=0}d_{j}x^{j}
      = d_0 + 
      \sum^{s}\limits_{j=1}d_{j}\cdot
      \left(\gamma^{T}_{1, j}\sigma_{s}(\alpha_{1}x+\beta_{1}) + \lambda_{0, j}\right)
      =
      \tilde{\gamma}^{T}_{3}\sigma_{s}(\alpha_{1}x+\beta_{1}) +
      \tilde{c}_1, 
	\end{align}
	where $\tilde{\gamma}_{3}= \sum^{s}_{j=1}d_{j}\gamma_{1, j}$,
    $\tilde{c}_{1} = d_0+\sum^{s}_{j=1}d_{j}\lambda_{0, j}$.  The
    corresponding neural network is defined as
	\begin{equation}
	\label{eq:poly1_net}
	\Phi^1_{po} (\bm{d}) 
	= \big( (\alpha_1, \beta_1), (\tilde{\gamma}_{3}^T, \tilde{c}_{1})
    \big),
	\end{equation}
	where $\bm{d} = [d_s, \ldots, d_1, d_0]^T$. A graph representation
    of $\Phi^1_{po}$ is sketched in Fig. \ref{fig:nn_po_1}.
\end{corollary}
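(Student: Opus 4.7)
The plan is to read this off directly from Lemma \ref{lem:baseeq}(ii) by exploiting the fact that the input-to-hidden parameters $\alpha_1$ and $\beta_1$ constructed there do \emph{not} depend on which monomial $x^n$ is being represented; only the output weights $\gamma_{1,n}$ and the bias $\lambda_{0,n}$ do. Consequently, a single hidden layer of $2s$ $\sigma_s$-units with weight vector $\alpha_1$ and bias $\beta_1$ simultaneously serves as a feature layer from which any $x^n$ with $1\le n\le s$ can be extracted by the appropriate choice of output weights.

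First, I would invoke Lemma \ref{lem:baseeq}(ii) to write, for each $1\le j\le s$,
\begin{equation*}
  x^{j} = \gamma^{T}_{1, j}\,\sigma_{s}(\alpha_{1}x+\beta_{1}) + \lambda_{0, j},
\end{equation*}
where the vector $\gamma_{1,j}$ and scalar $\lambda_{0,j}$ are determined by \eqref{eq:a1b1g1}--\eqref{eq:Lambda} with $\bm d = \bm e^{s+1}_{s-j+1}$. Next, multiplying by $d_j$ and summing over $j=1,\ldots,s$, adding the constant term $d_0$, and using linearity of $\sigma_s(\alpha_1 x + \beta_1)$ in the external coefficients, I obtain
\begin{equation*}
  \sum_{j=0}^{s} d_j x^{j} = \Bigl(\sum_{j=1}^{s} d_j \gamma_{1, j}\Bigr)^{T}\!\sigma_{s}(\alpha_{1}x+\beta_{1}) + \Bigl(d_0+\sum_{j=1}^{s} d_j\lambda_{0, j}\Bigr),
\end{equation*}
which is exactly \eqref{eq:poly1layer} with $\tilde{\gamma}_3 = \sum_{j=1}^{s} d_j \gamma_{1,j}$ and $\tilde{c}_1 = d_0 + \sum_{j=1}^{s} d_j \lambda_{0,j}$. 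The corresponding network $\Phi^1_{po}(\bm d) = \big((\alpha_1,\beta_1),(\tilde{\gamma}_3^T,\tilde{c}_1)\big)$ then satisfies $R_{\sigma_s}(\Phi^1_{po}(\bm d))(x) = \sum_{j=0}^{s} d_j x^j$ by construction.

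Finally, I would note that the complexity claims are immediate from the shape of $\Phi^1_{po}(\bm d)$: there is exactly one hidden layer, it contains $2s$ activation units (the dimension of $\alpha_1$ and $\beta_1$ as specified in \eqref{eq:a1b1g1}), and the output is a single affine combination, so no approximation error is incurred. There is no real obstacle here; the only thing worth pointing out is that the proof relies critically on the shared structure of $(\alpha_1,\beta_1)$ across different values of $n$ in Lemma \ref{lem:baseeq}(ii)—without this sharing, a naive superposition of $s$ separate realizations would give $2s^2$ hidden units rather than $2s$, so optimality of the node count is a consequence of this structural feature of the construction.
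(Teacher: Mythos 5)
Your proposal is correct and follows exactly the paper's own (largely implicit) argument: apply Lemma \ref{lem:baseeq}(ii) for each $x^j$, $1\le j\le s$, and use the fact that $(\alpha_1,\beta_1)$ are shared across all monomials so that the linear combination collapses into a single output layer with $\tilde{\gamma}_3=\sum_{j=1}^s d_j\gamma_{1,j}$ and $\tilde{c}_1=d_0+\sum_{j=1}^s d_j\lambda_{0,j}$. Your closing remark about the shared first-layer parameters being what keeps the hidden-node count at $2s$ rather than $\mathcal{O}(s^2)$ is a correct and apt observation about why the construction works.
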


\begin{figure}
  \centering
  \begin{subfigure}[b]{0.45\textwidth}
    \centering \includegraphics[width=\linewidth]{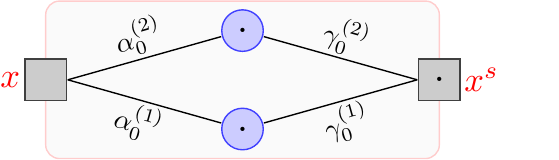} 
    \caption{The $\Phi^1_{mo}$ defined in \eqref{eq:mono1_net}.}
    \label{fig:nn_mo_1}
  \end{subfigure}\hfill
  \begin{subfigure}[b]{0.45\textwidth}
    \centering \includegraphics[width=\linewidth]{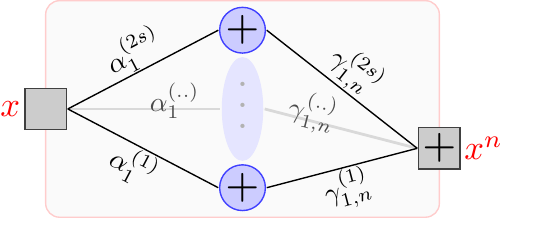}
    \caption{The $\Phi^2_{mo,n}$ defined in \eqref{eq:mono2n_net}}
    \label{fig:nn_mo_2n}
  \end{subfigure}
	\\ 
	\medskip

  \begin{subfigure}[b]{0.45\textwidth}
	\centering \includegraphics[width=\linewidth]{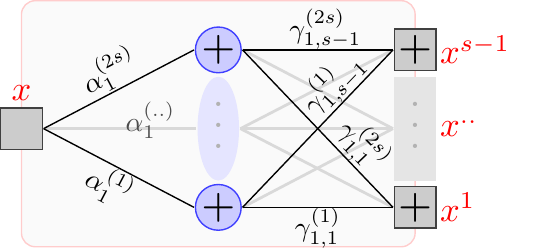}
	\caption{The $\Phi_{c}$ used in \eqref{eq:phi_b1}.}
	\label{fig:nn_mo_c}
  \end{subfigure}\hfill
  \begin{subfigure}[b]{0.45\textwidth}
    \centering
    \includegraphics[width=\linewidth]{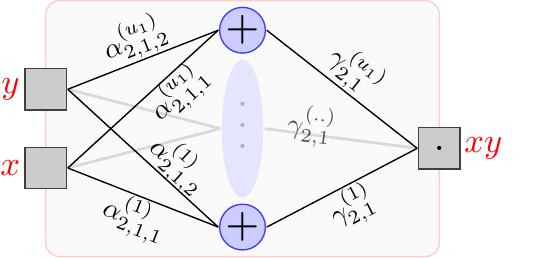}
    \caption{The $\Phi^1_{bm,1}$ defined in \eqref{eq:mono3n_net}}
    \label{fig:nn_bm_11}
  \end{subfigure}
	\\
	\medskip
	
  \begin{subfigure}[b]{0.45\textwidth}
	\centering
	\includegraphics[width=\linewidth]{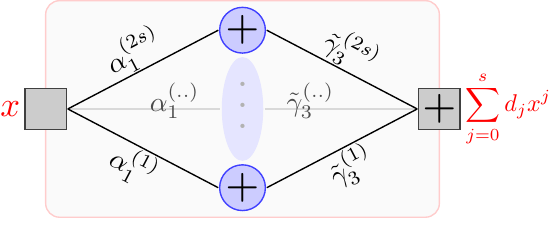}
	\caption{The $\Phi^1_{po}(\bm{d})$ defined in \eqref{eq:poly1_net}}
	\label{fig:nn_po_1}
  \end{subfigure}\hfill
  \begin{subfigure}[b]{0.45\textwidth}
	\centering
	\includegraphics[width=\linewidth]{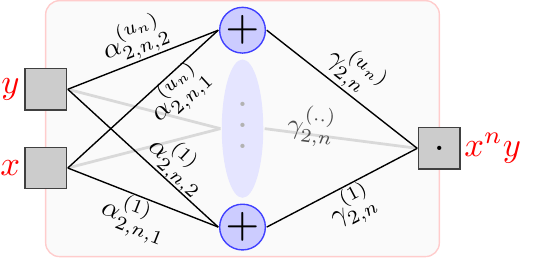}
	\caption{The $\Phi^1_{bm,n}$ defined in \eqref{eq:mono3n_net}.}
	\label{fig:nn_bm_1n}
  \end{subfigure}
	\\
	\medskip
	
  \begin{subfigure}[b]{0.45\textwidth}
	\centering
	\includegraphics[width=\linewidth]{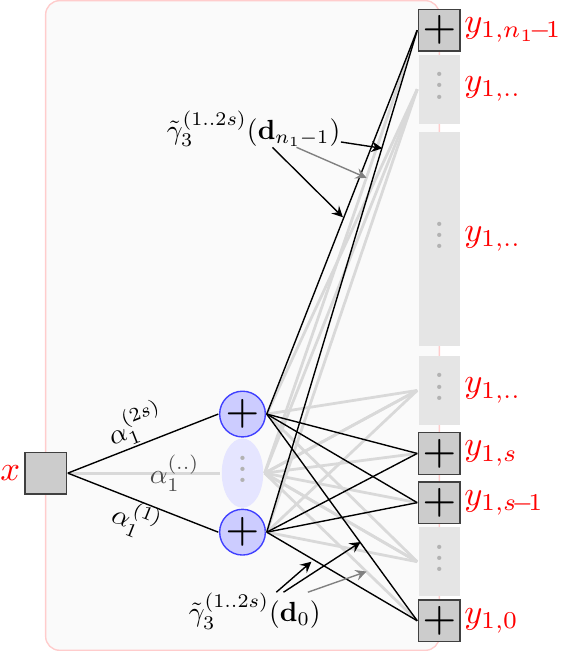}
	\caption{The $\Phi^1_{\bm{a}}$ to realize $\{\, y_{1,k} \,\}$ defined in \eqref{eq:y1}}
	\label{fig:nn_po_y1}
  \end{subfigure}\hfill
  \begin{subfigure}[b]{0.45\textwidth}
    \centering
    \includegraphics[width=\linewidth]{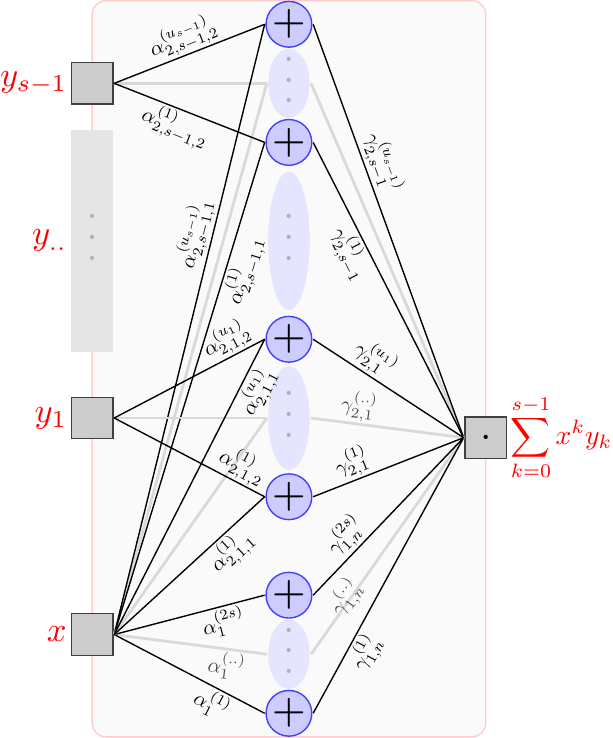}
    \caption{The $\Phi^1_{pm}$ defined in \eqref{eq:pxny_net}}
    \label{fig:nn_pm_1}
  \end{subfigure}
  \caption{Some shallow neural networks used as building bricks of the
    RePUs DNNs. Here circles represent hidden nodes, squares represent
    input, output and intermediate variables, A ``$+$'' sign inside a
    circle or a square represent a nonzero bias.}
  \label{fig:nn}
\end{figure}

In the implementation of polynomials, operations of the form $x^n y$ will be
frequently involved.  Following lemma asserts that
$x^n y, 0\le n \le s-1$ can be realized by using only one hidden layer.
\begin{lemma} \label{lem:xny} Bivariate monomials $x^n y$,
  $0 \le n \le s-1$ can be realized as a linear combination of at most
  $u_n$ activation units of $\sigma_s(\cdot)$ as
	\begin{align}
	\label{eq:bivamono}
      x^{n}y
      &= \gamma^{T}_{2, n}\sigma_{s}(\alpha_{2, n, 1}x + \alpha_{2, n, 2}y + \beta_{2, n}), 
        \quad n=0, 1, \ldots, s-1, 
	\end{align}
	where
    $\alpha_{2, n, 1}, \alpha_{2, n, 2}, \beta_{2, n}, \gamma_{2, n} \in
    \bbR^{u_n\times 1}$, $u_n = 2(n+1)(s-n)$.  A particular formula is
    given by \eqref{eq:xny_sigs_coef} in the appendix section. The
    corresponding neural network is defined as
	\begin{equation}
      \label{eq:mono3n_net}
      \Phi^1_{bm,n}
      = \big( ([\alpha_{2,n,1}, \alpha_{2,n,2}], \beta_{2,n}), (\gamma_{2,n}^T, 0) \big).
	\end{equation}
	A graph representation of $\Phi^1_{bm,n}$ is sketched in
    Fig. \ref{fig:nn_bm_1n}.  Obviously, the numbers of nonzero weights
    in the first layer and second layer affine transformation are $3u_n$
    and $u_n$ correspondingly.
\end{lemma}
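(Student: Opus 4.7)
The plan is to give an explicit construction mirroring the Vandermonde strategy of Lemma \ref{lem:baseeq}. Using the gadget $\rho_s(z) := \sigma_s(z) + (-1)^s\sigma_s(-z) = z^s$ from part (i), which realizes any $s$-th power of a linear form by exactly two $\sigma_s$ units, it suffices to produce an identity of the form
\[
x^n y \;=\; \sum_{j=1}^{n+1}\sum_{k=1}^{s-n} \lambda_{jk}\,(x+\beta_j y + c_k)^s.
\]
This assembles immediately into a one-hidden-layer $\sigma_s$ network with $u_n = 2(n+1)(s-n)$ units: the rows $[1,\beta_j]$, $[-1,-\beta_j]$ with biases $\pm c_k$ furnish $\alpha_{2,n,1},\alpha_{2,n,2},\beta_{2,n}$, and the entries $\lambda_{jk}(1,(-1)^s)$ furnish $\gamma_{2,n}$. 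The nonzero-weight counts $3u_n$ and $u_n$ in the two affine layers then follow by direct inspection.

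Expanding each summand by the multinomial theorem, the coefficient of $x^i y^l$ (for $i+l\le s$) on the right equals $\binom{s}{i,l,s-i-l}\sum_{j,k}\lambda_{jk}\beta_j^l c_k^{s-i-l}$, which must equal $\delta_{i,n}\delta_{l,1}$. I would solve this via the separable choice $\lambda_{jk} = \mu_j\nu_k$, which decouples the problem into two independent Vandermonde systems: pick $(\mu_j,\beta_j)$ with $j\in\{1,\dots,n+1\}$ so that $\sum_j \mu_j \beta_j^l = \delta_{l,1}$ for $l = 0,\dots,n$ (the $y^1$-extractor, of size $n+1$), and pick $(\nu_k,c_k)$ with $k\in\{1,\dots,s-n\}$ so that $\sum_k \nu_k c_k^m = \delta_{m,s-n-1}$ for $m = 0,\dots,s-n-1$ (the $x^n$-extractor, of size $s-n$). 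The factors $(n+1)$ and $(s-n)$ in $u_n$ are precisely the orders of these two Vandermondes, each invertible by the distinctness of the chosen nodes (cf.\ Remark \ref{rem:Vmatrix}).

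The main obstacle is verifying that these two Vandermondes, which do not pin down the higher moments $\sum_j \mu_j \beta_j^l$ for $l>n$ or $\sum_k \nu_k c_k^m$ for $m>s-n-1$, nonetheless produce no monomial beyond $x^n y$. Most potentially dangerous index-pairs $(l,m)$ lie in the region $l+m>s$ and so do not appear in the multinomial expansion at all; the only surviving candidates for nonzero residuals are a possible $y^{n+1}$ term at $(l,m)=(n+1,s-n-1)$ and lower-order $x^iy$ terms at $(l,m)=(1,s-i-1)$ for $i<n$. These are eliminated by choosing $\{\beta_j\}$ and $\{c_k\}$ with appropriate parity/symmetry—in the spirit of the symmetric optimal nodes of Remark \ref{rem:Vmatrix}—so that the uncontrolled odd or even moments vanish automatically. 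Carrying this out produces the explicit closed-form coefficients listed as \eqref{eq:xny_sigs_coef} in the appendix, from which the claimed network $\Phi^1_{bm,n}$ and its weight counts read off immediately.
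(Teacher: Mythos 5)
Your reduction to two decoupled moment (Vandermonde) systems does not close, and the gap is not just a missing computation: with your orientation of the linear forms, the construction is impossible for some $(s,n)$. Writing $x^ny=\sum_{j,k}\mu_j\nu_k\,(x+\beta_j y+c_k)^s$ and expanding, the coefficient of $x^iy^l$ (with $m=s-i-l$) is proportional to $B(l)C(m)$ with $B(l)=\sum_j\mu_j\beta_j^l$, $C(m)=\sum_k\nu_kc_k^m$. Since $B(1)\neq 0$ is forced (it is needed for the target term), killing the spurious terms $x^iy$, $i<n$, requires $C(m)=0$ for \emph{all} $m=s-n,\dots,s-1$, i.e.\ $n$ extra moment conditions on a system with only $s-n$ atoms, in addition to $C(m)=\delta_{m,s-n-1}$ (up to normalization) for $m\le s-n-1$. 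Parity/symmetry of the nodes $c_k$ only annihilates the moments of parity opposite to $s-n-1$; the same-parity ones remain, and they are genuinely obstructed. Concretely, for $s=4$, $n=2$ you need two atoms with $\nu_1+\nu_2=0$, $\nu_1c_1^2+\nu_2c_2^2=0$, $\nu_1c_1^3+\nu_2c_2^3=0$ and $\nu_1c_1+\nu_2c_2\neq0$; the first three force $c_2=-c_1$ and then $2\nu_1c_1^3=0$, contradicting the fourth. So the step ``eliminated by choosing $\{\beta_j\},\{c_k\}$ with appropriate parity'' fails, and no choice of nodes rescues your separable ansatz whenever $n\ge 2$ and $s-n$ is small.

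The repair is to put the varying coefficient on $x$ (the variable carrying the power $n$) and keep the coefficient of $y$ equal to $1$, which is what the paper does: its identity \eqref{eq:xny_rhos} uses forms $\big[(n-2r)x+y+(s-n-1-2j)\big]^s$ with explicit separable weights $(-1)^{j+r}\binom{s-(n+1)}{j}\binom{n}{r}/(2^{s-1}s!)$. In that orientation the only uncontrolled moments are the $(n+1)$-st moment of the $x$-coefficient system and the $(s-n)$-th moment of the constant system, and both vanish automatically by the symmetry of the nodes $n-2r$ and $s-n-1-2j$ with their alternating binomial weights. Moreover, the paper does not solve any Vandermonde system at all: the identity is obtained by specializing the polarization identity of Lemma \ref{lem:identity} via Corollary \ref{cor:n1_n2} (set $n_1=1$, $n_2=n$ and swap $x,y$), which yields the closed-form coefficients \eqref{eq:xny_sigs_coef} directly and then the unit count $u_n=2(n+1)(s-n)$ via $\rho_s(z)=\sigma_s(z)+(-1)^s\sigma_s(-z)$, exactly as in your assembly step.
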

The proof of Lemma \ref{lem:xny} is lengthy. We put it in the appendix
section.

\begin{corollary} \label{corr:xnyp} A polynomial of the form
  $\sum_{k=0}^{s-1} x^k y_k$ can be realized as a linear combination of
  at most $w$ activation units of $\sigma_s(\cdot)$ as
	\begin{equation}
      \label{eq:pxny_net}
      \Phi^1_{pm}
      = \Phi^0 \circ \big( 
      \Phi^2_{mo,1} \nnpar \Phi^1_{bm,1} \nnpar \cdots \nnpar \Phi^1_{bm, s-1} \big).
	\end{equation}
	Here $\Phi^0=\big((\bm{1}_s,0)\big)$ with
    $\bm{1}_s = (1,\ldots, 1)^T \in \bbR^{s\time 1}$ contains only a
    linear combination layer.  A graph representation of $\Phi^1_{pm}$
    is sketched in Fig. \ref{fig:nn_pm_1}.
	The numbers of nonzero weights in the first layer and second layer
    affine transformations are at most $3w$ and $w$ correspondingly. Here
   	\begin{align*}
    w 
    &= 2s + \sum_{k=1}^{s-1} u_k  
    = 2s + \sum_{k=1}^{s-1} 2(k+1)(s-k) 
    = \frac13(s^3+3s^2+2s)
    \end{align*}
    
\end{corollary}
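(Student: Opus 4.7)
The plan is to assemble $\Phi^1_{pm}$ from the one-hidden-layer building blocks supplied by Lemmas \ref{lem:baseeq} and \ref{lem:xny}. For each index $1 \le k \le s-1$, Lemma \ref{lem:xny} gives a one-hidden-layer $\sigma_s$ network $\Phi^1_{bm,k}$ that realizes $x^k y_k$ with $u_k = 2(k+1)(s-k)$ hidden units, contributing $3u_k$ nonzero entries in the first affine map and $u_k$ in the second. The degenerate $k=0$ term is simply $y_0$, for which I would use the identity network $\Phi_{\text{idx}} = \Phi^2_{mo,1}$ from the closing remark of Lemma \ref{lem:baseeq}, having $2s$ hidden units. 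Placing these $s$ subnetworks side by side via the parallelization $\nnpar$ of Definition \ref{def:parallel} yields a single one-hidden-layer network whose $s$-vector output is precisely $(y_0, x y_1, x^2 y_2, \ldots, x^{s-1} y_{s-1})$. Finally, pre-composing via $\circ$ with the trivial linear layer $\Phi^0 = ((\bm{1}_s, 0))$ sums the entries of this vector, producing $\sum_{k=0}^{s-1} x^k y_k$; by the concatenation rule in Definition \ref{def:concat}, folding $\bm{1}_s^T$ into the second affine map introduces no new hidden layer, so $\Phi^1_{pm}$ remains one-hidden-layer as claimed.

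The weight counts follow by direct summation. The total hidden-unit count is
\begin{equation*}
w \;=\; 2s + \sum_{k=1}^{s-1} u_k \;=\; 2s + \sum_{k=1}^{s-1} 2(k+1)(s-k),
\end{equation*}
and substituting the standard formulas $\sum_{k=1}^{s-1} k = \tfrac{1}{2}s(s-1)$ and $\sum_{k=1}^{s-1} k^2 = \tfrac{1}{6}s(s-1)(2s-1)$ collapses this to $w = \tfrac{1}{3}(s^3+3s^2+2s)$. Since the first affine map is block-diagonal across the parallel subnetworks, its nonzero count is bounded by $3 \cdot 2s + 3\sum_{k=1}^{s-1} u_k = 3w$. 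In the merged second affine map, each of the $w$ hidden units contributes at most one nonzero entry to the single output row, giving a bound of $w$ nonzero weights.

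Main obstacle: essentially every ingredient reduces to the preceding lemmas or to arithmetic, so I do not anticipate genuine difficulty. The only point requiring care is the wiring of the parallelization step, because $\Phi^2_{mo,1}$ has one-dimensional input while each $\Phi^1_{bm,k}$ has two-dimensional input; the zero-padding convention of Definition \ref{def:parallel} must be applied so that each block picks off the correct components of the concatenated input $(x, y_0, y_1, \ldots, y_{s-1})$, and a consistent layout for $x$ as the shared coordinate and $y_k$ as the block-local coordinate must be fixed. Once this bookkeeping is nailed down, verifying the output identity and the weight bounds is mechanical.
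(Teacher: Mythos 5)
Your construction is correct and coincides with the paper's own (the corollary is stated in the paper without a separate proof, the formula \eqref{eq:pxny_net} itself being the construction): identity block $\Phi^2_{mo,1}$ for the $y_0$ term, the $\Phi^1_{bm,k}$ blocks from Lemma \ref{lem:xny} for $x^k y_k$, parallelization, and the final summing layer $\Phi^0$ absorbed by concatenation, with the same node and weight counts. Your remark about fixing the zero-padding/wiring so that each block reads $(x,y_k)$ from the shared input is exactly the bookkeeping the paper handles implicitly via Fig.~\ref{fig:nn_pm_1}, so nothing is missing.
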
		

\subsection{Optimal realizations of polynomials by RePU networks with no
  error}

The basic properties of $\sigma_s$ given in Lemma \ref{lem:baseeq} and
Lemma \ref{lem:xny} can be used to construct neural network
representation of any monomial and polynomial.  We first present the
results of monomial.

For $x^n$ with $1\le n \le s$, by Lemma \ref{lem:baseeq}, the number of
layers, hidden units and nonzero weights required in a $\sigma_s$
network to realize it is no more than $2$, $2s$, $6s+1$,
correspondingly. For $n>s$, we have the following Theorem.

\begin{theorem}
  \label{thm:xn1d}
  For $2\le s<n\in \bbN$, there exist a $\sigma_s$ network
  $\Phi^{3}_{mo}$ with
  \begin{align*}
    & L(\Phi^3_{mo}) \le \uint{\log_s n} + 1, 
      \qquad \quad
      N(\Phi^3_{mo}) \le \lint{\log_{s}n} \big( (s+1)^2/2 + 2 \big) + 2s \\
    & M(\Phi^3_{mo}) \le \big( \lint{\log_{s}n} - 1\big) (u^2+3u+4) + 2su + 4u + 4s +2, 
      \quad u:=(s+1)^2/4
  \end{align*}  
  to exactly represent the monomial $x^n$ defined on $\bbR$.  Here,
  $\lfloor x \rfloor$ represents the largest integer not exceeding $x$,
  and $\lceil x \rceil$ represents the smallest integer no less than
  $x$, for $x\in\bbR$.
\end{theorem}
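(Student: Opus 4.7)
The plan is to build $\Phi^3_{mo}$ using the base-$s$ representation of $n$ together with Horner's rule, realizing each Horner step by a single $\sigma_s$ hidden layer. First, I would write $n = \sum_{i=0}^{L-1} a_i s^i$ with digits $a_i \in \{0,\ldots,s-1\}$ and leading digit $a_{L-1} \ne 0$, so that $L = \lfloor \log_s n \rfloor + 1$. Setting the Horner sequence $m_0 := a_{L-1}$ and $m_k := s\,m_{k-1} + a_{L-1-k}$ for $1 \le k \le L-1$, we have $m_{L-1} = n$, and the recursion $x^{m_k} = (x^{m_{k-1}})^s \cdot x^{c_k}$ (with $c_k := a_{L-1-k} \in \{0,\ldots,s-1\}$) drives the construction.

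Naively, going from $x^{m_{k-1}}$ to $x^{m_k}$ means multiplying $(x^{m_{k-1}})^s$ by $x^{c_k}$, which is a bivariate quantity of total degree $s + c_k > s$ in the variables $(x^{m_{k-1}}, x)$; since one $\sigma_s$ layer can only output polynomials of total degree $\le s$ in its inputs, this cannot fit in a single layer from $(x^{m_{k-1}}, x)$ alone. The fix is to maintain an enriched pair $(y_k, z_k) := (x^{m_k}, x^{m_k+1})$ and exploit the algebraic identities
\begin{align*}
y_k &= y_{k-1}^{\,s-c_k}\, z_{k-1}^{\,c_k}, &
z_k &= y_{k-1}^{\,s-c_k-1}\, z_{k-1}^{\,c_k+1},
\end{align*}
in which both targets are specific monomials of total degree exactly $s$ in $(y_{k-1}, z_{k-1})$ (straightforward to verify using $m_k = s m_{k-1} + c_k$ and $c_k + 1 \le s$). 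By a Vandermonde-style extraction analogous to the construction of Lemma \ref{lem:xny}, both monomials can be simultaneously read off from a shared set of $\sigma_s$ activations applied to linear combinations of $(y_{k-1}, z_{k-1})$ in a single hidden layer with at most $(s+1)^2/2 + 2$ nodes.

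I would then assemble $\Phi^3_{mo}$ via concatenation (Definition \ref{def:concat}): a first hidden layer with $2s$ nodes realizes $x \mapsto (y_0, z_0) = (x^{a_{L-1}}, x^{a_{L-1}+1})$ by Corollary \ref{corr:poly1layer} (both targets are polynomials in $x$ of degree $\le s$, since $a_{L-1}+1 \le s$); then $L-2$ intermediate hidden layers propagate the pair; and a final hidden layer together with the output affine layer extract $y_{L-1} = x^n$. This yields $L+1 = \lceil \log_s n \rceil + 1$ affine layers when $n$ is not a power of $s$. For the degenerate case $n = s^K$, where the Horner construction would overshoot by one layer, I would instead use the simpler iterated-$(\cdot)^s$ construction based on $\sigma_s(\cdot) + (-1)^s \sigma_s(-\cdot) = (\cdot)^s$, which uses $K$ hidden layers of $2$ nodes. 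Summing node contributions layer by layer gives $N(\Phi^3_{mo}) \le 2s + \lfloor \log_s n \rfloor\big((s+1)^2/2 + 2\big)$, and counting the nonzero entries of each $A_k$ and $b_k$ (with interlayer matrices between consecutive hidden layers of width $\sim u = (s+1)^2/4$ producing the $u^2$ term) yields the stated weight bound.

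The main obstacle is the fine per-layer bookkeeping. Specifically, one must verify that the Vandermonde-style construction can extract both adjacent degree-$s$ monomials $y_{k-1}^{s-c}z_{k-1}^c$ and $y_{k-1}^{s-c-1}z_{k-1}^{c+1}$ from a shared set of $\sigma_s$ units, meeting the tight node bound $(s+1)^2/2 + 2$ uniformly in the digit $c_k$. The atypical initial layer (wired from the single input $x$) and the final layer (which extracts only $y_{L-1}$, not the full pair) require separate accounting so the summed node and weight counts match the stated inequalities; and reconciling the generic Horner construction with the pure-power special case within a single uniform bound is the concluding book-keeping step.
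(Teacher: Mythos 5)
Your construction is correct, but it takes a genuinely different route from the paper's. The paper processes the base-$s$ digits of $n$ from the least significant end, maintaining the pair $\big(x^{s^k},\, x^{\sum_{j<k} n_j s^j}\big)$; each layer then only needs the two primitives already proved: $(\cdot)^s$ from Lemma \ref{lem:baseeq}(i) and the special bivariate monomial $z^{n_{k-1}}w$ of degree $\le s$ from Lemma \ref{lem:xny}, so the per-layer width $2(n_{k-1}+1)(s-n_{k-1})+2\le (s+1)^2/2+2$ drops out directly. You instead run radix-$s$ Horner from the most significant digit, maintaining $(x^{m_k}, x^{m_k+1})$, and each step requires two \emph{general} binary monomials $y^{s-c}z^{c}$ and $y^{s-c-1}z^{c+1}$ of exact degree $s$ — a one-layer fact that is not literally Lemma \ref{lem:xny}, so you must supply it; it does hold, either via the appendix identity in Corollary \ref{cor:n1_n2} or, more economically, because the $s$-th powers of $s+1$ pairwise non-proportional forms $y+t_i z$ span all binary forms of degree $s$ (a Vandermonde argument in the $t_i$), so both targets can be read off a shared bank of $2(s+1)\le (s+1)^2/2+2$ units. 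With that supplied, your layer count ($\lfloor\log_s n\rfloor+2$ affine layers for non-powers of $s$, the iterated-power network for $n=s^K$) and node count match the theorem, and your asserted weight bookkeeping fits comfortably since the interlayer blocks have width $O(s)$ rather than the paper's $O(s^2)$ — indeed your variant would give slightly better node/weight constants, at the price of computing readout coefficients from a Vandermonde-type system (the conditioning issue the paper discusses in Remark \ref{rem:Vmatrix}), whereas the paper's low-digit-first recursion reuses its explicit, already-proved building blocks. Note also that this base-$s$ Horner does not suffer the $n$-layer depth penalty the paper attributes to the classical Horner scheme \eqref{eq:horner}, since each step absorbs a whole digit rather than a single monomial.
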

\begin{proof} 
  1) For $n>s$, $\log_s n \notin \bbZ$, we first express $n\in \bbN$ in
  positional numeral system with radix $s$ as follows:
  \begin{equation}
    n 
    = n_m\cdot s^m + n_{m-1}\cdot s^{m-1} + \cdots + n_1\cdot s + n_0
    =: 
    \overline{(n_{m}\cdots n_{1}n_{0})_{s}}, 
  \end{equation}
  where $m=\lfloor\log_s n\rfloor$, $n_j\in \bbZ_s$ for
  $j=0,\ldots, m-1$ and $0\neq n_m \in \bbZ_s$. Then
  \begin{align}
    x^n
    = x^{n_m s^m}\cdot x^{\sum\limits_{j=0}^{m-1}n_js^{j}}.
 \end{align}
 Introducing intermediate variables
  \begin{equation}\label{eq:xidef1}
    \xi_k^{(1)}
    := x^{s^k}, 
    \qquad
    \xi_k^{(2)}
    := 
    x^{\sum\limits_{j=0}^{k-1}n_j s^j}, 
    \qquad \text{for}\ 1\le k\le m+1,
  \end{equation}
  then $x^n=\xi_{m+1}^{(2)}$ can be calculated iteratively as
  \begin{align}\label{eq:xirec}
    \left\{
    \begin{array}{lll}
      \xi_1^{(1)}=x^s, 
      &  \xi_1^{(2)}=x^{n_0}, 
      &  k=1, \\
      \xi_{k}^{(1)} = (\xi_{k-1}^{(1)})^s, 
      & \xi_{k}^{(2)} = (\xi_{k-1}^{(1)})^{n_{k-1}} \xi_{k-1}^{(2)}, 
      &  2\le k\le m, \\
      \xi_{m+1}^{(2)} = (\xi_m^{(1)})^{n_m} \xi_m^{(2)}, 
      & 
      &  k=m+1.
    \end{array}   
        \right.
  \end{align}
  Therefore, to construct a $\sigma_s$ neural network expressing $x^n$,
  we need to realize three basic operations: $(\cdot)^s$,
  $(\cdot)^{n_j}$ and multiplication. By Lemma \ref{lem:baseeq}, each
  step of iteration \eqref{eq:xirec} can be realized by a $\sigma_s$
  network with one hidden layer. Then the overall neural network to
  realize $x^n$ is a concatenation of those one-layer sub-networks.  We
  give the construction process of the neural network as follows.
  \begin{itemize}
  \item For $k=1$, the first sub-network
    $\Phi^1\!\!=\!\big( (A^1_1, b^1_{1}), (A^1_{2}, b^1_{2})
    \big)=\Phi^1_{mo}\nnpar\Phi^2_{mo,n_0}$ is constructed
    according to Lemma \ref{lem:baseeq} as
    \begin{align}
      \label{eq:m1}
      \begin{array}{l}
        \bm{x}^{(1)}_{0} 
        = x, \\
        \bm{x}^{(1)}_{1} 
        = \sigma_{s}\left( 
				        \begin{bmatrix}
                          \alpha_{0} \\
                          \alpha_{1}
				        \end{bmatrix}
        \bm{x}^{(1)}_{0}  +
        \begin{bmatrix}
          0 \\
          \beta_{1}
        \end{bmatrix}
        \right)
        =: 
        \sigma_{s}\left( A^1_1 \bm{x}^{(1)}_{0} + b^1_1 \right), \\
        \bm{x}^{(1)}_2 
        = \big[ \xi^{(1)}_1, \xi^{(2)}_{1} \big]^{T}
        = \begin{bmatrix}
          \gamma^{T}_{0} & \hspace{-0.2cm} 0  \\
          0 & \gamma^{T}_{1, n_{0}}
        \end{bmatrix}
              \bm{x}^{(1)}_1 + \begin{bmatrix}
                0 \\
                \lambda_{0, n_{0}}
              \end{bmatrix}
	    =: A^1_2 \bm{x}^{(1)}_1 + b^1_2.
      \end{array}
    \end{align}
    It is easy to see that the number of nodes in the hidden layer is
    $2(s+1)$, and the number of non-zeros in $A^1_{1}$ and $b^1_{1}$ is
    $4s+2$.

  \item For $k=2, \ldots, m$ the sub-network
    $\Phi^{k}\!\!=\!\big( (A^{k}_1, b^{k}_{1}), (A^{k}_{2}, b^{k}_{2})
    \big) = \Phi^1_{mo} \nnpar \Phi_{bm, n_{k-1}}$ are
    constructed as
    \begin{align}
      \label{eq:m2}
      \begin{array}{l}
        \bm{x}^{(k)}_0 
        = \big[\xi^{(1)}_{k-1}, \ \xi^{(2)}_{k-1} \big]^{T}, \\
        \bm{x}^{(k)}_1 
        = \sigma_{s}\left(
        \begin{bmatrix}
          \alpha_{0}           & 0 \\
          \alpha_{2, n_{k-1}, 1} & \alpha_{2, n_{k-1}, 2}
        \end{bmatrix}
                                   \bm{x}_{0}
                                   + \begin{bmatrix}
                                     0 \\
                                     \beta_{2, n_{k-1}}
                                   \end{bmatrix}
        \right)
        =: \sigma_{s}\left( A^k_1 \bm{x}^{(k)}_0 + b^k_1 \right), \\
        \bm{x}^{(k)}_2  
        = \big[ \xi^{(1)}_{k}, \xi^{(2)}_{k} \big]^{T}
        = \begin{bmatrix}
          \gamma^{T}_{0} & 0                      \\
          0              & \gamma^{T}_{2, n_{k-1}}
        \end{bmatrix}
                           \bm{x}^{(k)}_{1}
                           =: A^{k}_{2}\bm{x}^{(k)}_{1} + b^{k}_{2}.
      \end{array}
    \end{align}
    The number of nodes in layer $k $ is $2(n_{k-1}+1)(s-n_{k-1})+2$,
    and the number of non-zeros in $A^{k}_{1}$ and $b^{k}_{1}$ is at
    most $6(n_{k-1}+1)(s-n_{k-1})+ 2 \le 3(s+1)^2/2 + 2$. The number of
    non-zeros in $A^{k}_{2}$ and $b^{k}_{2}$ is at most
    $2(n_{k-1}+1)(s-n_{k-1})+ 2 \le (s+1)^2/2 + 2$.

  \item For $k=m+1$, the sub-network
    $\Phi^{m+1}\!\!=\!\big( (A^{m+1}_1, b^{m+1}_{1}), (A^{m+1}_{2},
    b^{m+1}_{2}) \big) = \Phi_{bm, n_{m}}$ is constructed as
    \begin{align}\label{eq:m3}
      \begin{array}{l}
        \bm{x}^{(m+1)}_{0} 
        = \big[\xi^{(1)}_{m}, \ \xi^{(2)}_{m} \big]^{T},\\
        \bm{x}^{(m+1)}_{1} 
        = \sigma_{s}\Big( [ \alpha_{2, n_{m}, 1}, \ \alpha_{2, n_{m}, 2}] 
        \bm{x}^{(m+1)}_{0}
        + \beta_{2, n_{m}}  \Big)
        =: \sigma_{s}(A^{m+1}_{1}\bm{x}^{(m+1)}_{0}+b^{m+1}_{1}), \\
        \bm{x}^{(m+1)}_{2} 
        = x^{n} 
 	    = \gamma^{T}_{2, n_{m}} \bm{x}^{(m+1)}_{1}
		=: A^{m+1}_{2}\bm{x}^{(m+1)}_{1} + b^{m+1}_{2}.
      \end{array}
    \end{align}
    By a straightforward calculation, we get the number of nodes in
    Layer $m+1$ is at most $2(n_m+1)(s-n_m)$, and the number of
    non-zeros in $A^{m+1}_{2}$ and $b^{m+1}_{2}$ is
    $2(n_{m}+1)(s-n_{m})\le (s+1)^2/2$.
  \end{itemize}
  \begin{figure}[htb!]
	\begin{center}
      \resizebox{0.9\textwidth}{!}{
        \begin{tikzpicture}
          \path (-5, -1.0) node (x0) {Input: $x$};
			
          \path (-4.2, 1.1) node (x110) {$(1)$}; \path (-4.2, 0.5) node
          (x11) {$x^{s^{1}}$}; \path (-4.2, -1.5) node (x21) {}; \path
          (-4.2, -2.5) node (x31) {$x^{n_{0} }$};
			
          \path (-2.5, 1.1) node (x120) {$(2)$}; \path (-2.5, 0.5) node
          (x12) {$x^{s^{2}}$};

			\path (-3.1, -2.5) node (x3200) {};
			\path (-3.1, -2.5) node (x3201) {};
			\path (-2.5, -2.3) node (x32) {$x^{\sum^{1}\limits_{j=0}n_{j}s^{j}}$};
			\path (-2.3, -2.5) node (x3210) {};
			
			\path (-0.5, 1.1) node (x110) {$(3)$};
			\path (-0.5, 0.5) node (x13) {$x^{s^{3}}$};
			
			\path (-0.8, -1.35) node (x230) {};
			
			\path (-1.1, -2.5) node (x3300) {};
			\path (-1.1, -2.5) node (x3301) {};
			\path (-0.5, -2.3) node (x33) {$x^{\sum^{2}\limits_{j=0}n_{j}s^{j}}$};
			\path (-0.2, -2.5) node (x3310) {};
			
			\path (1.1, 0.5) node (x14) {};
			\path (1.25, -1.3) node (x24) {};
			\path (1.1, -1.5) node (x240) {};
			\path (1.1, -2.5) node (x34) {};
			
			\filldraw [black] (1.4, 0.5) circle (1.3pt);
			\filldraw [black] (1.8, 0.5) circle (1.3pt);
			\filldraw [black] (2.2, 0.5) circle (1.3pt);
			
			\filldraw [black] (1.4, -1.0) circle (1.3pt);
			\filldraw [black] (1.8, -1.0) circle (1.3pt);
			\filldraw [black] (2.2, -1.0) circle (1.3pt);
			
			\filldraw [black] (1.4, -2.5) circle (1.3pt);
			\filldraw [black] (1.8, -2.5) circle (1.3pt);
			\filldraw [black] (2.2, -2.5) circle (1.3pt);
			
			\path (3.3, 1.1) node (x160) {$(m-2)$};
			\path (3.3, 0.5) node (x16) {$x^{s^{m-2}}$};
			\path (2.9, -1.3) node (x26) {};
			\path (3.3, -1.7) node (x261) {};

			\path (2.6, -2.5) node (x3600) {};
			\path (3.2, -2.5) node (x3601) {};
			\path (3.3, -2.3) node (x36) {$x^{\sum^{m-3}\limits_{j=0}s^{j}n_{j} }$};
			
			\path (5.3, 1.1) node (x170) {$(m-1)$};
			\path (5.3, 0.5) node (x17) {$x^{s^{m-1}}$};

			\path (4.6, -2.5) node (x3700) {};
			\path (5.2, -2.5) node (x3701) {};
			\path (5.3, -2.3) node (x37) {$x^{\sum^{m-2}\limits_{j=0}s^{j}n_{j} }$};
			
			\path (7.3, 1.1) node (x180) {$(m)$};
			\path (7.3, 0.5) node (x18) {$x^{s^{m}}$};
			\path (7.3, -1.5) node (x28) {};
			\path (6.6, -2.5) node (x3800) {};
			\path (7.6, -2.1) node (x3801) {};
			\path (7.3, -2.3) node (x38) {$x^{\sum^{m-1}\limits_{j=0}s^{j}n_{j}}$};
			
			\path (8.5, 0.5) node (x19) {};
			\path (8.1, -0.9) node (x2900) {};
			\path (8.1, -1.1) node (x2901) {};
			\path (8.7, -0.7) node (x29) {$x^{\sum^m\limits_{j=0}s^j n_j }$};
			\path (8.1, -2.5) node (x39) {};
			
			\draw[->, very thick] (x0) --(x11);
			\draw[->, very thick] (x0) --(x31);
			
			\draw[->, very thick] (x11)--(x12);
			\draw[->, very thick] (x11)--(x3200);
			
			\draw[->, very thick] (x12)--(x13);
			\draw[->, very thick] (x12)--(x3300);
			
			\draw[->, very thick] (x13)--(x14);
			\draw[->, very thick] (x13)--(x34);
			
			\draw[->, very thick] (x16)--(x17);
			\draw[->, very thick] (x16)--(x3700);
			
			\draw[->, very thick] (x17)--(x18);
			\draw[->, very thick] (x17)--(x3800);
			
			\draw[->, very thick] (x18)--(x2900);
			
			\draw[->, very thick] (x31)--(x3201);
			\draw[->, very thick] (x3210)--(x3301);
			\draw[->, very thick] (x3310)--(x34);
			
			\draw[->, very thick] (x3601)--(x3700);
			\draw[->, very thick] (x3701)--(x3800);
			\draw[->, very thick] (x3801)--(x2901);
			\end{tikzpicture}
		}
	\end{center}
	\caption{Sketch of a $\sigma_s$ network realization of $x^{n}$.
      Here $(k)$, $k=1,\ldots, m$ on the top part represents the
      intermediate variables of $k$-th hidden layer (the quantities
      beneath $(k)$).}
	\label{fig:xn1ds}
\end{figure}
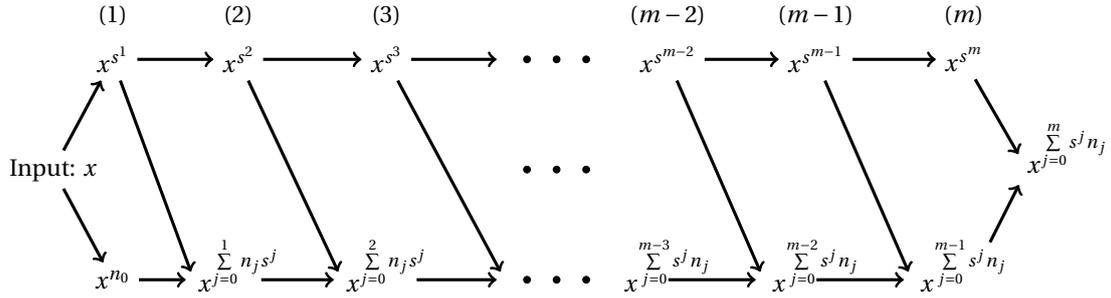
The whole neural network $\Phi^{3}_{mo}$ is constructed by a
concatenation of the all sub-networks, i.e.
    \begin{align}
      \Phi^{3}_{mo} 
      &= \Phi^{m+1}\circ\cdots\circ\Phi^1 \nonumber\\
      &= \Big(
        (A^1_1, b^1_{1}), 
        (A^2_{1}A^1_{2}, A^2_{1}b^1_{2}+b^2_{1}), \ldots, 
        (A^{m+1}_{1}A^{m}_{2}, A^{m+1}_{1}b^{m}_{2}+b^{m+1}_{1}), 
        (A^{m+1}_{2}, b^{m+1}_{2})
        \Big).
        \label{eq:mono_concat}
   \end{align}
   The network structure is sketched in Figure \ref{fig:xn1ds}. According
   to the definition, $L(\Phi^3_{mo})=m+2$.  The total number of nodes
   is given by
   \begin{align*}
     N(\Phi^3_{mo}) 
     = \sum_{k=1}^{m+1} N(\Phi^i) 
     &= 2(s+1) + 2(n_m+1)(s-n_m) + \sum_{k=2}^{m} (2(n_{k-1}+1)(s-n_{k-1})+2)  \\
     &\le m(s+1)^2/2 + 2m + 2s.
  \end{align*} 
  The number of non-zeros is given by
  \begin{align*}
    M(\Phi^3_{mo}) 
    &= ( \|A^1_1\|_0, \|b^1_1\|_0 ) 
      +  ( \|A^{m+1}_2\|_0, \|b^{m+1}_2\|_0)
      + \sum_{k=1}^{m}  \|A^{k+1}_1 A^k_2 \|_0 
      + \| A^{k+1}_1 b^k_2+b^{k+1}_1 \|_0 \\
    &= (4s+2)  + 2(n_m+1)(s-n_m) \\
    &\quad {}+ [u_0(2s+3)+4] +[ {u}_m(u_{m-1}+3) ] 
      +  \sum_{k=2}^{m-1} u_k (u_{k-1}+3) + 4\\
    &\le (m-1) (u^2 + 3u + 4) + 2s u + 4u + 4s + 2,
  \end{align*} 
  where $u_k = 2(n_k+1)(s-n_k) \le u := (s+1)^2/2$.
  
  2) For $n>s$, $\log_s n =m \in \bbZ$, we have $x^n = x^{s^m}$, which
  can be realized by a concatenation of $m$ shallow networks realizing
  $x^s$. So the number of layers, nodes and nonzero weights in this
  network realization is $m+1$, $2m$ and $4m$, correspondingly.
\end{proof}
\begin{algorithm}[hbt!]
  \caption{PNet\_Monomial$(s, n)$}
  \label{alg:monomial}
	\hspace*{0.02in} {\bf Input:}  $n\in \bbN,\; 2\le s \in\bbN$\\
	\hspace*{0.02in} {\bf Output:}  $\Phi_{mo}$ \qquad (with property $R_{\sigma_s}(\Phi_{mo})(x) = x^n$).
	\begin{algorithmic}[1]

		\If{$n \le 1$}
		\State Form $\Phi_{mo}=\big( (\delta_{n,1}, \delta_{n,0}) \big)$.

		\ElsIf{$n=s$}
		\State Form $\Phi_{mo}=\Phi_{mo}^1=\big( (\alpha_0, \bm{0}), (\gamma_0^T, 0) \big)$ with 
				$\gamma_0, \alpha_0$ defined in \eqref{eq:mono_s}.
		
		\ElsIf{$2 \le n < s$} %
		\State Form $\Phi_{mo}=\Phi_{mo,n}^2 = \big( (\alpha_1, \beta_1), (\gamma_{1, n}^T, \lambda_{0, n}) \big)$ with
				 $\alpha_1, \beta_1,\gamma_{1, n}^T, \lambda_{0, n}$  given in Lemma \ref{lem:baseeq}.(ii).
	
		\ElsIf{$n>s$}
		\State Let $m=\lfloor\log_{s}n\rfloor$  
		
		\If{$2^m = n$}
		\State Form $\Phi_{mo} = \big( (\alpha_0, \bm{0}), (\alpha_0\gamma_0^T, \bm{0}), \ldots, (\alpha_0\gamma_0^T, \bm{0}), (\gamma_0^T, 0) \big)$ with $m+1$ layers.
		
		\Else
		\State (1) Take $(n_{m}, \ldots, n_1, n_{0})$ such that $n = \overline{(n_m\cdots n_1 n_0)}_s$, 
		\State (2) Form $(A^1_{j}, b^1_{j})\ (j=1, 2)$ given in \eqref{eq:m1}.
		\State (3) Form $(A^{k}_{j}, b^{k}_{j})\ (j=1, 2)$ given in \eqref{eq:m2} for $k=2, \ldots, m$.
		\State (4) Form $(A^{m+1}_{j}, b^{m+1}_{j})\ (j=1, 2)$ given in \eqref{eq:m3}.
		\State (5) Form $\Phi_{mo} = \Phi^{3}_{mo}$ as defined in \eqref{eq:mono_concat}.

		\EndIf
		\EndIf
		\State 	\Return $\Phi_{mo}$.
	\end{algorithmic}
\end{algorithm}

\begin{remark}
	It is easy to check that: For any neural network with only one hidden $\sigma_s$ layer, the corresponding neural network function is a piecewise polynomial of degree $s$,
	for any neural network with $k$ hidden $\sigma_s$ layers, the corresponding network function is a piecewise polynomial of degree $s^k$. So $x^n$ can't be exactly represented by a $\sigma_s$ neural network with less than $\uint{\log_s n} -1$ hidden layers.
\end{remark}
\begin{remark}
  The detailed procedure presented in Lemma \ref{lem:baseeq} and Theorem
  \ref{thm:xn1d} is implemented in Algorithm \ref{alg:monomial}.  Note
  that this algorithm generates $\sigma_s$ DNN to represent monomial
  $x^n$ with least(optimal) hidden layers. For large $n$ and $s$, the numbers of
  nodes and nonzero weights in the network is of order
  $\mathcal{O}(s^2\log_s n )$ and $\mathcal{O}(s^4\log_s n )$,
  respectively, which are not optimal.  To lessen the size of the
  constructed network for large $s$, one may implement
  $(\xi^{(1)}_{k-1} )^{n_{k-1}} \xi^{(2)}_{k-1}$ in \eqref{eq:xirec} in
  two steps: i) implement $z = (\xi^{(1)}_{k-1} )^{n_{k-1}}$; ii)
  implement $z\,\xi^{(2)}_{k-1}$.  According to Lemma \ref{lem:baseeq}
  and Lemma \ref{lem:xny}, This will lessen both the number of nodes and
  the number of nonzero weights in the overall network but will add
  one-more hidden layer. To keep the paper tight, we will not present the detailed
  implementation of this approach here. Instead, we will describe this
  approach in the $\sigma_s$ network realization of polynomials.
\end{remark}

Now we consider converting univariate polynomials into $\sigma_s$
networks. If we directly apply Lemma \ref{lem:baseeq} and Theorem
\ref{thm:xn1d} to each monomial term in a polynomial of degree $n$ and
then combine them together, one would obtain a network of depth
$\mathcal{O}({\lceil\log_s n\rceil})$ and size at least
$\mathcal{O}({sn\lceil\log_s n\rceil})$, which is not optimal in terms
of network size. Fortunately, there are several other ways to realize
polynomials.  Next, we first discuss two straightforward constructions.
The first one is a direct implementation of Horner's method (also known
as Qin Jiushao's algorithm):
\begin{align}
  f(x) 
  &= a_0  + a_1 x + a_2 x^2 + a_3 x^3 + \ldots + a_n x^n \nonumber \\
  &= a_0 + x\bigg(a_1 + x\Big(a_2 + x\big( a_3+ \ldots +
    x(a_{n-1} + x a_n) \big) \Big)\bigg). 
    \label{eq:horner}
\end{align}
To describe the algorithm iteratively, we introduce the following
intermediate variables
\begin{equation*}
  y_k 
  = 
  \begin{cases}
	a_{n-1} + x a_n,     & k=n,                    \\
	a_{k-1} + x y_{k+1}, & k= n-1, n-2, \ldots, 1.
  \end{cases}
\end{equation*}
Then we have $y_0 = f(x)$. But an iterative implementation of $y_k$ using
realizations given in Lemma \ref{lem:baseeq}, \ref{lem:xny} and stack the
implementations up, we obtain a $\sigma_s$ neural network with $n$ layers
and each hidden layer has $4(s-1)$ activation units.

The second way is the method used by Mhaskar and his coworkers(see e.g.
\cite{mhaskar_approximation_1992, mhaskar_approximation_1993}), which is
based on the following proposition~\cite{chui_realization_1993,
  arora_understanding_2016}.

\begin{proposition} \label{prop:polynd} Let $m\geq 0, \ d\geq 1$ be
  integers. Then every polynomial in $d$ variables with total degree not
  exceeding $m$ can be written as a linear combination of
  $\binom{m+d}{d}$ quantities of the from
  $\left(\sum^{d}_{j=1}\omega_{j}x^{(j)}+b\right)^{m}$.
\end{proposition}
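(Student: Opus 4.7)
The plan is to establish the proposition via a clean dimension-plus-duality argument rather than an explicit Waring-type construction. Let $V_m^d$ denote the space of real polynomials in $d$ variables with total degree at most $m$; since the monomials $\{x^\alpha : \alpha \in \bbN_0^d,\ |\alpha| \le m\}$ form a basis, $\dim V_m^d = \binom{m+d}{d}$. Thus the count $\binom{m+d}{d}$ promised by the proposition matches the dimension of the target space exactly, and it suffices to show that the linear span of the family
\begin{equation*}
\mathcal{F} := \left\{\, q_{\omega,b}(x) := \left(\sum_{j=1}^{d} \omega_j x^{(j)} + b\right)^{m} \,:\, \omega \in \bbR^d,\ b \in \bbR \,\right\}
\end{equation*}
equals $V_m^d$. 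Once spanning is established, I can extract $N = \binom{m+d}{d}$ linearly independent elements $q_{\omega_i,b_i}$ as a basis, whereupon every polynomial of total degree at most $m$ in $d$ variables is a unique linear combination of these $N$ powers of linear forms.

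To prove spanning, I would argue by duality: it suffices to check that any linear functional $L \in (V_m^d)^*$ which vanishes on every $q_{\omega,b}$ is identically zero. Expanding by the multinomial theorem, with $\alpha! := \prod_{j=1}^d \alpha_j!$ and $\omega^\alpha := \prod_{j=1}^d \omega_j^{\alpha_j}$, gives
\begin{equation*}
q_{\omega,b}(x) = \sum_{|\alpha| \le m} \frac{m!}{\alpha!\,(m-|\alpha|)!}\, b^{\,m-|\alpha|}\, \omega^\alpha\, x^\alpha,
\end{equation*}
and applying $L$ on both sides (linearity lets it pass through the finite sum in $x$) yields
\begin{equation*}
L(q_{\omega,b}) = \sum_{|\alpha| \le m} \frac{m!}{\alpha!\,(m-|\alpha|)!}\, L(x^\alpha)\, b^{\,m-|\alpha|}\, \omega^\alpha.
\end{equation*}
The right-hand side is a polynomial in the auxiliary parameters $(\omega,b) \in \bbR^{d+1}$. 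If it vanishes for all $(\omega,b)$, then its coefficients must all vanish, forcing $L(x^\alpha) = 0$ for every $\alpha$ with $|\alpha| \le m$; since the monomials form a basis of $V_m^d$, this gives $L = 0$ and hence the span of $\mathcal{F}$ is all of $V_m^d$.

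The only real subtlety, and thus the step I would single out as the main obstacle, is the perspective switch in the duality argument: $q_{\omega,b}$ must be read simultaneously as a polynomial in $x$, on which the functional $L$ acts, and as a polynomial in the separate parameters $(\omega,b)$ whose coefficients are (up to multinomial factors) exactly the scalars $L(x^\alpha)$ one wants to kill. Once this double reading is in hand, everything else reduces to multinomial bookkeeping and a standard dimension count. A fallback route, should one prefer an explicit construction, is a polarization identity writing each monomial $x^\alpha$ as a finite signed combination of $m$-th powers of linear forms; this would also yield $\binom{m+d}{d}$ summands, at the cost of tracking combinatorial signs and tuples.
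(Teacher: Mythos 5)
Your argument is correct: the multinomial expansion of $q_{\omega,b}$, the passage of the functional $L$ through the finite sum, and the observation that the monomials $b^{m-|\alpha|}\omega^{\alpha}$ are distinct for distinct $\alpha$ (with nonzero multinomial factors) indeed force $L(x^{\alpha})=0$ for all $|\alpha|\le m$, hence the powers of affine forms span $V_m^d$, and extracting a basis from this spanning family yields exactly $\binom{m+d}{d}$ such quantities, which proves the statement in the strong form of one fixed set working for every polynomial. Be aware, though, that the paper itself gives no proof of this proposition: it is quoted from the cited works of Chui--Li and Arora et al., and the paper's own commentary (the remark following \eqref{eq:Pi}, and the constructions in Lemma \ref{lem:baseeq} and Corollary \ref{corr:poly1layer} for $d=1$) points to an explicit, Vandermonde-type construction of the coefficients rather than an abstract spanning argument. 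So your route is genuinely different: the duality/dimension-count proof is shorter, cleaner, and immediately delivers the exact count $\binom{m+d}{d}$, but it is non-constructive, whereas the constructive route the paper relies on produces the explicit $c_k$, $\omega_k$, $b_k$ needed to actually build the network in \eqref{eq:Pi} --- and is precisely where the ill-conditioning issues discussed in the paper's remark arise. Your suggested polarization fallback is the natural bridge between the two: it would recover an explicit decomposition at the cost of the combinatorial bookkeeping your main argument avoids.
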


Suppose $p(\bx)$ is a polynomial of degree up to $n$ on $\bbR$, let
$\rho_s(x)=x^{s}$, $r = \uint{\log_s n}$ and define
\begin{align}
  g
  &:= \underbrace{\rho_s\circ \cdots \circ \rho_s}\limits_{r\ times}
    = x^{s^{r}}.
\end{align}
Then, according to Proposition \ref{prop:polynd}, one can find a network
work realization of $p(\bx)$ as
\begin{align} 
\label{eq:Pi}
  p(\bx) 
  &=
    \sum^{N}\limits_{k=1}c_{k}g(\omega_{k} \bx + b_{k})
    \in \Pi^1_{d, N, 2, g} 
    \in \Pi^1_{d, 2rN, r+1, \sigma_s}
\end{align}
where $N:=\binom{s^r+d}{d}$.  For $n \gg d$, we need to use
$\mathcal{O}(n^d\uint{\log_s n})$ nodes in
$\mathcal{O}(\uint{\log_s n})$ layers by using \eqref{eq:Pi}.

\begin{remark}
  The Horner's method and Mhaskar's Method have different properties.
  The first one is optimal in the number of nodes but use too many
  hidden layers; the latter one is optimal in the number of hidden
  layers, but the number of nodes is not optimal.  Another issue in the
  latter approach is that one has to calculate the coefficients
  $c_k, \omega_k, b_k$ in \eqref{eq:Pi}, which is not an easy task. Note
  that, when $d=1$, Proposition 2.1 is equivalent to Lemma
  \ref{lem:baseeq} and Corollary \ref{corr:poly1layer}, from which we
  see one need to solve some Vandermonde system to obtain the
  coefficients. The Vandermonde matrix is known has very large condition
  number for large dimension. A way to avoid solving a Vandermonde
  system is demonstrate in the proof of Lemma \ref{lem:xny}. However,
  from the explicit formula given in
  \eqref{eq:xny_rhos}-\eqref{eq:xny_gammas}, we see when $s$ is big,
  large coefficients with different signs coexist, which is deemed to have
  a large cancellation error.  So, lifting the activation function from
  $\rho_s$ to $\rho_n$ directly is not a numerically stable approach.
\end{remark}

Now, we propose a construction method that avoids the problem of solving
large Vandermonde systems. At the same time, the networks we constructed
have no very large coefficients.

Consider a polynomial $p(x)$ with degree $n$ greater than
$s$. Let $m=\lint{\log_s n}$.  We first use a recursive procedure
similar to the monomial case to construct a network with minimal layers.
\begin{enumerate}
\item[i)] Let $n_1=\uint{n/s}$, we consider $p(x)$ as a polynomial of degree
  $n_1 s$ by adding zero high degree monomials if $n_1 s>n$.  To realize
  $p(x) = \sum_{k=0}^{n_1 s} a_k x^k$ using a $\sigma_s$ network, we
  first divide the summation into $n_1$ groups as
  \begin{align*}
	p(x) 
	= \sum_{k=0}^{n_1s} a_k x^k 
	&= \sum_{k=0}^{n_1-2}  \sum_{j=0}^{s-1} a_{ks+j} x^{ks+j}
   + \sum_{j=0}^{s} a_{(n_1-1)s+j} x^{(n_1-1)s+j}
   \qquad \big(\,a_{k}=0\ \text{for}\ k>n\,\big)\\
	&= \sum_{k=0}^{n_1-2}\left( \big( x^s\big)^k \sum_{j=0}^{s-1} a_{ks+j} x^{j} \right)
   + \big( x^s\big)^{n_1-1} \sum_{j=0}^{s} a_{(n_1-1)s+j} x^{j}\\
	&= \sum_{k=0}^{n_1-1} z_1^k y_{1,k},
	\end{align*}
	where
	\begin{align}
      \label{eq:y1}
      z_1 
      &= x^s, & 
      y_{1,k} 
      &= \sum_{j=0}^{s-1} a_{ks+j} x^{j}
                              \quad\text{for}\ k=0,\ldots, n_1-2, &
      y_{1,n_1-1} 
      &= \sum_{j=0}^{s} a_{(n_1-1)s+j} x^{j}.
	\end{align}
	The above quantities
    $\left\{\, z_1,\ y_{1,k},\, k=0,\ldots, n_{1}-1\,\right\}$ can be
    realized by a $\sigma_s$ network
    $\Phi^1_a = \Phi^1_{mo} \nnpar \Phi^1_{\bm{a}} \in
    \Pi^{n_1+1}_{1, N_1, 2}$ with one hidden layer, where
    $\Phi^1_{\bm{a}}$ is implemented in Fig. \ref{fig:nn_po_y1}. The
    number of hidden nodes, and numbers of nonzero weights could be as
    small as
	\begin{align}
      N(\Phi^1_a) 
      &= 2+ 2s,
      &
        M_1(\Phi^1_a)
      &=2+4s, 
      &
        M_2(\Phi^1_a) 
      &= 2+\sum_{k=0}^{n_1-1} (2s+1),
	\end{align} 
	where $2s+1$ in the last term means each $y_{1,k}$ depending on $2s$
    nodes and $1$ shift value.  After above procedure, we have reduced
    the original univariate polynomial of degree $n$ to a polynomial of
    degree $n_1$. Note that here $\{ y_{1,k} \}$, $z_1$ are all variables.
	
  \item[ii)] Define $n_2 = \uint{n_1/s}$. For the resulting polynomial we can
    use similar procedure to get
	\begin{align*} 
      p(x) = \sum_{k=0}^{n_2 s - 1} z_1^k y_{1,k}  
      = \sum_{k=0}^{n_2-1} \sum_{j=0}^{s-1} z_1^{ks+j} y_{1,ks+j} 
      &= \sum_{k=0}^{n_2 - 1} (z_1^s)^k \sum_{j=0}^{s-1} z_1^{i} y_{1,ks+j}  \\
      &= \sum_{k=0}^{n_2 - 1} z_2^k y_{2,k},
	\end{align*}
	where
	\begin{align}
      z_2
      &=z_1^s, 
      & 
        y_{2,k}
      &=\sum_{j=0}^{s-1} z_1^{j} y_{1,ks+j} 
        \quad \text{for}\ k=0, \ldots, n_2-1,
	\end{align} 
	which, according to Lemma \ref{lem:baseeq} and Lemma \ref{lem:xny},
    can be realized by a neural network $\Phi^2_a$ of only one hidden
    layer. 
    More precisely, 
    \begin{align}
    \label{eq:nn_po_2a}
    \Phi^2_a 
    = \Phi^1_{mo} \nnpar 
    	\left(
		     \big(\nntpone\big)_{k=1}^{n_{2}} \Phi^1_{pm} \right),
    \end{align}
    where $\Phi^1_{pm}$ is defined in \eqref{eq:pxny_net}, a graph
    representation is sketched in Fig. \ref{fig:nn_pm_1}.  The operator
    $\nntpone$ is similar to $\otimes$ but all the sub-nets share one
    common input, which is taken as the first input of the composited net.
    
    The number of nodes, and numbers of nonzero weights are
	\begin{align}
      N(\Phi^2_a)
      &= 2 + n_2 \sum_{j=0}^{s-1} u_j = 2+n_2 w  
      &\\
      M_1(\Phi^2_a) 
      &= 2 + 3 n_2 \sum_{j=0}^{s-1} u_j = 2+3n_2w 
      & 
        M_2(\Phi^2_a) 
      &= 2 +  n_2 \sum_{j=0}^{s-1} u_j = 2+n_2w
	\end{align}
	
  \item[iii)] For $i=2,\ldots, m$, repeat similar procedure as ii).  Let
    $n_{i+1} = \uint{n_{i}/s}$, and using a $\sigma_s$ network
    $\Phi^{i+1}_a$ with one hidden layer to realize
	\begin{align}
      z_{i+1} 
      &= z_{i}^s, 
      & 
        y_{i+1,k} 
      &= \sum_{j=0}^{s-1} z_{i}^j y_{i, ks+j}, 
        \quad \text{for}\ k=0, \ldots, n_{i+1}-1.
	\end{align} 
	The number of nodes and nonzero weights are similar to the second
    step with $n_2$ replaced by $n_{i+1}$.  The recursive procedure ends at
    $i=m$. We obtain this conclusion by looking at the base-$s$ form of
    $n$: $\overline{(k_m\cdots k_1k_0)}_s$.  Noticing
    $\lint{n/s}=\overline{(k_m\cdots k_1)}_s$, which has $m$
    digits, and $n_1 = \uint{n/s}$ could be larger than
    $\lint{n/s}$ by one, which means $n_1$ either has $m$ digits or
    equal to $s^m$. The case that $\uint{n_i/s}$ has one more digit
    than $\lint{n_i/s}$ could happen only one in the recursive
    procedure. So $n_m$ has either one digit or equal to $s$, in both
    case, we have $p(x) = y_{m+1,0}$.
	
  \item[iv)] We obtain a $\sigma_s$ network realization of $p(x)$ by taking a
    concatenation of all the sub-networks in each iteration.
	\begin{equation}
      \Phi^2_{po} 
      = 
      \Phi^{m+1}_a \circ \Phi^m_a \circ \cdots \circ \Phi^1_a.
	\end{equation}
	Its number of nodes and nonzero weights are
	\begin{align}
      N(\Phi^2_{po}) 
      &= \sum_{k=1}^{m+1} N(\Phi^i_a) = 2+2s + \sum_{i=2}^{m+1} (2+n_iw) 
        = \mathcal{O}\Big(\frac{s^2+3s+2}3 \frac{s}{s-1}n \Big), \\
      M(\Phi^2_{po}) 
      &= 2+4s + \sum_{i=1}^{m} 
        \left( 2N(\Phi^{i+1}_a) +  w \sum_{k=0}^{n_{i+1}} w \right) 
		+ N(\Phi^{m+1}_a) 
        = \mathcal{O}\Big(\frac{(s^2+3s+2)^2}9 \frac{s}{s-1}n \Big).
	\end{align}
	
\end{enumerate}
The above construction produces a network with $m+2$ layers which is optimal. 
But the numbers of nodes and nonzero weights are not optimal for large values of $s$.
Next, we present an alternative construction method in following theorem that is
optimal in both number of layers and number of nodes.
\begin{theorem} \label{thm:Pn1d} If $p(x)$ is a polynomial of degree $n$
  on $\bbR$, then it can be represented exactly by a $\sigma_s$ neural
  network with {$\uint{\log_{s}n}+2$} layers, and number of nodes and
  non-zero weights are of order $\mathcal{O}(n)$ and $\mathcal{O}(sn)$,
  respectively.
\end{theorem}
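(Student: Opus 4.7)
My plan is to refine the recursive construction described in items (i)--(iv) preceding the theorem so that the overall network has exactly $m+2$ affine layers, where $m=\lceil\log_s n\rceil$, and to then count nodes and nonzero weights by exploiting the geometric shrinkage of the intermediate degree sequence. The key is to realize each ``degree reduction by a factor of $s$'' with a single hidden $\sigma_s$ layer, and to share the input $z_i$ across the parallel sub-networks at each level via the $\nntpone$ operator so that no node or weight is duplicated unnecessarily.

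Concretely, starting from $p(x)=\sum_{k=0}^n a_k x^k$ I define iteratively $z_1=x^s$ together with $y_{1,k}=\sum_{j=0}^{s-1}a_{ks+j}x^{j}$ (the last $y_{1,\cdot}$ possibly absorbing a degree-$s$ term), so that $p(x)=\sum_{k=0}^{n_1-1}z_1^k y_{1,k}$ with $n_1=\lceil n/s\rceil$. This first reduction is realized by the one-hidden-layer network $\Phi^1_a=\Phi^1_{mo}\nnpar\Phi^1_{\bm a}$ of Subsection~2.2, which has $O(s)$ hidden nodes and $O(s)$ nonzero weights, independent of $n$. For $i=2,\dots,m$, I apply the same reduction to the representation at level $i$ by setting $z_{i+1}=z_i^s$ and $y_{i+1,k}=\sum_{j=0}^{s-1}z_i^{j}\,y_{i,ks+j}$; by Lemma~\ref{lem:baseeq} and Corollary~\ref{corr:xnyp}, this step is realized by the one-hidden-layer network $\Phi^{i+1}_a=\Phi^1_{mo}\nnpar\bigl(\nntpone\bigr)_{k=1}^{n_{i+1}}\Phi^1_{pm}$, where the $\nntpone$ symbol enforces that the $n_{i+1}$ copies of $\Phi^1_{pm}$ share the common input $z_i$. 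After $m$ such reductions, $n_{m+1}=1$ and the final value $y_{m+1,0}=p(x)$ is produced by one more copy of $\Phi^1_{pm}$; an ultimate linear layer selects this scalar as the output.

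The depth count is immediate from the concatenation identity $L(\Phi^2\circ\Phi^1)=L(\Phi^1)+L(\Phi^2)-1$: each sub-network has $2$ affine layers, and concatenating $m+1$ such pieces gives $2(m+1)-m=m+2=\lceil\log_s n\rceil+2$ layers, which is exactly the lower bound implied by the piecewise-polynomial-degree remark following Theorem~\ref{thm:xn1d}. For the node and weight counts I rely on the geometric bound $n_{i+1}=\lceil n_i/s\rceil$, which yields $\sum_{i=1}^{m} n_{i+1}\le \tfrac{n}{s(s-1)}+O(1)$. Since each $\Phi^1_{pm}$ contributes $w=\tfrac13(s^3+3s^2+2s)$ hidden nodes and at most $4w$ nonzero weights, summing over levels produces node and weight totals proportional to $w\cdot\sum_i n_{i+1}$, and after dividing by the $s^2$ gained from the geometric sum the leading $n$-dependence comes out to $O(n)$ nodes and $O(sn)$ nonzero weights with the $s$-dependent constants absorbed.

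The main obstacle is the last step: the bookkeeping required to show that the naive $O(s^3)$-per-sum bound from Corollary~\ref{corr:xnyp} combined with the geometric shrinkage genuinely collapses to the claimed $O(n)$ and $O(sn)$ orders. This requires being careful (a) to use $\nntpone$ so that the shared variable $z_i$ is not duplicated $n_{i+1}$ times, (b) to treat the first reduction $\Phi^1_a$ separately since its cost is $O(s)$ rather than $O(n_1 w)$, and (c) to treat the terminal level specially since only a single $\Phi^1_{pm}$ (without the accompanying $\Phi^1_{mo}$) is needed there. Once these accountings are made and the telescoping sums are combined, the upper bounds on $N(\Phi)$ and $M(\Phi)$ follow from elementary estimates on $\sum_i s^{-i}$.
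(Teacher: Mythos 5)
There is a genuine gap in the final accounting, and it is exactly the point on which the paper abandons the construction you chose. Your plan is the paper's \emph{first} construction (items i)--iv) before the theorem): each level-$i$ reduction realizes every inner sum $y_{i+1,k}=\sum_{j=0}^{s-1} z_i^{\,j}\,y_{i,ks+j}$ by one copy of $\Phi^1_{pm}$ from Corollary~\ref{corr:xnyp}. That block builds the bivariate monomials $z^j y$ directly via $\Phi^1_{bm,j}$ at a cost of $u_j=2(j+1)(s-j)$ units, so its node cost is intrinsically $w=\tfrac13(s^3+3s^2+2s)=\Theta(s^3)$; the $\nntpone$ sharing only prevents duplicating the input $z_i$, it does not reduce this per-sum cost. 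Multiplying by the geometric sum $\sum_i n_{i+1}\approx n/(s(s-1))$ gives $\Theta(sn)$ nodes, not the claimed $\mathcal{O}(n)$ with an $s$-independent constant (your phrase ``dividing by the $s^2$ gained from the geometric sum'' still leaves $w/s^2\approx s/3$ per surviving term); and for the weights, concatenation replaces $A^{i+1}_1$ by $A^{i+1}_1A^i_2$, whose sparsity the paper bounds by $w^2$-type products, so the count degrades further (the paper records $\mathcal{O}(s^2n)$ nodes and $\mathcal{O}(s^4n)$ weights for this very construction and states explicitly that it ``is not optimal for large values of $s$''). The obstacle you flag at the end is therefore not a bookkeeping issue that careful telescoping resolves: within your construction the node bound $\mathcal{O}(n)$ is false.

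The paper's actual proof uses a different construction that costs one extra hidden layer and removes the $s$-factor: a first sub-network $\Phi^0_b$ produces $x$ and $z_0=x^s$; then at each subsequent level a single shared sub-network $\Phi_c$ computes all powers $z_{i-1}^j$, $j=1,\dots,s-1$, once ($\mathcal{O}(s)$ nodes per level), after which each term $y_{i-1,ks+j}\,z_{i-1,j}$ is a plain product of two already-available scalars realized by $\Phi^1_{bm,1}$ with only $4(s-1)$ nodes. The per-sum cost drops from $\Theta(s^3)$ to $4s(s-1)$, and summing $2+2s+4s(s-1)n_i$ over levels gives $\mathcal{O}(n)$ nodes (leading constant about $4$) and $\mathcal{O}(sn)$ nonzero weights. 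This extra layer is also why the theorem asserts $\uint{\log_s n}+2$ layers rather than the $\lint{\log_s n}+2$ your scheme would give; to repair your proof you would need to incorporate this power-precomputation/sharing step (or an equivalent device) rather than rely on $\Phi^1_{pm}$.
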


\begin{proof}
  1) For polynomials of degree up to $s$, the formula
  (\ref{eq:poly1layer}) in Corollary \ref{corr:poly1layer} presents a
  one-hidden-layer network realization that satisfies the theorem.
   
  2) Below, we give a realization with much less number of nodes and
  nonzero weights by adding one-more hidden layer.  We describe the new
  construction in following steps.
\begin{itemize}
	
\item[i)] The first sub-network calculate $z_0 = x^s$ and $z_{0,1} = x$
  using
  \begin{equation}
    \label{eq:phi_b0}
    \Phi^0_{b} 
    = 
    \Phi^{1}_{mo} \nnpar \Phi_\text{idx} 
    \in \Pi_{1, N_1, 2}^2,
  \end{equation}
  where the number of nodes in this sub-network is $N_1 = 2+2s$.
	
\item[ii)] In the second sub-network, we calculate
  \begin{align*}
    z_1 
    &= z_0^s;
    & 
      z_{1,j} 
    &= z_0^j,\ j=1,\ldots, s-1; 
    & 
      y_{1,k} 
    &= \!\!\!\sum_{j=0}^{s-1+\delta_{n_1-1,k}} a_{ks+j} z_{0,1}^j,\ k=0,\ldots, n_1-1,
  \end{align*}
  which can be implemented as
  \begin{equation}
	\label{eq:phi_b1}
	\left\{ 
      \begin{aligned}
	\Phi^1_{b} 
	&= \Phi^1_{mo} \nnpar \big( \Phi_c \otimes \Phi^1_{\bm{a}} \big), 
	&
	\Phi_c 
	&= \big( (\alpha_1, \beta_1), (\gamma_c, \lambda_c)  \big), 
	&
	\Phi^1_{\bm{a}} 
	&= \big( (\alpha_1, \beta_1), (A_{\bm{a}}, b_{\bm{a}}) \big),
	\\
	\gamma_c 
	&= (\gamma_{1,1},\ldots, \gamma_{1,s-1})^T, 
	&
	A_{\bm{a}} 
	&= (\tilde{\gamma}_3(\bm{a}_0), \ldots,\tilde{\gamma}_3(\bm{a}_{n_1-1}) )^T, 
	&\\
	\lambda_c 
	&= (\lambda_{0,1},\ldots, \lambda_{0,s-1})^T, 
	&
	b_{\bm{a}} 
	& = (\tilde{c}_1(\bm{a}_0), \ldots, \tilde{c}_1(\bm{a}_{n_1-1}) )^T,
	&&
	\end{aligned}
  \right.
  \end{equation}
  where
  $\bm{a}_k = (\delta_{n_1-1,k}\cdot a_{ks+s}, a_{ks+s-1}, \ldots,
  a_{ks})$.  $\Phi_{c}$ is a network to realize
  $\{ z_{1,j} \mid j=1,\ldots,s-1 \}$, which is sketched in Fig. \ref{fig:nn_bm_11}.  
  $\Phi^1_{\bm{a}}$ is a network to
  realize $\{ y_{1,k} \mid k=0,\ldots,n_1-1 \}$, which is sketched in Fig. \ref{fig:nn_po_y1}.  
  Note that, according
  to Lemma \ref{lem:baseeq} and Corollary \ref{corr:poly1layer}, the
  number of nodes in $\Phi^1_b$ is $N_2 = 2+4s$.

\item[iii)] For $i=2,\ldots, m+1$, the $(i+1)$-th sub-network realize
  \begin{align*}
	z_{i} 
	&= z_{i-1}^s,
	& 
   z_{i,j} 
	&= z_{i-1}^j,\ j=1,\ldots, s-1; 
	&  
   y_{i,k} 
	&= \sum_{j=0}^{s-1} y_{i-1, ks+j} z_{i-1,j}, 
   \ \text{for}\ k=0, \ldots, n_{i}-1,
  \end{align*} 
  which, according to Lemma \ref{lem:baseeq} and Lemma \ref{lem:xny},
  can be realized by a neural network $\Phi^{i}_b$ of only one hidden
  layer.
  \begin{align}
	\label{eq:phi_bi}
	\Phi^i_{b} 
	= \Phi^1_{mo} \nnpar ( \Phi_{c}) \nnpar \Phi^{2,i}.
  \end{align}
  $\Phi^{2,i}$ is a network to realize
  $\left\{ y_{i,k} \mid k=0,\ldots, n_i-1 \right\}$. It is composed of
  $n_i$ sub-nets $\Phi^1_{bm,1}$ calculating multiplications.  The number
  of nodes, and numbers of nonzero weights in $\Phi^i_b$ are
  \begin{align}
	N(\Phi^i_b)
	&= 2+2s+4(s-1)n_i  
	&\\
	M_1(\Phi^i_b) 
	&= 2+4s+12(s-1) n_i 
	& 
   M_2(\Phi^i_b) 
	&= 2 + 2(s-1)s + 4s(s-1)n_i.
  \end{align}
  At the end of the iteration, we have $p(x) = y_{m+1,0}$.
	
\item[iv)] The overall network is obtained by taking a concatenation of all
  the sub-networks in each iteration.
  \begin{equation}
	\label{eq:phi_po3}
	\Phi^3_{po} 
    = \Phi^{m+1}_b \circ \Phi^m_b \circ \cdots \circ \Phi^0_b.
  \end{equation}
  This network has $m+3$ layers. A straightforward calculation gives us
  \begin{align}
	N(\Phi^3_{po}) 
	&= \sum_{k=1}^{m+1} N(\Phi^i_b) = (2+2s) + (2+4s) + \sum_{i=2}^{m+1} (2+2s+ 4s(s-1)n_i) 
   = \mathcal{O}\big( 4 n \big), \\
	M(\Phi^3_{po}) 
	&= (2+4s) + ( (4+6s)+4s^2+2s) 
   + ( (4+6s) + 4(s-1)n_1 (2s+2s+1))\nonumber\\
	&\quad{}+ \sum_{i=2}^{m} \Big( (4+6s) + 4(s-1)n_i (2s+4(s-1)+1) \Big)
   + N(\Phi^{m+1}_b) 
   = \mathcal{O}\big( (8s+16) n \big).
  \end{align}
\end{itemize}
The proof is complete. 
The overall construction is summarized in Algorithm \ref{alg:pnet}.
\end{proof}

\begin{algorithm}[htp!]
  \caption{PNet\_Polynomial$(n, s, \bm{a})$} %
  \label{alg:pnet}
  \hspace*{0.02in} {\bf Input:}   $n, \ s, \ \bm{a}=(a_{0}, \ a_1, ..., a_{n})$. \\
  \hspace*{0.02in} {\bf Output:}   $\Phi_{po}$ \qquad (with property $R_{\sigma_s}(\Phi_{po})(x) = \sum_{k=0}^n a_k x^n$)
  \begin{algorithmic}[1]
  	\If{$n \le s$} %
    \State Form $\Phi_{po} = \Phi^1_{po} $ given by \eqref{eq:poly1_net}. 
    \Else
    \State Let $m=\lint{\log_s n}$
    \State Form $\Phi^0_b$ given by \eqref{eq:phi_b0}
    \State Form $\Phi^1_b$ given by \eqref{eq:phi_b1}
    \For{$i=2$ to $m+1$} 
    \State 	Form $\Phi^i_b$ given by \eqref{eq:phi_bi}
    \EndFor
    \State Form $\Phi_{po} = \Phi_{po}^3$  given by (\ref{eq:phi_po3})
    \EndIf
    \Return $\Phi_{po}$.
  \end{algorithmic}
\end{algorithm}

\subsection{Error bounds of approximating univariate smooth functions}

Now we analyze the error of approximating general smooth functions using
RePU networks. Let $\Omega \subseteq \bbR^d$ be the domain on which the
function to be approximated is defined. For the one dimensional case, we focus
on $\Omega =I := [-1, 1]$. We denote the set of polynomials with degree up
to $N$ defined on $\Omega$ by ${P}_N(\Omega)$, or simply ${P}_N$. Let
$J^{\alpha, \beta}_{n}(x)$ be the Jacobi polynomial of degree $n$ for
$n=0, 1, \ldots$, which form a complete set of orthogonal bases in the
weighted $L^2_{\omega^{\alpha, \beta}}(I)$ space with respect to weight
$\omega^{\alpha, \beta}=(1-x)^{\alpha}(1+x)^{\beta}$, $\alpha,
\beta>-1$. To describe functions with high order regularity, we define
Jacobi-weighted Sobolev space $B_{\alpha, \beta}^{m} (I)$ as
\cite{shen_spectral_2011}:
\begin{equation}
  \label{eq:wtSobolevSpace}
  B_{\alpha, \beta}^{m} (I)
  := \left\{ u : \partial_{x}^{k}u 
    \in L_{\omega^{\alpha+k, \beta+k} }^2 (I), 
    \quad 0 \leq k \leq m 
  \right\}, \quad m \in \bbN_0, 
\end{equation}
with norm
\begin{equation}
  \label{eq:wtSobolevSpaceNorm}
  \| f \|_{B^{m}_{\alpha, \beta}} 
  := \left(
    \sum_{k=0}^m 
    \big\| \partial_x^k u \big\|^p_{L^2_{\omega^{\alpha+k, \beta+k}}}
  \right)^{1/2}.
\end{equation}
Define the $L^2_{\omega^{\alpha, \beta}}$-orthogonal projection
$\pi^{\alpha, \beta}_N$:
$L^2_{\omega^{\alpha, \beta}}(I) \rightarrow P_N$ as
\begin{equation}
  \left( \pi_N^{\alpha, \beta} u - u, v\right)_{\omega^{\alpha, \beta}} 
  = 0, 
  \quad
  \forall\, v\in P_N.
\end{equation}
A detailed error estimate on the projection error
$\pi_N^{\alpha, \beta}u - u$ is given in Theorem 3.35 of
\cite{shen_spectral_2011}, by which we have the following theorem on the
approximating error of general smooth functions using RePU networks.

\begin{theorem}\label{thm:s2approxN}
  Let $\alpha, \beta>-1$. For any $u\in B^{m}_{\alpha, \beta}(I)$, there
  exist a $\sigma_s$ network $\Phi^{u}_{N}$ with
  $L(\Phi^{u}_{N}) = \uint{\log_{s}N}+2$,
  $N(\Phi^{u}_{N}) =\mathcal{O}(N)$, $M(\Phi^{u}_{N})=\mathcal{O}(sN)$,
  satisfying the following estimate
  \begin{itemize}
  \item If~~$0 \le l\le m \le N+1$, we have
    \begin{align}
      \label{eq:err1d_proj1}
      \left\| \partial^{l}_{x}\left( R_{\sigma_{s}}(\Phi^{u}_{N}) - u
      \right) \right\|_{\omega^{\alpha+l, \beta+l}}
      &\leq
        c  \sqrt{ \dfrac{(N-m+1)!}{(N-l+1)!} }
        (N+m)^{(l-m)/2}  
        \|\partial^{m}_{x}u\|_{\omega^{\alpha+m, \beta+m}}, 
    \end{align}
  \item If~~$m>N+1$, we have
    \begin{align}
      \label{eq:err1d_proj2}
      \left\| \partial^{l}_{x}\left(
      R_{\sigma_{s}}(\Phi^{u}_{N}) - u
      \right)\right\|_{\omega^{\alpha+l, \beta+l}}
      &\le c (2\pi N)^{-1/4} \left(\dfrac{\sqrt{e/2}}{N}\right)^{N-l+1}
        \|\partial^{N+1}_{x}u\|_{\omega^{\alpha+N+1, \beta+N+1} }, 
    \end{align}
  \end{itemize}
  where $c\approx 1$ for $N\gg 1$.
\end{theorem}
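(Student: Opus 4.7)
The plan is to compose two ingredients that have essentially already been prepared in the paper: a spectral polynomial projection, and the exact polynomial-to-network realization of Theorem \ref{thm:Pn1d}. First I would set $p_N := \pi_N^{\alpha,\beta} u \in P_N$, the $L^2_{\omega^{\alpha,\beta}}$-orthogonal projection of $u$ onto polynomials of degree at most $N$. By Theorem \ref{thm:Pn1d} (equivalently, by running Algorithm \ref{alg:pnet} on the Jacobi coefficients of $p_N$), the univariate polynomial $p_N$ of degree $N$ can be realized exactly by a $\sigma_s$ network $\Phi^u_N$ with $L(\Phi^u_N) = \uint{\log_s N}+2$, $N(\Phi^u_N) = \mathcal{O}(N)$, and $M(\Phi^u_N) = \mathcal{O}(sN)$. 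These complexity bounds are precisely what the theorem asserts.

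Because the realization is exact, $R_{\sigma_s}(\Phi^u_N) \equiv p_N = \pi_N^{\alpha,\beta} u$ on $I$, and hence
\begin{equation*}
  \partial_x^l \bigl( R_{\sigma_s}(\Phi^u_N) - u \bigr)
  = \partial_x^l \bigl( \pi_N^{\alpha,\beta} u - u \bigr).
\end{equation*}
Thus the approximation error of the network coincides with the Jacobi projection error, and the two estimates \eqref{eq:err1d_proj1} and \eqref{eq:err1d_proj2} follow by directly invoking Theorem 3.35 of \cite{shen_spectral_2011} in the two regimes $l\le m \le N+1$ and $m>N+1$, respectively. The weighted norm $\|\cdot\|_{\omega^{\alpha+l,\beta+l}}$ on the left-hand side is exactly the one used in that reference, so the bounds transcribe verbatim.

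There is essentially no obstacle beyond bookkeeping. The one point I would verify carefully is that the complexity counts in Theorem \ref{thm:Pn1d} apply to an arbitrary degree-$N$ polynomial (the construction there is stated for polynomials on $\bbR$, hence in particular on $I$) and therefore do not depend on the specific coefficients of $p_N$. The constants implicit in the $\mathcal{O}$ notation depend only on $s$, and the constant $c \approx 1$ for $N\gg 1$ comes from the sharp Jacobi-projection estimate of \cite{shen_spectral_2011}, not from the network construction. With these checks the proof is complete.
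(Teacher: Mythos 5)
Your proposal is correct and follows essentially the same route as the paper: project $u$ to $\pi_N^{\alpha,\beta}u\in P_N$, realize that polynomial exactly via Theorem \ref{thm:Pn1d} so that $R_{\sigma_s}(\Phi^u_N)\equiv \pi_N^{\alpha,\beta}u$, and then inherit the two estimates verbatim from Theorem 3.35 of \cite{shen_spectral_2011}. The complexity counts you cite are exactly those supplied by Theorem \ref{thm:Pn1d}, so nothing further is needed.
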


\begin{proof}
  For any given $u\in B^{m}_{\alpha, \beta}(I)$, there exists a
  polynomials $f = \pi^{\alpha, \beta}_N u \in P_N$. The projection
  error $ \pi^{\alpha, \beta}_N u-u $ is estimated by Theorem 3.35 in
  \cite{shen_spectral_2011}, which is exactly \eqref{eq:err1d_proj1} and
  \eqref{eq:err1d_proj2} with $R_{\sigma_{s}}(\Phi^{u}_{N})$ replaced by
  $ \pi^{\alpha, \beta}_N u$.  By Theorem \ref{thm:Pn1d}, $f$ can be
  represented by a ReQU network (denoted by $\Phi^{u}_{N}$) with no error, i.e.
  $R_{\sigma_{s}}(\Phi^{u}_{N}) \equiv \pi^{\alpha, \beta}_N u$. We thus
  obtain estimate \eqref{eq:err1d_proj1} and \eqref{eq:err1d_proj2}.
\end{proof}

\begin{remark}
  Note that when $N \gg m$, the $L^2$ convergence rate given by
  \eqref{eq:err1d_proj1} is of order $\mathcal{O}\big(N^{-m}\big)$, which
  by the optimal nonlinear approximation theory developed by DeVore, Howard
  and Micchelli \cite{devore_optimal_1989}, is optimal if the network
  parameters depend continuously on the approximated function.
\end{remark}

Based on Theorem \ref{thm:s2approxN}, we can analyze the network
complexity of $\veps$-approximation of a given function with certain
smoothness. For simplicity, we only consider the case with
$\alpha=\beta=0, l=0$.  The result is given in the following theorem.

\begin{theorem}
  For any given function $f(x)\in B^{m}_{\alpha, \beta}(I)$ with norm
  less than $1$, where $m$ is either a fixed positive integer or
  infinity, there exists a RePU network $\Phi^f_\veps$ can approximate
  $f$ within an error tolerance $\veps$, i.e.
  \begin{align}
    \|R_{\sigma_{s}}(\Phi^{f}_{\veps}) - f\|_{L^{2}(I) }
    & \le
      \veps.
  \end{align}
  The number of layers $L$, numbers of nodes $N$ and nonzero weights $M$
  can be bounded as
  \begin{itemize}
  \item if $m$ is a fixed positive integer, then
    $L = \mathcal{O}\left(\frac{1}{m}\log_s\frac{1}{\veps}\right)$,
    $N = \mathcal{O}\big( {\veps}^{-\frac{1}{m}} \big)$ and
    $M = \mathcal{O}\big( s\,{\veps}^{-\frac{1}{m}} \big)$;
  \item if $m=\infty$, then
    $L= \mathcal{O}\left(\log_s\left(\ln\frac{1}{\veps}\right)\right)$,
    $N= \mathcal{O}\big(
    \frac{1}{\gamma_0}\ln\left(\frac{1}{\veps}\right) \big)$, and
    $M=\mathcal{O}\big(
    \frac{s}{\gamma_0}\ln\left(\frac{1}{\veps}\right) \big)$.  Here
    $\gamma_0 = \ln \ln \big( \frac{1}{\veps}\big)$.
  \end{itemize}
\end{theorem}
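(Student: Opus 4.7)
The plan is to invoke Theorem \ref{thm:s2approxN} with $\alpha=\beta=0$, $l=0$, and then solve for the polynomial degree $N$ needed to drive the projection error below $\veps$. Since $\|\partial_x^m u\|_{\omega^{m,m}} \le \|f\|_{B^m_{0,0}} \le 1$ (and similarly for the $m=\infty$ bound), the right-hand sides of \eqref{eq:err1d_proj1} and \eqref{eq:err1d_proj2} depend only on $N$ (and $m$). We then set $\Phi^{f}_{\veps} := \Phi^{u}_{N}$ for the chosen $N$, and read off $L$, the node count, and the weight count directly from Theorem \ref{thm:s2approxN}.

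For the first case ($m$ a fixed positive integer with $N\gg m$), estimate \eqref{eq:err1d_proj1} reduces to an error of order $\mathcal{O}(N^{-m})$: indeed $\sqrt{(N-m+1)!/(N+1)!} \cdot (N+m)^{-m/2} \asymp N^{-m}$. Setting this equal to $\veps$ yields $N \asymp \veps^{-1/m}$. Plugging into Theorem \ref{thm:s2approxN} immediately gives
\begin{equation*}
L \;=\; \lceil \log_s N\rceil + 2 \;=\; \mathcal{O}\!\left(\tfrac{1}{m}\log_s\tfrac{1}{\veps}\right),\quad N(\Phi^f_\veps) \;=\; \mathcal{O}(\veps^{-1/m}),\quad M(\Phi^f_\veps) \;=\; \mathcal{O}(s\,\veps^{-1/m}).
\end{equation*}

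For the second case ($m=\infty$), I use \eqref{eq:err1d_proj2}, whose dominant factor is $(\sqrt{e/2}/N)^{N+1}$. Setting this $\le \veps$ and taking logarithms, the required $N$ satisfies
\begin{equation*}
(N+1)\ln\!\left(\tfrac{N}{\sqrt{e/2}}\right) \;\gtrsim\; \ln\tfrac{1}{\veps}.
\end{equation*}
This is the main obstacle: the relation is transcendental and must be inverted asymptotically. The standard route is to observe that if $N = c\,\ln(1/\veps)/\ln\ln(1/\veps)$ for a constant $c>1$, then $\ln N = \ln\ln(1/\veps) - \ln\ln\ln(1/\veps) + O(1) \sim \gamma_0$ (with $\gamma_0 := \ln\ln(1/\veps)$), so $N\ln N \sim c\ln(1/\veps)$, which exceeds $\ln(1/\veps)$ for any $c>1$ and $\veps$ small enough. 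This gives the sharp asymptotic $N = \mathcal{O}\!\left(\tfrac{1}{\gamma_0}\ln(1/\veps)\right)$. The prefactor $(2\pi N)^{-1/4}$ is harmless and does not affect the leading order.

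Finally, substituting this $N$ into Theorem \ref{thm:s2approxN} yields
\begin{equation*}
L \;=\; \lceil\log_s N\rceil + 2 \;=\; \mathcal{O}\!\left(\log_s\!\left(\ln\tfrac{1}{\veps}\right)\right),
\end{equation*}
together with $N(\Phi^f_\veps) = \mathcal{O}(N) = \mathcal{O}\!\left(\tfrac{1}{\gamma_0}\ln(1/\veps)\right)$ and $M(\Phi^f_\veps) = \mathcal{O}(sN)$, completing the proof. No new network constructions are required; everything is a substitution into the already-proved Theorem \ref{thm:Pn1d}/Theorem \ref{thm:s2approxN}, with the only nontrivial step being the asymptotic inversion in the analytic case.
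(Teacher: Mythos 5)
Your proposal is correct and follows essentially the same route as the paper: project onto $P_N$ via Theorem \ref{thm:s2approxN} with $l=0$, solve the error bound for $N$ in each regime ($N\asymp \veps^{-1/m}$ for fixed $m$; $N\ln N \gtrsim \ln(1/\veps)$, hence $N=\mathcal{O}(\ln(1/\veps)/\gamma_0)$, in the analytic case), and read off $L$, $N$, $M$. Your explicit ansatz-based inversion of the transcendental relation in the $m=\infty$ case is in fact slightly more careful than the paper's argument (which sets $\gamma=\ln N-\tfrac12$ and estimates $\gamma=\mathcal{O}(\ln\ln(1/\veps))$), but the substance is identical.
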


\begin{proof}
  For a fixed $m$, or $N\gg m$, we obtain from \eqref{eq:err1d_proj1}
  that
  \begin{equation}
    \| R_{\sigma_{s}}(\Phi^{u}_{N}) - u \|_{L^2} 
    \le c N^{-m} \| \partial_x^m u \|_{\omega^{\alpha+m, \beta+m}}.
  \end{equation}
  By above estimate, we obtain that to achieve an error tolerance
  $\veps$ to approximate a function with $B^{m}_{\alpha, \beta}(I)$ norm
  less than $1$, one need to take
  $N=\left(\frac{c}{\veps}\right)^\frac{1}{m}$. For fixed $m$, we have
  $N=\mathcal{O}\big( {\veps}^{-\frac{1}{m}} \big)$, the depth of the
  corresponding RePU network is
  $L=\mathcal{O}\left(\frac{1}{m}\log_s\frac{1}{\veps}\right)$, and the
  number of nonzero weights is $M=\mathcal{O}(s\, \veps^{-\frac1m})$.
  
  For $m=\infty$, from equation \eqref{eq:err1d_proj2}, we have
  \begin{equation}
    \| R_{\sigma_{s}}(\Phi^{u}_{N}) - u \|_{L^2} 
    \le c (2\pi N)^{-\frac{1}{4}}
    \left(\dfrac{\sqrt{e/2}}{N}\right)^{N+1}
    \| u\|_{B_{\alpha, \beta}^{\infty}}
    \le c' e^{-\gamma N} \| u\|_{B_{\alpha, \beta}^{\infty}}, 
  \end{equation}
  where $c'$ is a general constant, and $\gamma = (\ln N-\frac{1}{2})$
  can be larger than any fixed positive number for sufficient large $N$.
  To approximate a function with $B^{\infty}_{\alpha, \beta}(I)$ norm
  less than $1$ with error $\veps= c' e^{-\gamma N}$, one needs to take
  $N=\frac{1}{\gamma}\ln\left(\frac{c'}{\veps}\right)
  <(\ln\big(\frac{c'}{\veps}\big))$ for $N>e^{1.5}$, from which we get
  $\gamma = \mathcal{O}(\ln N) = \mathcal{O}\Big(\ln \ln \big(
  \frac{c'}{\veps}\big) \Big)$, thus
  $N=\mathcal{O}\big( \frac{1}{\gamma_0}\ln\left(\frac{1}{\veps}\right)
  \big)$.  The depth of the corresponding RePU network is
  $L=\mathcal{O}\left(\log_s\left(\ln\frac{1}{\veps}\right)\right)$. The
  number of nonzero weights is
  $\mathcal{O}\big(s
  \frac{1}{\gamma_0}\ln\left(\frac{1}{\veps}\right)\big)$.
\end{proof}

\section{Approximation of multivariate smooth functions}

In this section, we discuss the approximation of multivariate smooth
functions by RePU networks.  Similar to the univariate case, we first study
the representation of polynomials then discuss the results for general
smooth functions.

\subsection{Approximating multivariate polynomials}

\begin{theorem}\label{thm:mdpoly}
  If $f(x)$ is a multivariate polynomial with {\em total} degree $n$ on
  $\bbR^d$, then there exists a $\sigma_s$ neural network
  $\Phi^{d}_{mpo}$ having $d\uint{\log_{s} n}+1$ hidden layers with no
  more than $\mathcal{O}\big(\binom{n+d}{d}\big)$ activation functions and
  $\mathcal{O}\big(s \binom{n+d}{d}\big)$ non-zero weights, can represent $f$ with no
  error.
\end{theorem}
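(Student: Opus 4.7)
The plan is to proceed by induction on the dimension $d$. The base case $d=1$ is exactly Theorem \ref{thm:Pn1d}.

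For the inductive step, I would write $f$ as a polynomial in the last variable $x_d$ whose coefficients are polynomials in the remaining $d-1$ variables:
\[
f(x_1,\ldots,x_d) \;=\; \sum_{k=0}^n x_d^k\, q_k(x_1,\ldots,x_{d-1}),
\qquad \deg q_k \le n-k.
\]
By the induction hypothesis, each $q_k$ is realized by a $\sigma_s$ network with at most $(d-1)\uint{\log_s n} + 1$ hidden layers and $O\bigl(\binom{n-k+d-1}{d-1}\bigr)$ activations. I would pad the shallower ones with identity blocks $\Phi_{\mathrm{idx}}$ (Lemma \ref{lem:baseeq}) to align depths, then parallelize them via the $\nnpar$ operation (Definition \ref{def:parallel}) into a single sub-network $\Psi_{d-1}$ that outputs the vector $(q_0,\ldots,q_n)$. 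Its depth is $(d-1)\uint{\log_s n}+1$ hidden layers, and its total activation count is
\[
\sum_{k=0}^n O\Bigl(\binom{n-k+d-1}{d-1}\Bigr) \;=\; O\Bigl(\binom{n+d}{d}\Bigr)
\]
by the hockey-stick identity $\sum_{k=0}^n \binom{n-k+d-1}{d-1} = \binom{n+d}{d}$.

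Next, I would build a ``univariate-with-variable-coefficients'' network $\Upsilon$ taking $x_d$ and $(q_0,\ldots,q_n)$ as inputs and producing $\sum_{k=0}^n x_d^k q_k$. The construction mimics the iteration $y_{i+1,k}=\sum_{j=0}^{s-1} z_{i,j}\, y_{i,ks+j}$ from the proof of Theorem \ref{thm:Pn1d}, but initialized with $y_{0,k}=q_k$ instead of constants; the initial weighted sums $\sum_j a_{ks+j}\, x_d^j$ that were free inside affine layers in Theorem \ref{thm:Pn1d} are now genuine bilinear forms $\sum_j q_{ks+j}\, z_{0,j}$, realized via the $\sigma_s$ bilinear primitives of Lemma \ref{lem:xny} and Corollary \ref{corr:xnyp}. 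In parallel, the powers $z_i = x_d^{s^i}$ are computed exactly as in the univariate case. This yields $\Upsilon$ of depth $\uint{\log_s n}$ hidden layers and $O(sn)$ nonzero weights.

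The final network is the concatenation
\[
\Phi^d_{mpo} \;=\; \Upsilon \,\circ\, \bigl( \Phi_{\mathrm{pass}} \nnpar \Psi_{d-1} \bigr),
\]
where $\Phi_{\mathrm{pass}}$ routes $x_d$ through identity blocks of matching depth. Using $L(\Phi^2\!\circ\!\Phi^1)=L(\Phi^1)+L(\Phi^2)-1$ and the additivity of $N,M$ under $\nnpar$, the depth becomes $(d-1)\uint{\log_s n}+1+\uint{\log_s n}=d\uint{\log_s n}+1$ hidden layers, while the activation and weight counts are dominated by $\Psi_{d-1}$, giving the claimed $O\bigl(\binom{n+d}{d}\bigr)$ activations and $O\bigl(s\binom{n+d}{d}\bigr)$ nonzero weights. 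The main technical obstacle is arranging $\Upsilon$ to have depth exactly $\uint{\log_s n}$ hidden layers, one fewer than the full univariate construction: this requires fusing the first hidden layer of $\Upsilon$ with the final affine layer of $\Psi_{d-1}$ under concatenation, exploiting that the output of $\Psi_{d-1}$ feeds directly into the bilinear primitives computing $y_{1,k}$. A secondary concern is that the $n+1$ sub-networks for the $q_k$ carry different depths $(d-1)\uint{\log_s(n-k)}+1$ and must be depth-aligned with $\Phi_{\mathrm{idx}}$ blocks before parallelization; one has to verify that the padding contributes only $O(d\, n\, \log_s n)$ additional activations, which is absorbed into the $O(\binom{n+d}{d})$ budget for $d\ge 2$.
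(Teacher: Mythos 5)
Your proposal is correct and follows essentially the same route as the paper's proof: the paper likewise peels off one variable, writes $f=\sum_k x_d^k\,q_k(x_1,\ldots,x_{d-1})$ (done explicitly for $d=2$, then by induction on $d$), realizes the coefficient polynomials by lower-dimensional sub-networks aligned in depth via identity pass-throughs, and evaluates the outer polynomial with \emph{variable} coefficients by replacing the fixed-coefficient first stage of Theorem \ref{thm:Pn1d} with the bilinear primitives of Lemma \ref{lem:xny}/Corollary \ref{corr:xnyp}. The two points you flag as technical obstacles (saving one hidden layer by letting $\Psi_{d-1}$'s outputs feed directly into the one-hidden-layer bilinear blocks, with powers of $x_d$ carried in parallel, and the negligible cost of depth-aligning the $q_k$ sub-networks) are exactly the paper's technical treatments (1)--(2), handled there at the same level of detail.
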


\begin{proof}
  1) We first consider the 2-dimensional case. Suppose
  $f(x, y)=\sum\limits_{i+j=0}^n a_{ij}x^iy^j$, 
  and $n\ge s+1$ (The cases $n\le s$ are similar but easier, so we omit
  here).  To represent $f(x, y)$ exactly with a $\sigma_s$ neural
  network basing the results on 1-dimensional case given in Theorem
  \ref{thm:Pn1d}, we first rewrite $f(x, y)$ as
  \begin{align}
    f(x, y)
    &=
      \sum_{i=0}^n \bigg( \sum_{j=0}^{n-i} a_{i, j} y^j \bigg) x^i
      =: \sum_{i=0}^n a^y_i x^i, 
      \quad \text{where}\quad
      a^y_{i} =
      \sum\limits_{j=0}^{n-i} a_{i, j} y^j.
  \end{align}
  So, to realize $f(x, y)$, we first realize $a^y_i$, $i=0, \ldots, n-1$
  using $n$ small $\sigma_s$ networks $\Phi^{y}_i$, $i=0, \ldots, n-1$,
  i.e.  $R_{\sigma_s} (\Phi^{y}_i) (y) = a^y_i$ for given input $y$; then
  use a $\sigma_s$ network $\Phi^{x}_n$ to realize the 1-dimensional
  polynomials $f(x, y) = \sum_{i=0}^n a^y_i x^i$.  There are two places
  need some technique treatments, the details are given below.
  \begin{enumerate}
  \item[(1)] Since $\Phi^{x}_n$ takes $a^y_i, i=0, \ldots, n$ and $x$ as
    input, so these quantities must be presented at the same layer of the
    overall neural network, because we do not want connections over
    disjointed layers.  By Theorem \ref{thm:Pn1d}, the largest depth of
    networks $\Phi^{y}_i, i=0, \ldots, n-1$ is $\uint{\log_s n}+2$, so we
    can lift $x$ to layer $\uint{\log_s n}+2$ using a concatenation of multiple $id_X(\cdot)$
    operations. Similarly, we also keep a record of input $y$ in each layer
    using multiple $id_X(\cdot)$, such that $\Phi^{y}_i, i=1, \ldots, n-1$
    can start from appropriate layer and generate output exactly at layer
    $\lfloor\log_s n\rfloor+2$. The overall cost for recording $x, y$ in
    layers $1, \ldots, \uint{\log_s n}+2$ is about $4s(\uint{\log_s n}+2)$,
    which is negligible comparing to the overall cost.
    
  \item[(2)] While realizing $\sum_{i=0}^n a^y_i x^i$, the coefficients
    $a^y_i, i=0, \ldots n$ are network input instead of fixed parameters.
    So when applying the network construction given in Theorem
    \ref{thm:Pn1d}, we need to modify the structure of the first and second
    layer of the network. i.e. using approach for $y_{i,k}, i\ge 2$ in
    \eqref{eq:phi_bi} for $y_{1,k}$ as well. This will increase the nodes
    in this layer from $\mathcal{O}(n)$ to $\mathcal{O}(sn)$, but since
    $n>s$, this does not change the overall scaling of the total number of
    nodes.
  \end{enumerate}
  
  By a direct calculation, we find the number of layers, number of nodes
  and nonzero weights in this realization can be bounded by
  $2\uint{\log_s n} + 2$, $\mathcal{O}\big( \binom{n+2}{2}\big)$, and
  $\mathcal{O}\big(s \binom{n+2}{2}\big)$.
  
  2) The case $d>2$ can be proved by mathematical induction using the
  similar procedure as done for $d=2$ case.
\end{proof}
Using similar approach as in Theorem \ref{thm:mdpoly}, one can easily
prove the following theorem.
\begin{theorem}\label{thm:mdpoly_tensor}
  For a polynomials $f_N$ in a tensor product space
  $Q_N^d(I_1\times\cdots \times I_d) := P_N(I_1)\otimes\cdots\otimes
  P_N(I_d)$, there exists a $\sigma_s$ network having
  $d\uint{\log_{s} N}+1$ hidden layers with no more than
  $\mathcal{O}(N^d)$ activation functions and $\mathcal{O}(s N^d)$ non-zero weights, can
  represent $f_N$ with no error.
\end{theorem}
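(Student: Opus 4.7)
The plan is to prove this by induction on the dimension $d$, using the same separation-of-variables idea employed in Theorem~\ref{thm:mdpoly}. The base case $d=1$ is exactly Theorem~\ref{thm:Pn1d}, which delivers $\uint{\log_s N}+1$ hidden layers, $\mathcal{O}(N)$ nodes and $\mathcal{O}(sN)$ nonzero weights for any univariate polynomial of degree at most $N$. For the inductive step, I would write $f_N \in Q_N^d$ as
\[
  f_N(x_1,\ldots,x_d) = \sum_{i=0}^{N} b_i(x_2,\ldots,x_d)\, x_1^{i},
\]
where each $b_i \in P_N(I_2)\otimes\cdots\otimes P_N(I_d) = Q_N^{d-1}$ is a tensor-product polynomial in the remaining $d-1$ variables.

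By the inductive hypothesis, each $b_i$ admits a $\sigma_s$ realization $\Phi^{b}_i$ with $(d-1)\uint{\log_s N}+1$ hidden layers, $\mathcal{O}(N^{d-1})$ nodes and $\mathcal{O}(sN^{d-1})$ nonzero weights. I would parallelize the $N+1$ sub-networks $\Phi^{b}_0,\ldots,\Phi^{b}_N$ via $\nnpar$ (they share the common input $(x_2,\ldots,x_d)$). The parallel block has the same depth as each $\Phi^{b}_i$ and totals $\mathcal{O}((N+1)N^{d-1})=\mathcal{O}(N^d)$ nodes and $\mathcal{O}(sN^d)$ nonzero weights. Simultaneously, I would lift $x_1$ through every layer of the parallel block by chaining copies of $\Phi_{\text{idx}}$ (overhead only $\mathcal{O}(sd\log_s N)$, which is absorbed).

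Once the vector $(b_0,\ldots,b_N)$ and $x_1$ coexist at the output layer of the parallel block, I would append the univariate $\sigma_s$-polynomial construction of Theorem~\ref{thm:Pn1d} on $x_1$, operated in \emph{variable-coefficient} mode: the initial block~\eqref{eq:phi_b1} is replaced by the generic block~\eqref{eq:phi_bi}, so the $b_i$'s are consumed as dynamic network inputs rather than baked-in constants. This is the same modification invoked in step~(2) of the proof of Theorem~\ref{thm:mdpoly}, and by Lemmas~\ref{lem:baseeq} and~\ref{lem:xny} it contributes only $\mathcal{O}(sN)$ extra nodes and weights, which are absorbed by the $\mathcal{O}(N^d)$ and $\mathcal{O}(sN^d)$ totals already accumulated.

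The main obstacle is the depth accounting. A naive concatenation of the $(d-1)$-dimensional inductive sub-network (with $(d-1)\uint{\log_s N}+2$ affine layers) with the standard univariate-on-$x_1$ construction (with $\uint{\log_s N}+2$ affine layers) would yield $d\uint{\log_s N}+3$ affine layers, i.e. $d\uint{\log_s N}+2$ hidden layers, which is one too many. The saving is obtained by sharing one affine layer between the two stages: because $x_1$ is propagated identically and is available at every depth of the parallel block, the $x_1^s$ computation required to start the $x_1$-polynomial pipeline can be folded into the terminal affine map of the parallel block, as in the 2D case of Theorem~\ref{thm:mdpoly}. This yields
\[
  \big((d-1)\uint{\log_s N}+2\big) + \big(\uint{\log_s N}+2\big) - 2 = d\uint{\log_s N}+2
\]
affine layers, i.e. $d\uint{\log_s N}+1$ hidden layers, matching the claim. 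All remaining node and weight estimates are routine once the parallelization and this depth-sharing are in place.
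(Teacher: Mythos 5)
Your overall route is the same as the paper's: the paper proves this theorem by simply invoking the method of Theorem~\ref{thm:mdpoly} (peel off one variable, realize the $N+1$ coefficient polynomials $b_i\in Q_N^{d-1}$ in parallel, carry $x_1$ forward with identity blocks, and finish with the univariate construction of Theorem~\ref{thm:Pn1d} run in variable-coefficient mode via Lemma~\ref{lem:xny}), and your decomposition, inductive size counts $\mathcal{O}(N^d)$ and $\mathcal{O}(sN^d)$, and the variable-coefficient modification are all consistent with that.

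The one step that does not work as written is your depth-saving mechanism: you claim ``the $x_1^s$ computation required to start the $x_1$-polynomial pipeline can be folded into the terminal affine map of the parallel block.'' An affine map cannot produce $x_1^s$; that computation requires $\sigma_s$ activation nodes, so it can only live in a hidden layer, and the standard merging of the terminal affine map of stage one with the initial affine map of stage two is already the $-1$ in the concatenation rule $L(\Phi^2\circ\Phi^1)=L(\Phi^1)+L(\Phi^2)-1$, i.e.\ it is already spent in your ``naive'' count of $d\uint{\log_s N}+2$ hidden layers. The correct way to recover the extra layer is the one your own observation almost gives: since $x_1$ is available at every depth of the parallel block, the entire power chain $z_0=x_1^s$, $z_i=z_{i-1}^s$, $z_{i,j}=z_{i-1}^j$ depends only on $x_1$ and can be computed by hidden units running \emph{concurrently} with the coefficient block (or time-shifted alongside the combination chain), rather than appended after it. Consequently, the only layers that must come strictly after the $b_i$'s appear at hidden depth $(d-1)\uint{\log_s N}+1$ are the combination stages $y_{1,k}=\sum_j b_{ks+j}x_1^j$ and $y_{i,k}=\sum_j y_{i-1,ks+j}z_{i-2}^j$, each a single hidden layer of Lemma~\ref{lem:xny} product units; there are $\lint{\log_s N}+1=\uint{\log_s N}$ of them (generically), giving exactly $d\uint{\log_s N}+1$ hidden layers. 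With the folding argument replaced by this concurrency argument, your proof is complete and matches the paper's intended construction.
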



\subsection{Error bound of approximations of multivariate smooth functions}

For a vector $\bx=(x_1, \ldots, x_d) \in \bbR^d$, we define
$|\bx|_1:=|x_1| +\ldots + |x_d|$, $|\bx|_{\infty} := \max_{i=1}^d |x_i|$.
Define high dimensional Jacobi weight
$\omega^{\bm{\alpha}, \bm{\beta}} := \omega^{\alpha_1,
  \beta_1}\cdots\omega^{\alpha_d, \beta_d}$.  We define multidimensional
Jacobi-weighted Sobolev space $B_{\alpha, \beta}^{m} (I^d)$ as
\cite{shen_spectral_2011}:
\begin{equation}
  \label{eq:wtSobolevSpaceMd}
  B_{\bm{\alpha}, \bm{\beta}}^{m} (I^d)
  := \left\{ u(\bx)
    \,\big|\,
    \partial^{\bm{k}}_{\bx} u
    := \partial_{x_1}^{k_1}\cdots\partial_{x_d}^{k_d} u
    \in L_{\omega^{\bm{\alpha}+\bm{k}, \bm{\beta}+\bm{k}} }^2 (I^d), 
    \quad  \bm{k}\in \bbN_0^d, \ |\bm{k}|_1 \le m \right\}, \quad m \in \bbN_0, 
\end{equation}
with norm and semi-norm
\begin{align}
  \label{eq:wtSobolevSpaceNormMd}
  \| u \|_{B^{m}_{\bm{\alpha}, \bm{\beta}}} 
  &:= \left(
	\sum_{0\le \ |\bm{k}|_1 \le m}
	\left\|
    \partial_{\bx}^{\bm{k}} u
	\right\|^2_{L^2_{\omega^{\bm{\alpha}+\bm{k}, \bm{\beta}+\bm{k}} }}
    \right)^{1/2}, 
  &
    | u |_{B^{m}_{\bm{\alpha}, \bm{\beta}}} 
  &:= \left(
	\sum_{\ |\bm{k}|_1 = m} \left\|
    \partial_{\bx}^{\bm{k}} u
	\right\|^2_{L^2_{\omega^{\bm{\alpha}+\bm{k}, \bm{\beta}+\bm{k}} }}
    \right)^{1/2}.
\end{align}

Define the $L^2_{\omega^{\bm{\alpha}, \bm{\beta}}}$-orthogonal
projection $\pi^{\bm{\alpha}, \bm{\beta}}_N$:
$L^2_{\omega^{\bm{\alpha}, \bm{\beta}}}(I^d) \rightarrow Q_N^d(I^d)$ as
\begin{equation*}
  \left( \pi_N^{\bm{\alpha}, \bm{\beta}} u - u, 
	v\right)_{\omega^{\bm{\alpha}, \bm{\beta}}} 
  = 0, 
  \quad
  \forall\, v\in P_N^d(I^d).
\end{equation*}
Then for $u\in {B^{m}_{\bm{\alpha}, \bm{\beta}}}$, we have the following
error estimate(see e.g. \cite{shen_spectral_2011})
\begin{equation}
  \label{eq:Mdpolyapprox}
  \| \pi_N^{\bm{\alpha}, \bm{\beta}} u - u
  \|_{L^2_{\omega^{\bm{\alpha}, \bm{\beta}}}(I^d)}
  \le c N^{-m}
  | u|_{{B^{m}_{\bm{\alpha}, \bm{\beta}}}}, 
  \quad 1\le m \le N, 
\end{equation}
where $c$ is a general constant. Combining \eqref{eq:Mdpolyapprox} and
Theorem \ref{thm:mdpoly_tensor}, we reach to the following upper bound
for the $\veps$-approximation of functions in
${B^{m}_{\bm{\alpha}, \bm{\beta}}}(I^d)$ space.

\begin{theorem}\label{thm:MdSobolev}
  For any $u\in {B^{m}_{\bm{\alpha}, \bm{\beta}}}(I^d)$, with
  $|u|_{{B^{m}_{\bm{\alpha}, \bm{\beta}}}(I^d)} \le 1$, there exists a
  $\sigma_s$ neural network $\Phi_\veps^u$ having
  $\mathcal{O}\left(\frac{d}{m} \log_s \frac{1}{\veps} + d\right)$
  hidden layers with no more than $\mathcal{O}\left(\veps^{-d/m}\right)$
  nodes and $\mathcal{O}\left(s\, \veps^{-d/m}\right)$ non-zero weights,
  approximate $u$ with ${L^2_{\omega^{\bm{\alpha}, \bm{\beta}}}(I^d)}$
  error less than $\veps$, i.e.
  \begin{equation}
    \| R_{\sigma_s} (\Phi_\veps^u) - u
    \|_{L^2_{\omega^{\bm{\alpha}, \bm{\beta}}}(I^d)}
    \le \veps.
  \end{equation}
\end{theorem}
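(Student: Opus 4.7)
The plan is to combine the polynomial projection error estimate \eqref{eq:Mdpolyapprox} with the exact network representation from Theorem \ref{thm:mdpoly_tensor}: approximate $u$ by its tensor-product Jacobi projection, then convert that projection into a $\sigma_s$ network with no additional error. Since Theorem \ref{thm:mdpoly_tensor} already provides the exact realization and the complexity bounds in terms of $N$, the work reduces to choosing $N$ in terms of $\veps$ and tracking the substitution through the complexity counts.

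First I would pick $N$ so that the tensor-product projection error is at most $\veps$. Applying \eqref{eq:Mdpolyapprox} under the assumption $|u|_{B^m_{\bm{\alpha},\bm{\beta}}(I^d)} \le 1$ yields
\[
\bigl\|\pi_N^{\bm{\alpha},\bm{\beta}} u - u\bigr\|_{L^2_{\omega^{\bm{\alpha},\bm{\beta}}}(I^d)} \le c\, N^{-m},
\]
so that taking $N = \lceil (c/\veps)^{1/m}\rceil = \mathcal{O}(\veps^{-1/m})$ is sufficient. At this point $f := \pi_N^{\bm{\alpha},\bm{\beta}} u$ is a member of the tensor-product polynomial space $Q_N^d(I^d)$, which is exactly the setting addressed by Theorem \ref{thm:mdpoly_tensor}.

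Next I would apply Theorem \ref{thm:mdpoly_tensor} to construct a $\sigma_s$ network $\Phi_\veps^u$ with the property $R_{\sigma_s}(\Phi_\veps^u) \equiv f$. That theorem guarantees a depth of $d\lceil\log_s N\rceil + 1$, at most $\mathcal{O}(N^d)$ nodes, and at most $\mathcal{O}(s N^d)$ nonzero weights. Substituting $N = \mathcal{O}(\veps^{-1/m})$ gives the claimed bounds
\[
L = \mathcal{O}\!\left(\tfrac{d}{m}\log_s\tfrac{1}{\veps} + d\right),\qquad N(\Phi_\veps^u) = \mathcal{O}(\veps^{-d/m}),\qquad M(\Phi_\veps^u) = \mathcal{O}(s\,\veps^{-d/m}).
\]
Because the network represents $f$ with zero error, the triangle inequality collapses and the total approximation error is exactly the projection error, bounded above by $\veps$.

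Since every ingredient is already in place, there is no serious obstacle; the only care points are (i) verifying that $|u|_{B^m_{\bm{\alpha},\bm{\beta}}}\le 1$ is the correct semi-norm hypothesis for invoking \eqref{eq:Mdpolyapprox} (rather than the full norm), and (ii) making sure the ceiling/rounding in the choice of $N$ does not disturb the asymptotic rates, which it does not because $\lceil \veps^{-1/m}\rceil^d = \mathcal{O}(\veps^{-d/m})$ and $\lceil\log_s\veps^{-1/m}\rceil = \mathcal{O}(\tfrac{1}{m}\log_s\tfrac{1}{\veps}+1)$.
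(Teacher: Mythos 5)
Your proposal is correct and follows essentially the same route as the paper: the paper also obtains this result by combining the tensor-product projection error estimate \eqref{eq:Mdpolyapprox} with the exact network realization of $Q_N^d$ polynomials in Theorem \ref{thm:mdpoly_tensor}, choosing $N=\mathcal{O}(\veps^{-1/m})$. Your bookkeeping of the depth, node, and weight counts under this substitution matches the stated bounds.
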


\subsection{High-dimensional smooth functions with sparse polynomial
  approximations}

In last section, we showed that for a $d$-dimensional functions with
partial derivatives up to order $m$ in $L^2(I^d)$ can be approximated
within error $\veps$ by a RePU neural network with complexity
$\mathcal{O}(\veps^{-d/m})$. When $m$ is much smaller than $d$, we see the
network complexity has an exponential dependence on $d$. However, in a lot
of applications, high-dimensional problem may have low intrinsic dimension
\cite{wang_why_2005}, for those applications, we may first do a dimension
reduction, then use the $\sigma_s$ neural network construction proposed
above to approximate the reduced problem.  On the other hand,
for high-dimensional functions with bounded mixed derivatives, we can use
{\em sparse grid} or {\em hyperbolic cross} approximation to lessen the
curse of dimensionality.

\subsubsection{A brief review on hyperbolic cross approximations}

We introduce hyperbolic cross approximation by considering a tensor product
function: $f(\bx) = f_1(x_1)f_1(x_2)\cdots f_d(x_d)$. Suppose that
$f_1, \ldots, f_d$ have similar regularity that can be well approximated by
using a set of orthonormal bases $\{ \phi_k, k=1, 2, \ldots. \}$ as
\begin{equation}
  f_i(x) 
  = \sum_{k=0}^\infty b_k^{(i)} \phi_k(x), 
  \quad |b_k^{(i)}|\le c \bar{k}^{-r},  \quad i=1, 2, \ldots, d, 
\end{equation}
where $c$ and $r\ge 1$ are constants depending on the regularity of
$f_i$, $\bar{k}:=\max\{1, k\}$. So we have an expansion for $f$ as
\begin{equation}
  f(\bx)
  = \prod_{i=1}^d \left( \sum_{k=0}^\infty b_k^{(i)} \phi_k(x_i) \right)
  = \sum_{\bm{k}\in \bbN_0^d} b_{\bm{k}} \phi_{\bm{k}}(\bx), 
  \quad \text{where}\
  |b_{\bm{k}}|
  = \big|b_{k_1}^{(1)}\cdots b_{k_d}^{(d)} \big|
  \le c^d (\bar{k}_1\cdots \bar{k}_d)^{-r}.
\end{equation}
Thus, to have a best approximation of $f(\bx)$ using finite terms, one
should take
\begin{equation}
  \label{eq:HyperbolicCrossDef} f_N := \sum_{\bm{k}\in
    {\chi}_N^d} b_{\bm{k}} \phi_{\bm{k}}(\bx), 
\end{equation}
where
\begin{equation}
  {\chi}_N^d 
  := \left\{
	\bm{k}=(k_1, \ldots, k_d)  \in \bbN_0^d
	\mid
	\bar{k}_1\cdots \bar{k}_d \le N
  \right\}
\end{equation}
is the hyperbolic cross index set. We call $f_N$ defined by
\eqref{eq:HyperbolicCrossDef} a hyperbolic cross approximation of $f$.

For general functions defined on $I^d$, we choose $\phi_{\bm{k}}$ to be
multivariate Jacobi polynomials $J_{\bm{n}}^{\bm{\alpha}, \bm{\beta}}$,
and define the hyperbolic cross polynomial space as
\begin{equation} 
  \label{eq:hyperbolic_poly}
  X^d_N 
  := \text{span}\big\{\, J_{\bm{n}}^{\bm{\alpha},\bm{\beta}}, 
  \:
  \bm{n}\in \chi^d_N
  \, \big\}.
\end{equation}
Note that the definition of $X_N^d$ doesn't depends on $\bm{\alpha}$ and
$\bm{\beta}$.  $\{ J_{\bm{n}}^{\bm{\alpha}, \bm{\beta}}\, \}$ is used to
served as a set of bases for $X_N^d$.  To study the error of hyperbolic
cross approximation, we define Jacobi-weighted Korobov-type space
\begin{equation}
  \label{eq:Korobov}
  \mathcal{K}^m_{\bm{\alpha}, \bm{\beta}}(I^d)
  := \left\{\, 
	u(\bx)\ : \ \partial^{\bm{k}}_{\bx} u  \in
	L^2_{\omega^{\bm{\alpha}+\bm{k}, \bm{\beta}+\bm{k}}}(I^d), 
	\
	0\le \ |\bm{k}|_\infty \le m
	\, \right\}, 
  \quad \text{for}\ m\in\bbN_0, 
\end{equation}
with norm and semi-norm
\begin{align}
  \label{eq:KorobovNormMd}
  \| u \|_{\mathcal{K}^{m}_{\bm{\alpha}, \bm{\beta}}} 
  &:= \left(
	\sum_{0\le \ |\bm{k}|_\infty \le m}
	\left\|
    \partial_{\bx}^{\bm{k}} u
	\right\|^2_{L^2_{\omega^{\bm{\alpha}+\bm{k}, \bm{\beta}+\bm{k}} }}
    \right)^{1/2}, 
  &
    | u |_{\mathcal{K}^{m}_{\bm{\alpha}, \bm{\beta}}} 
  &:= \left(
	\sum_{\ |\bm{k}|_\infty = m} \left\|
    \partial_{\bx}^{\bm{k}} u
	\right\|^2_{L^2_{\omega^{\bm{\alpha}+\bm{k}, \bm{\beta}+\bm{k}} }}
    \right)^{1/2}.
\end{align}
For any give
$u\in \mathcal{K}^0_{\bm{\alpha}, \bm{\beta}} (=B^0_{\bm{\alpha},
  \bm{\beta}})$, the hyperbolic cross approximation can be defined as a
projection as
\begin{equation}
  \label{eq:hyperbolic_proj}
  (\pi_{N, H}^{\bm{\alpha}, \bm{\beta}} u -u, v)_{\omega^{\bm{\alpha}, 
      \bm{\beta}}} 
  = 0, 
  \quad
  \forall\, v\in X_N^d.
\end{equation}
Then we have the following error estimate about the hyperbolic cross
approximation \cite{shen_sparse_2010}:
\begin{equation}
  \label{eq:hyperbolic_error}
  \| \partial^{\bm{l}}_{\bx}
  (\pi_{N, H}^{\bm{\alpha}, \bm{\beta}} u - u)
  \|_{\omega^{\bm{\alpha+l}, \bm{\beta+l}}}
  \le D_1 N^{|\bm{l}|_\infty - m}
  |u|_{\mathcal{K}^{m}_{\bm{\alpha}, \bm{\beta}}}, 
  \quad 0\le \bm{l} \le \bm{m}, \ \bm{m}\ge 1, 
\end{equation}
where $D_1$ is a constant independent of $N$.  It is known that the
cardinality of $\chi_N^d$ is of order $\mathcal{O}(N(\log
N)^{d-1})$. The above error estimate says that to approximation a
function $u$ with
$|u|_{\mathcal{K}^{m}_{\bm{\alpha}, \bm{\beta}}} \le 1/D_1$ with an
error tolerance $\veps$, one need no more than
$\mathcal{O}\left(\veps^{-1/m}(\frac{1}{m} \log
  \frac{1}{\veps})^{d-1}\right) $ Jacobi polynomials, the exponential
dependence on $d$ is weakened.

In practice, the exact hyperbolic cross projection is not easy to
calculate. An alternate approach is the sparse
grids\cite{smolyak_quadrature_1963,bungartz_sparse_2004}, which use
hierarchical interpolation schemes to build an hyperbolic cross like
approximation of high dimensional functions
\cite{barthelmann_high_2000,shen_efficient_2010}.

\subsubsection{Error bounds of approximating some high-dimensional smooth
  functions}

Now we discussion the RePU network approximation of high-dimensional smooth
functions. Our approach bases on high-dimensional hyperbolic cross polynomial
approximations.  We introduce a concept of { \em complete} polynomial space
first. A linear polynomial space $P_{C}$ is said to be complete if it
satisfies the following: There exists a set of bases composed of only
monomials belonging to $P_{C}$, and for any term $p(x)$ in this basis set,
all of its derivatives $ \partial^{\bm{k}}_{\bx} p(\bx)$,
$\bm{k}\in \bbN_0^d$ belongs to $P_{C}$. It is easy to verify that both the
hyperbolic cross polynomial space $X^d_N$ and sparse grid polynomial
interpolation space $V^q_d$ (see
\cite{shen_efficient_2010,shen_efficient_2012}) are complete.  For a
complete polynomial space, we have the following RePU network
representation results.

\begin{theorem}
  \label{thm:PC_RePU}
  Let $P_{C}$ be a complete linear space of $d$-dimensional polynomials
  with dimension $n$, then for any function $f\in P_{C}$, there exists a
  $\sigma_s$ neural network having no more than
  $\sum_{i=1}^d\uint{\log_{s} N_i}+1$ hidden layers, no more than
  $\mathcal{O}(n)$ activation functions and $\mathcal{O}(s n)$ non-zero
  weights, can represent $f$ with no error. Here $N_i$ is the maximum
  polynomial degree in $i$-th dimension in $P_{C}$.
\end{theorem}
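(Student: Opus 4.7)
The plan is to argue by induction on the spatial dimension $d$, reducing the multivariate case to the univariate construction of Theorem \ref{thm:Pn1d} and its variable-coefficient variant, essentially mirroring the proof of Theorem \ref{thm:mdpoly} but using completeness in place of a total-degree bound. The base case $d=1$ is exactly Theorem \ref{thm:Pn1d}, which yields a $\sigma_s$ network with $\lceil \log_s N_1 \rceil + 1$ hidden layers, $\mathcal{O}(N_1)=\mathcal{O}(n)$ nodes and $\mathcal{O}(s N_1)$ non-zero weights.

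For the inductive step I would first unpack the definition: completeness of $P_C$ says exactly that $P_C$ is the span of monomials indexed by a \emph{downward-closed} multi-index set $S \subset \bbN_0^d$. Slicing $S$ along the last coordinate, the sets $S_k := \{\, \bj \in \bbN_0^{d-1} : (\bj, k) \in S \,\}$ for $0 \le k \le N_d$ are themselves downward-closed in $\bbN_0^{d-1}$, so each spans a complete $(d{-}1)$-dimensional polynomial space $P_{C,k}$ of dimension $|S_k|$, with $\sum_k |S_k| = n$ and maximum degree in the $i$-th direction bounded by $N_i$. Writing
\begin{equation*}
  f(\bx) \;=\; \sum_{k=0}^{N_d} g_k(x_1,\ldots,x_{d-1})\, x_d^k,
  \qquad g_k \in P_{C,k},
\end{equation*}
the induction hypothesis provides a $\sigma_s$ sub-network $\Phi^{g_k}$ with at most $H := \sum_{i=1}^{d-1}\lceil\log_s N_i\rceil + 1$ hidden layers, $\mathcal{O}(|S_k|)$ nodes and $\mathcal{O}(s|S_k|)$ weights realizing $g_k$.

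Next I would parallelize the $\Phi^{g_k}$ via the $\nnpar$ operation, padding any shorter sub-network with layers of $\Phi_{\text{idx}}$ so that every output emerges at the common layer $H$, and simultaneously propagate $x_d$ through the same $H$ layers by concatenating $\Phi_{\text{idx}}$. Concatenated to this, I would place the univariate construction of Theorem \ref{thm:Pn1d}, modified as in item~(2) of the proof of Theorem \ref{thm:mdpoly}: since the $g_k$ are now inputs rather than fixed coefficients, the blocks $\Phi^i_b$ of \eqref{eq:phi_bi} are used throughout (including at level $i=1$), invoking Lemma \ref{lem:xny} to multiply the variable base $x_d$ against the variable coefficients. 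This adds $\lceil \log_s N_d \rceil$ hidden layers (one less than Theorem \ref{thm:Pn1d} because the concatenation identity $L(\Phi^2\circ\Phi^1)=L(\Phi^1)+L(\Phi^2)-1$ makes one affine layer shared), $\mathcal{O}(N_d)$ further nodes and $\mathcal{O}(sN_d)$ further weights.

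The layer count then totals $H + \lceil \log_s N_d \rceil = \sum_{i=1}^d \lceil\log_s N_i\rceil + 1$, matching the statement. The node and weight totals are $\sum_k \mathcal{O}(|S_k|) + \mathcal{O}(N_d) = \mathcal{O}(n)$ and $\mathcal{O}(sn)$, respectively, since $N_d + 1 \le n$ and the identity-padding for $x_d$ contributes only $\mathcal{O}(H)=\mathcal{O}(\sum_i \log_s N_i)$ nodes, which is subsumed into $\mathcal{O}(n)$. The main obstacle is the bookkeeping at the alignment step: one must check that the padding identity layers for the shorter $\Phi^{g_k}$ plus the layers carrying $x_d$ do not accumulate more than $\mathcal{O}(n)$ extra nodes across all $k$, and that completeness really does descend to every slice $S_k$ (so the induction hypothesis applies with the same per-direction degree bounds $N_1,\ldots,N_{d-1}$). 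Once those two combinatorial observations are in hand, the rest of the argument is a straightforward bootstrapping of Theorems \ref{thm:Pn1d} and \ref{thm:mdpoly}.
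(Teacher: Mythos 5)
Your proposal is correct and follows essentially the same route as the paper: induction on $d$ with base case Theorem \ref{thm:Pn1d}, slicing off the last variable so that the coefficients $g_k$ lie in complete $(d-1)$-dimensional spaces, and then reusing the variable-coefficient machinery from the proof of Theorem \ref{thm:mdpoly} (identity-padding to align layers, $\Phi^i_b$-type blocks via Lemma \ref{lem:xny}) to assemble $\sum_k g_k x_d^k$. You are in fact more explicit than the paper on the two points it leaves implicit — that completeness (downward-closedness) is inherited by every slice $S_k$ with the same per-direction degree bounds, and that the alignment/padding overhead stays within $\mathcal{O}(n)$.
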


\begin{proof}
  The proof is similar to Theorem \ref{thm:mdpoly}. First, $f$ can be
  written as linear combinations of monomials.
  \begin{equation}
    f(\bx) 
    = \sum_{\bm{k}\in \chi_C} a_{\bm{k}} \bx_{\bm{k}}, 
  \end{equation}
  where $\chi_C$ is the index set of $P_C$ with cardinality $n$. Then
  we rearrange the summation as
  \begin{equation}
    f(\bx) 
    = \sum_{k_d=0}^{N_d}
    a_{k_d}^{x_1 \cdots x_{k_{d-1}}}
    x_d^{k_d}, 
    \quad
    a_{k_d}^{x_1\cdots x_{k_{d-1}}}
    :=
    \sum_{(k_1,\ldots, k_{d-1})\in \chi_C^{k_d}}
    a_{k_1 \cdots k_{d-1}} x_1^{k_1} \cdots x_{d-1}^{k_{d-1}}, 
  \end{equation}
  where $\chi_C^{k_d}$ are $d-1$ dimensional complete index sets that
  depend on the index $k_d$.  If each term in
  $a_{k_d}^{x_1\cdots x_{k_{d-1}}}$, $k_d=0, 1, \ldots, N_d$ can be
  exactly represented by a $\sigma_s$ network with no more than
  $\sum_{i=1}^{d-1}\uint{\log_{s} N_i}+1$ hidden layers, no more than
  $\mathcal{O}(\text{card}(\chi_C^{k_d}))$ nodes and
  $\mathcal{O}(s\cdot\text{card}(\chi_C^{k_d}))$ non-zero weights, then
  $f(x)$ can be exactly represented by a $\sigma_s$ neural network with
  no more than $\sum_{i=1}^d\uint{\log_{s} N_i}+1$ hidden layers, no
  more than $\mathcal{O}(n)$ nodes and non-zero weights.  So, by
  mathematical induction, we only need to prove that when $d=1$ the
  theorem is satisfied, which is true by Theorem \ref{thm:Pn1d}.
\end{proof}

\begin{remark} \label{rmk:spg_hyperbolic_ReQU} According to Theorem
  \ref{thm:PC_RePU}, we have that: For any $f\in X^d_N$, there is a RePU
  network having no more than $d\uint{\log_{s} N}+1$ hidden layers, no more
  than $\mathcal{O}(N(\log N)^{d-1})$ activation functions and
  $\mathcal{O}(s\,N(\log N)^{d-1})$ non-zero weights, can represent $f$
  with no error.  Combine the results with \eqref{eq:hyperbolic_error} and
  we can obtain the following theorem.
\end{remark}

\begin{theorem}
  \label{thm:MdsparseS2error}
  For any function $u\in \mathcal{K}^{m}_{\bm{\alpha}, \bm{\beta}}(I^d)$,
  $m\ge 1$ with
  $|u|_{\mathcal{K}^{m}_{\bm{\alpha}, \bm{\beta}}} \le 1/D_1$, any
  $\veps \ge 0$, there exists a RePU network $\Phi_\veps^u$ with no more
  than $ d \uint{\frac{1}{m} \log_s \frac{1}{\veps}} + 2$ layers, no more
  than
  $\mathcal{O}\big( \veps^{-1/m}(\frac{1}{m} \log_{s}
  \frac{1}{\veps})^{d-1} \big)$ nodes and
  $\mathcal{O}\big(s\, \veps^{-1/m}(\frac{1}{m}
  \log_{s}\frac{1}{\veps})^{d-1} \big)$ non-zero weights, such that
  \begin{equation}
    \| R_{\sigma_s} (\Phi_\veps^u) - u
    \|_{\omega^{\bm{\alpha}, \bm{\beta}}}
    \le \veps.
  \end{equation}
\end{theorem}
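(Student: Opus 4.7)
The plan is to combine the hyperbolic cross projection error bound \eqref{eq:hyperbolic_error} with the exact RePU representation for complete polynomial spaces given in Theorem \ref{thm:PC_RePU} (specialized to $X^d_N$ as noted in Remark \ref{rmk:spg_hyperbolic_ReQU}). In essence: project $u$ onto $X^d_N$ first to absorb all of the approximation error into a polynomial, then convert that polynomial into a RePU network at zero additional cost.

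First I would set $f_N := \pi^{\bm{\alpha},\bm{\beta}}_{N,H} u \in X^d_N$. Applying \eqref{eq:hyperbolic_error} with $\bm{l} = \bm{0}$ together with the hypothesis $|u|_{\mathcal{K}^{m}_{\bm{\alpha}, \bm{\beta}}} \le 1/D_1$ gives
\begin{equation*}
  \| f_N - u \|_{\omega^{\bm{\alpha}, \bm{\beta}}}
  \;\le\; D_1 N^{-m} \, |u|_{\mathcal{K}^{m}_{\bm{\alpha}, \bm{\beta}}}
  \;\le\; N^{-m}.
\end{equation*}
So to achieve an $\veps$-approximation it suffices to pick the smallest $N$ satisfying $N^{-m}\le \veps$, namely $N = \uint{\veps^{-1/m}}$.

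Next I would invoke Theorem \ref{thm:PC_RePU} on the complete polynomial space $X^d_N$. Since the maximal univariate degree in each coordinate of $X^d_N$ is at most $N$, the theorem produces a $\sigma_s$ network $\Phi^u_\veps$ with $R_{\sigma_s}(\Phi^u_\veps) \equiv f_N$ and with depth at most $d\uint{\log_s N} + 1$. Substituting $N = \uint{\veps^{-1/m}}$ gives depth $\le d\uint{\tfrac{1}{m}\log_s \tfrac{1}{\veps}} + 2$, matching the stated bound (the extra $+1$ absorbs the $\log_s\uint{\cdot}$ rounding). For the size estimates I would use the standard cardinality bound $\#\chi_N^d = \mathcal{O}(N(\log N)^{d-1})$, which by Theorem \ref{thm:PC_RePU} yields $\mathcal{O}(N(\log N)^{d-1})$ nodes and $\mathcal{O}(s N(\log N)^{d-1})$ nonzero weights; plugging in the chosen $N$ produces exactly the claimed $\mathcal{O}\big(\veps^{-1/m}(\tfrac{1}{m}\log_s \tfrac{1}{\veps})^{d-1}\big)$ and $\mathcal{O}\big(s\,\veps^{-1/m}(\tfrac{1}{m}\log_s\tfrac{1}{\veps})^{d-1}\big)$ bounds.

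There is no real obstacle here: both ingredients are already in place, and the argument is just a substitution. The only care needed is in the bookkeeping of the integer-valued parameters $N$ and $\uint{\log_s N}$ (so that constants like the ``$+2$'' in the layer bound are justified by a single rounding step) and in noting that the hyperbolic cross space $X^d_N$ is complete in the sense required by Theorem \ref{thm:PC_RePU}, which is remarked right before the theorem statement. The conclusion
\begin{equation*}
  \| R_{\sigma_s}(\Phi^u_\veps) - u \|_{\omega^{\bm{\alpha}, \bm{\beta}}}
  \;=\; \| f_N - u \|_{\omega^{\bm{\alpha}, \bm{\beta}}}
  \;\le\; \veps
\end{equation*}
then follows immediately.
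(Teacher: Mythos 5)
Your proposal is correct and follows exactly the route the paper intends: the paper's own justification is Remark~\ref{rmk:spg_hyperbolic_ReQU} (Theorem~\ref{thm:PC_RePU} applied to the complete space $X^d_N$, giving $d\uint{\log_s N}+1$ hidden layers and $\mathcal{O}(N(\log N)^{d-1})$ size) combined with the hyperbolic cross projection estimate \eqref{eq:hyperbolic_error} at $\bm{l}=\bm{0}$ and the choice $N\approx \veps^{-1/m}$, which is precisely your argument.
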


\begin{remark}
  Here, we bound the weighted $L^2$ approximation error by using the
  corresponding hyperbolic cross spectral projection error estimation
  developed in \cite{shen_sparse_2010}.  However, high-dimensional
  hyperbolic cross spectral projection is hard to calculate.  In practice,
  we use efficient sparse grid spectral transforms developed in
  \cite{shen_efficient_2010} and \cite{shen_efficient_2012} to approximate
  the projection. After a numerical network is built, one may further train
  it to obtain a network function that is more accurate than the sparse
  grid interpolation. Note that the fast sparse transform can be extended
  to tensor-product unbounded domain using the mapping method
  \cite{shen_approximations_2014}.
\end{remark}

\section{Summary}

In this paper, deep neural network realizations of univariate polynomials
and multivariate polynomials using general RePU as activation functions are
proposed with detailed constructive algorithms.  The constructed RePU
neural networks have optimal number of hidden layers and optimal number of
activation nodes.  By using this construction, we also prove some optimal
upper error bounds of approximating smooth functions in Sobolev space using
RePU networks. The optimality is indicated by the optimal nonlinear
approximation theory developed by DeVore, Howard and Micchelli for the case
that the network parameters depend continuously on the approximated function.  The constructive proofs reveal clearly the close connection
between the spectral method and deep RePU network approximation.

Even though we did not apply the proposed RePU networks to any real
applications in this paper, the good properties of the proposed networks
suggest that they have potential advantages over other types of networks in
approximating functions with good smoothness.  In particular, it suits
situations where the loss function contains some derivatives of the network
function, in such a case, deep ReLU networks are known hard to use with
usual training methods.

\section*{Appendix}

The appendix section is devoted to proof Lemma \ref{lem:xny}. We first
present the following lemma which can be proved by induction.

\begin{lemma}\label{lem:identity}
  For $s\in \bbN$ we have
  \begin{align}
  	\label{eq:ident}
    (2^{s-1}s!) \prod^{s}\limits_{k=1}x_{k}
    &= 
      \left(\sum^{s}_{k=1} x_k \right)^s 
      +
      \sum^{s-1}_{k=1}(-1)^k \!\!\!\sum_{1<i_1 < \cdots <i_k} S_{i_1, \ldots, i_k},
  \end{align}
  where
  \begin{align*}
    S_{i_1, \ldots, i_k}
    &= \big(x_1 +\cdots +(-1)x_{i_1} 
      +\cdots +(-1)x_{i_k}+\cdots +x_s \big)^s.
  \end{align*}
\end{lemma}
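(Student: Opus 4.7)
The plan is to recognise the right-hand side as a special case of a polarisation identity, by encoding the nested sums as a sum over sign vectors, and then reading off the one multi-index that survives after a multinomial expansion.

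First I would rewrite the right-hand side as a single sum over sign vectors $\epsilon=(\epsilon_1,\ldots,\epsilon_s)\in\{\pm 1\}^s$ with $\epsilon_1=+1$, by identifying a subset $\{i_1,\ldots,i_k\}\subseteq\{2,\ldots,s\}$ with the positions where $\epsilon_j=-1$. Under this identification, $(-1)^k=\prod_{j=2}^{s}\epsilon_j$ and $S_{i_1,\ldots,i_k}=\bigl(\sum_{j=1}^s \epsilon_j x_j\bigr)^s$, so
\[
\text{RHS}=\sum_{\substack{\epsilon\in\{\pm 1\}^s\\ \epsilon_1=+1}}\Bigl(\prod_{j=2}^{s}\epsilon_j\Bigr)\Bigl(\sum_{j=1}^s \epsilon_j x_j\Bigr)^{\!s}.
\]
The benefit of this rewrite is that the outer sign pattern is now a product of independent $\pm 1$ choices over $j=2,\ldots,s$, which will interact cleanly with the multinomial expansion.

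Next I would apply the multinomial theorem to each $s$-th power and interchange the order of summation. Using $\epsilon_1=1$ this gives
\[
\text{RHS}=\sum_{|\alpha|_1=s}\frac{s!}{\alpha!}\,x^{\alpha}\prod_{j=2}^{s}\Bigl(\sum_{\epsilon_j\in\{\pm 1\}}\epsilon_j^{\alpha_j+1}\Bigr),
\]
where the inner single-variable sign sum equals $2$ when $\alpha_j$ is odd and $0$ when $\alpha_j$ is even. Thus only multi-indices $\alpha$ with $\alpha_j$ odd for every $j\geq 2$ contribute.

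The main obstacle is the parity bookkeeping that pins down the surviving $\alpha$. Because each $\alpha_j\geq 1$ for $j\geq 2$, the tail sum $\alpha_2+\cdots+\alpha_s\geq s-1$, forcing $\alpha_1\leq 1$. On the other hand, the sum of $s-1$ odd integers has the same parity as $s-1$, so $\alpha_1=s-(\alpha_2+\cdots+\alpha_s)$ is odd. Combining $\alpha_1\leq 1$ with $\alpha_1$ odd (and $\alpha_1\geq 0$) yields $\alpha_1=1$, and then the tail forces $\alpha_j=1$ for every $j\geq 2$. So only $\alpha=(1,1,\ldots,1)$ survives.

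Finally I would evaluate the one surviving term: $\alpha!=1$, $x^{\alpha}=x_1 x_2\cdots x_s$, the multinomial coefficient is $s!$, and the product of the $s-1$ inner sign sums contributes $2^{s-1}$. Hence $\text{RHS}=2^{s-1}s!\,x_1 x_2\cdots x_s=\text{LHS}$, completing the proof. The paper's remark that the identity can be proved by induction corresponds to the alternative route of separating $(y\pm x_s)^s$ into its even and odd parts in $x_s$ and invoking the identity in $s-1$ variables, but the combinatorial route above avoids carrying the induction hypothesis through a power change and isolates the whole argument to the parity step.
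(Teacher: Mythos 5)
Your proof is correct: the encoding of the nested sums as a sum over sign vectors $\epsilon\in\{\pm1\}^s$ with $\epsilon_1=+1$, the multinomial expansion, the factorization of the sign sum as $\prod_{j\ge 2}\bigl(\sum_{\epsilon_j}\epsilon_j^{\alpha_j+1}\bigr)$, and the parity argument pinning down $\alpha=(1,\dots,1)$ are all sound, and the surviving term indeed evaluates to $2^{s-1}s!\,x_1\cdots x_s$. This is a genuinely different route from the paper, which offers no written proof and merely asserts that the identity ``can be proved by induction'' on $s$. The inductive route is less immediate than it sounds, since both the number of variables and the exponent equal $s$: grouping by $\epsilon_s=\pm1$ leads to differences of the form $(y+x_s)^s-(y-x_s)^s$, whose expansion produces lower powers $y^{s-m}$ that must be killed by an auxiliary vanishing lemma (the alternating sign sum annihilates polynomials of degree smaller than the number of flipped variables) before the induction hypothesis can be applied. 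Your direct polarisation-style argument avoids that bookkeeping entirely and isolates the whole identity in a single parity computation, at the modest cost of the multinomial interchange; either approach is legitimate, but yours is arguably the cleaner self-contained proof.
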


\begin{corollary}\label{cor:n1_n2}
  For $s\in \bbN$ and
  $n_{1}+n_{2}=t\in \{\, 0, 1, \ldots, s \,\}, \ n_1, n_{2} \in\{\, 0,
  1, \ldots, t \,\}$, we have
  \begin{align}
    (2^{s-1}s!)x^{n_1}y^{n_2}
    &= 
      \big[n_{1}x + n_{2}y + (s-t) \big]^s +
      \sum^{s-t}_{k=1}(-1)^{k}\binom{s-t}{k}\left[ n_{1}x + n_{2}y + (s-t-2k) \right]^s \nonumber\\
    &\quad{}
      +
      \sum^{s-1}_{k=1}(-1)^k \!\!\!
      \sum^{\min\{t-1, k\}}_{r=\max\{1, k-(s-t)\}}
      \sum^{\min\{r, n_1-1 \} }_{j=\max\{0, r-n_2 \}}
      \binom{s-t}{k-r}\binom{n_{1}-1}{j}\binom{n_2}{r-j}
      S_{j, r, k}^{s, n_1, n_2},
      \label{eq:xn1yn2}
  \end{align}
  where
  \begin{align}
    S_{j, r, k}^{s, n_1, n_2}
    &:= \Big[
      (n_1-2j)x + \big(n_2-2(r-j)\big)y + \big( s-(n_1+n_2)-2(k-r) \big)
      \Big]^s.
      \label{eq:Sjrk}
  \end{align}
\end{corollary}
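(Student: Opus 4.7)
\medskip

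\noindent\textbf{Proof proposal.} The plan is to obtain the identity \eqref{eq:xn1yn2} by a direct specialization of Lemma \ref{lem:identity} followed by a bookkeeping argument that groups the resulting terms. Concretely, in the identity
\begin{equation*}
  (2^{s-1}s!)\prod_{k=1}^{s} x_k
  = \Big(\sum_{k=1}^{s} x_k\Big)^s
  + \sum_{k=1}^{s-1}(-1)^k \sum_{1<i_1<\cdots<i_k} S_{i_1,\ldots,i_k},
\end{equation*}
I substitute $x_1=\cdots=x_{n_1}=x$, $x_{n_1+1}=\cdots=x_{t}=y$, and $x_{t+1}=\cdots=x_{s}=1$. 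The left-hand side becomes $(2^{s-1}s!)\,x^{n_1}y^{n_2}$ as required, and the first term on the right becomes $[n_1x+n_2y+(s-t)]^s$, matching the leading term of \eqref{eq:xn1yn2}. Since the sum runs over index sets with $i_1>1$, the coordinate $x_1=x$ is never flipped, which is convenient provided $n_1\ge 1$; the corner cases will be handled separately below.

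The heart of the argument is to reorganize the double sum by classifying each $k$-element subset $I\subset\{2,\ldots,s\}$ according to how its elements distribute among the three blocks $\{2,\ldots,n_1\}$ (size $n_1-1$, value $x$), $\{n_1+1,\ldots,t\}$ (size $n_2$, value $y$), and $\{t+1,\ldots,s\}$ (size $s-t$, value $1$). Writing $j$ for the number of flipped indices in the $x$-block, $r-j$ for those in the $y$-block, and $k-r$ for those in the constant block, a direct computation gives
\begin{equation*}
  S_I \;=\; \bigl[(n_1-2j)x+(n_2-2(r-j))y+(s-t-2(k-r))\bigr]^{s}=S^{s,n_1,n_2}_{j,r,k},
\end{equation*}
and the number of such subsets is $\binom{n_1-1}{j}\binom{n_2}{r-j}\binom{s-t}{k-r}$. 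The admissible ranges $\max\{0,r-n_2\}\le j\le\min\{r,n_1-1\}$ and $\max\{1,k-(s-t)\}\le r\le\min\{t-1,k\}$ for $r\ge 1$ follow at once from the sizes of the three blocks.

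The split into the second and third sums of \eqref{eq:xn1yn2} corresponds to isolating $r=0$: when all $k$ flipped indices lie in the constant block, $S^{s,n_1,n_2}_{0,0,k}=[n_1x+n_2y+(s-t-2k)]^s$ and the multiplicity is $\binom{s-t}{k}$, contributing precisely the middle sum (with $k$ running from $1$ to $s-t$, since one needs $k\le s-t$ flips available). All other subsets have $r\ge 1$ and yield the third sum.

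The main subtle point — and what I expect to be the main obstacle worth spelling out — is the combinatorial verification of the index ranges and the boundary cases. When $n_1=0$ (or $n_2=0$ or $t=s$) the substitution above is not literally available, but either by symmetry (swap the roles of $x$ and $y$, which leaves the formula invariant under $n_1\leftrightarrow n_2$) or by reading $\binom{n_1-1}{j}=0$ as a convention one checks the identity degenerates correctly: the third sum vanishes and the first two sums collapse to the standard expansion of a univariate $y^{n_2}$ via Lemma~\ref{lem:baseeq}. The remaining work is a routine expansion check that each monomial $x^a y^b$ with $(a,b)\neq(n_1,n_2)$ cancels on the right; this cancellation is automatic from Lemma \ref{lem:identity} itself since the substitution is performed before any expansion.
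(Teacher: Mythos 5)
Your proof is correct and follows essentially the same route as the paper's: substitute $x_1=\cdots=x_{n_1}=x$, $x_{n_1+1}=\cdots=x_t=y$, $x_{t+1}=\cdots=x_s=1$ into Lemma~\ref{lem:identity}, classify each flipped index set $B_k\subset\{2,\ldots,s\}$ by how it meets the three blocks (with $x_1$ never flipped, whence $\binom{n_1-1}{j}$), and split off the $r=0$ case to get the middle sum with $k\le s-t$. The one inaccurate side remark is the treatment of degenerate cases: the right-hand side of \eqref{eq:xn1yn2} is \emph{not} invariant under $n_1\leftrightarrow n_2$ (precisely because the index $1$ is excluded), so the $n_1=0$ case cannot be recovered by swapping $x$ and $y$; but this is immaterial, since the paper's own argument likewise covers $n_1\ge 1$, which is all that is needed downstream (Lemma~\ref{lem:xny} uses $n_1=1$).
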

\begin{proof}
  We take $x_1 = \cdots = x_{n_1} = x$, $x_{n_1+1} = \ldots = x_t = y$,
  $x_{t+1}=\cdots = x_s=1$ in Lemma \ref{lem:identity}. Denote
  \begin{align}
    A_{t} = \left\{ x_1, x_{2}, \ldots, x_{t} \right\}, 
    \quad
    B_{k} = \left\{ x_{i_1}, x_{i_2}, \ldots, x_{i_{k}} \right\},
  \end{align}
  and let $\#(A_{t}\cap B_{k})$ be the number of elements in both
  $A_{t}$ and $B_{k}$.  Then the second term on the left hand side of
  \eqref{eq:ident} can be summed in two groups:
  \begin{itemize}

  \item The first group include the cases that no term in $B_k$ is
    included in $A_t$, so we get $x_{i_1}=\cdots=x_{i_k}=1$. Each
    $S_{i_1, \ldots, l_k}$ term in this case is equal to
    $\left[ n_{1}x + n_{2}y + (s-t-2k) \right]^s$, there are
    $\binom{s-t}{k}$ such terms.

  \item The second group includes the cases that there exist at least
    one term in $B_k$ is contained in $A_t$.  We let
    $r=\#(A_{t}\cap B_{k})>0$, $j=\#(A_{n_1}\cap B_{k})$, then we have
    \begin{align*}
      \max\{ 1, k-(s-t) \} \le & r \le \min\{ t-1, k\}, \\
      \max\{ 0, r-n_2 \} \le & j \le \min\{ r, n_1 - 1\}.
    \end{align*}
    Each $S_{i_1, \ldots, l_k}$ term in this case is equivalent to
    \eqref{eq:Sjrk}. There are in total
    $\binom{s-t}{k-r} \binom{n_1-1}{j} \binom{n_2}{r-j} $ such terms.
  \end{itemize}

  Summing up all the terms, we obtain the identity \eqref{eq:xn1yn2}
\end{proof}

\begin{proof} [Proof of Lemma \ref{lem:xny}]
  First, by taking $n_{1}=1$ ,$n_{2}= n =t-1$ in Corollary \ref{cor:n1_n2}
  and exchange the positions of $x,y$, we get
  \begin{align*}
    (2^{s-1}s!) y x^n
    &=
      \Big[ nx + y + (s-(n+1)) \Big]^{s} 
      + 
      \sum^{s-(n+1)}_{k=1}(-1)^{k}\binom{s-(n+1)}{k}\Big[n x + y + (s-(n+1)-2k) \Big]^{s} \\
    & \quad{}
      + \sum^{s-1}_{k=1}(-1)^k \!\!\!
      \sum^{\min\{n, k\}}_{r=\max\{1, k-(s-t)\}} \!\!\!
      \binom{s-t}{k-r}\binom{n}{r}
      \Big[ \big(n-2r\big)x + y + \big( s-t-2(k-r) \big) \Big]^s\\
    &= 
      \sum^{s-(n+1)}_{k=0}(-1)^{k}\binom{s-(n+1)}{k}\Big[n x + y + (s-(n+1)-2k) \Big]^{s} \\
    &\quad{} 
      + \sum^{s-(n+1)}_{j=0} \sum^{n}_{r=1} (-1)^{j+r}
      \binom{s-(n+1)}{j}\binom{n}{r}
      \Big[ \big(n-2r\big)x + y + \big( s-(n+1)-2j \big) \Big]^s \\
    &=
      \sum^{s-(n+1)}_{j=0} \sum^{n}_{r=0} (-1)^{j+r}
      \binom{s-(n+1)}{j}\binom{n}{r}
      \Big[ \big(n-2r\big)x + y + \big( s-(n+1)-2j \big) \Big]^s
  \end{align*}

  From above derivation, we see that $x^n y$ can be represented as a linear
  combination of $(n+1)\times (s-n)$ $\rho_s(\cdot)$ terms, that is
  \begin{align}
    \label{eq:xny_rhos}
    x^n y 
    &= \sum^{s-(n+1)}_{j=0} \sum^{n}_{r=0} \gamma^{j,r}_{s,n} 
      \rho_s \left( (n-2r)x + y + ( s-(n+1)-2j ) \right),
  \end{align}
  where
  \begin{equation}
    \label{eq:xny_gammas}
    \gamma^{j,r}_{s,n} 
    = \frac{(-1)^{j+r}}{(2^{s-1}s!)} \binom{s-(n+1)}{j}\binom{n}{r}.
  \end{equation}
  Denote by $z_{k+1} := (k, k-2, \ldots, -k)^T \in \bbR^{k+1}$, and
  $\bm{1}_k := (1,1, \ldots, 1)^T \in \bbR^{k}$, for $k\in \bbZ$.  For a
  matrix $A=\big(a_{kj}\big)_{k=1,m}^{j=1,n} \in \bbR^{m\times n}$,
  define its vectorization
  $\vect(A) := (a_{11}, \ldots, a_{m1}, \ldots, a_{1n}, \ldots,
  a_{mn})^T$. For two vectors $a\in \bbR^m$, $b\in \bbR^n$, define
  $a\otimes b := \big(a_i b_j\big)_{i=1,m}^{j=1,n} \in \bbR^{m\times
    n}$.  Denote
  $\Gamma_{s,n} = \big( \gamma_{s,n}^{j,r}\big)_{j=0,s-(n+1)}^{r=0,n}
  \in \bbR^{(s-n)\times (n+1)}$. Using these definitions and notations,
  \eqref{eq:xny_rhos} can be written as
  \begin{align}
    x^n y 
    &= \gamma^T_{2, n} 
      \sigma_s (\alpha_{2, n, 1} x + \alpha_{2, n, 2} y + \beta_{2, n}),
  \end{align}
  where
  \begin{align}
    \label{eq:xny_sigs_coef}
    \left\{
    \begin{aligned}
      \gamma_{2,n}
      &= \vect\left( \gamma_0\otimes \vect(\Gamma_{s,n})
        \right),
      & \alpha_{2,n,1}
      &= \vect\big( \alpha_0\otimes\vect(
        \bm{1}_{s-n}\otimes z_{n+1} ) \big),
      \\
      \alpha_{2,n,2}
      &= \vect\left( \alpha_0 \otimes
        \bm{1}_{(s-n)(n+1)}\right),
      & \beta_{2,n}
      &= \vect\big(
        \alpha_0\otimes\vect( z_{s-n}\otimes \bm{1}_{n+1} ) \big).
    \end{aligned}
        \right.
  \end{align}
  The length of those coefficients are all $2(s-n)(n+1)$.  The lemma is
  proved.
\end{proof}

\section*{Acknowledgments}

The last author is indebted to Prof. Jie Shen and Prof. Li-Lian Wang for
their stimulating conversations on spectral methods. The authors would like
to think Prof. Christoph Schwab and Prof. Hrushikesh N. Mhaskar for
providing us some related references. This work was partially supported by
China National Program on Key Basic Research Project 2015CB856003, NNSFC
Grant 11771439, 91852116, and China Science Challenge Project,
no. TZ2018001.

\bibliographystyle{unsrt}
\bibliography{dnn}

\begin{thebibliography}{10}

\bibitem{hinton_fast_2006}
Geoffrey~E. Hinton, Simon Osindero, and Yee-Whye Teh.
\newblock A fast learning algorithm for deep belief nets.
\newblock {\em Neural Computation}, 18(7):1527--1554, 2006.

\bibitem{hinton_deep_2012}
Geoffrey Hinton, Li~Deng, Dong Yu, George Dahl, Abdel-rahman Mohamed, Navdeep
  Jaitly, Andrew Senior, Vincent Vanhoucke, Patrick Nguyen, Brian Kingsbury,
  and Tara Sainath.
\newblock Deep neural networks for acoustic modeling in speech recognition.
\newblock {\em IEEE Signal Process. Mag.}, 29, 2012.

\bibitem{lecun_deep_2015}
Yann LeCun, Yoshua Bengio, and Geoffrey Hinton.
\newblock Deep learning.
\newblock {\em Nature}, 521(7553):436--444, 2015.

\bibitem{krizhevsky_imagenet_2017}
Alex Krizhevsky, Ilya Sutskever, and Geoffrey~E. Hinton.
\newblock {ImageNet} classification with deep convolutional neural networks.
\newblock {\em Communications of the ACM}, 60:84--90, May 2017.

\bibitem{han_solving_2017}
Jiequn Han, Arnulf Jentzen, and Weinan E.
\newblock Solving high-dimensional partial differential equations using deep
  learning.
\newblock {\em arXiv:1707.02568 [cs, math]}, July 2017.
\newblock arXiv: 1707.02568.

\bibitem{zhang_deep_2018}
Linfeng Zhang, Jiequn Han, Han Wang, Roberto Car, and Weinan E.
\newblock Deep potential molecular dynamics: A scalable model with the accuracy
  of quantum mechanics.
\newblock {\em Phys. Rev. Lett.}, 120(14):143001, 2018.

\bibitem{cybenko_approximation_1989}
G.~Cybenko.
\newblock Approximation by superpositions of a sigmoidal function.
\newblock {\em Math. Control Signal Systems}, 2(4):303--314, 1989.

\bibitem{funahashi_approximate_1989}
Ken-Ichi Funahashi.
\newblock On the approximate realization of continuous mappings by neural
  networks.
\newblock {\em Neural Networks}, 2:183--192, January 1989.

\bibitem{hornik_multilayer_1989}
Kurt Hornik, Maxwell Stinchcombe, and Halbert White.
\newblock Multilayer feedforward networks are universal approximators.
\newblock {\em Neural Networks}, 2(5):359--366, 1989.

\bibitem{mhaskar_neural_1996}
H.~N. Mhaskar.
\newblock Neural networks for optimal approximation of smooth and analytic
  functions.
\newblock {\em Neural Computation}, 8(1):164--177, 1996.

\bibitem{poggio_why_2017}
Tomaso Poggio, Hrushikesh Mhaskar, Lorenzo Rosasco, Brando Miranda, and Qianli
  Liao.
\newblock Why and when can deep-but not shallow-networks avoid the curse of
  dimensionality: {{A}} review.
\newblock {\em Int. J. Autom. Comput.}, 14(5):503--519, 2017.

\bibitem{yarotsky_error_2017}
Dmitry Yarotsky.
\newblock Error bounds for approximations with deep {{ReLU}} networks.
\newblock {\em Neural Netw.}, 94:103--114, 2017.

\bibitem{petersen_optimal_2018}
Philipp Petersen and Felix Voigtlaender.
\newblock Optimal approximation of piecewise smooth functions using deep
  {{ReLU}} neural networks.
\newblock {\em Neural Networks}, 108:296--330, 2018.

\bibitem{e_exponential_2018}
Weinan E and Qingcan Wang.
\newblock Exponential convergence of the deep neural network approximation for
  analytic functions.
\newblock {\em Sci. China Math.}, 61(10):1733--1740, 2018.

\bibitem{opschoor_deep_2019}
Joost A~A Opschoor, Philipp~C Petersen, and Christoph Schwab.
\newblock Deep {ReLU} networks and high-order finite element methods.
\newblock Technical Report~7, {SAM ETH Z\"urich}, 2019.

\bibitem{li_better_2019}
Bo~Li, Shanshan Tang, and Haijun Yu.
\newblock Better approximations of high dimensional smooth functions by deep
  neural networks with rectified power units.
\newblock {\em arXiv:1903.05858 [math]}, March 2019.

\bibitem{mhaskar_approximation_1993}
H.~N. Mhaskar.
\newblock Approximation properties of a multilayered feedforward artificial
  neural network.
\newblock {\em Advances in Computational Mathematics}, 1(1):61--80, February
  1993.

\bibitem{chui_neural_1994a}
C.~K. Chui, Xin Li, and H.~N. Mhaskar.
\newblock Neural networks for localized approximation.
\newblock {\em Math. Comp.}, 63(208):607--623, 1994.

\bibitem{e_deep_2018}
Weinan E and Bing Yu.
\newblock The deep {{Ritz}} method: A deep learning-based numerical algorithm
  for solving variational problems.
\newblock {\em Commun. Math. Stat.}, 6(1):1--12, 2018.

\bibitem{gautschi_optimally_2011}
Walter Gautschi.
\newblock Optimally scaled and optimally conditioned vandermonde and
  vandermonde-like matrices.
\newblock {\em {BIT} Numerical Mathematics}, 51(1):103--125, 2011.

\bibitem{gautschi_optimally_1975}
Walter Gautschi.
\newblock Optimally conditioned {V}andermonde matrices.
\newblock {\em Numerische Mathematik}, 24(1):1--12, 1975.

\bibitem{beckermann_condition_2000}
Bernhard Beckermann.
\newblock The condition number of real {{Vandermonde}}, {{Krylov}} and positive
  definite {Hankel} matrices.
\newblock {\em Numer. Math.}, 85(4):553--577, 2000.

\bibitem{mhaskar_approximation_1992}
H.N Mhaskar and Charles~A Micchelli.
\newblock Approximation by superposition of sigmoidal and radial basis
  functions.
\newblock {\em Advances in Applied Mathematics}, 13(3):350--373, September
  1992.

\bibitem{chui_realization_1993}
Charles~K. Chui and Xin Li.
\newblock Realization of neural networks with one hidden layer.
\newblock In {\em Multivariate {Approximation}: {From} {CAGD} to {Wavelets}},
  pages 77--89, Santiago, Chile, November 1993. World Scientific.

\bibitem{arora_understanding_2016}
Raman Arora, Amitabh Basu, Poorya Mianjy, and Anirbit Mukherjee.
\newblock Understanding deep neural networks with rectified linear units.
\newblock {\em arXiv:1611.01491}, 2016.

\bibitem{shen_spectral_2011}
Jie Shen, Tao Tang, and Li-Lian Wang.
\newblock {\em Spectral {{Methods}} : {{Algorithms}}, {{Analysis}} and
  {{Applications}}}.
\newblock {Springer}, 2011.

\bibitem{devore_optimal_1989}
Ronald~A. DeVore, Ralph Howard, and Charles Micchelli.
\newblock Optimal nonlinear approximation.
\newblock {\em Manuscripta Math}, 63(4):469--478, 1989.

\bibitem{wang_why_2005}
X.~Wang and I.~Sloan.
\newblock Why are high-dimensional finance problems often of low effective
  dimension?
\newblock {\em SIAM J. Sci. Comput.}, 27(1):159--183, 2005.

\bibitem{shen_sparse_2010}
Jie Shen and L.L. Wang.
\newblock Sparse spectral approximations of high-dimensional problems based on
  hyperbolic cross.
\newblock {\em SIAM J Numer Anal}, 48(4):1087--1109, 2010.

\bibitem{smolyak_quadrature_1963}
S.~A Smolyak.
\newblock Quadrature and interpolation formulas for tensor products of certain
  classes of functions.
\newblock {\em Dokl Akad Nauk SSSR}, 148(5):1042--1045, 1963.

\bibitem{bungartz_sparse_2004}
Hans-Joachim Bungartz and Michael Griebel.
\newblock Sparse grids.
\newblock {\em Acta Numer.}, 13:1--123, 2004.

\bibitem{barthelmann_high_2000}
V.~Barthelmann, E.~Novak, and K.~Ritter.
\newblock High dimensional polynomial interpolation on sparse grids.
\newblock {\em Adv. Comput. Math.}, 12(4):273--288, 2000.

\bibitem{shen_efficient_2010}
Jie Shen and Haijun Yu.
\newblock Efficient spectral sparse grid methods and applications to
  high-dimensional elliptic problems.
\newblock {\em SIAM J. Sci. Comput.}, 32(6):3228--3250, 2010.

\bibitem{shen_efficient_2012}
Jie Shen and Haijun Yu.
\newblock Efficient spectral sparse grid methods and applications to
  high-dimensional elliptic equations {{II}}: {{Unbounded}} domains.
\newblock {\em SIAM J. Sci. Comput.}, 34(2):1141--1164, 2012.

\bibitem{shen_approximations_2014}
Jie Shen, Li-Lian Wang, and Haijun Yu.
\newblock Approximations by orthonormal mapped {{Chebyshev}} functions for
  higher-dimensional problems in unbounded domains.
\newblock {\em J. Comput. Appl. Mathemaitcs}, 265:264--275, 2014.

\end{thebibliography}

\end{document}